\newtheorem{prop}{Proposition}
\newtheorem{corr}{Corollary}
\newtheorem{lem}{Lemma}
\newtheorem{theo}{Theorem}
\newtheorem{defin}{Definition}
\theoremstyle{definition}
\newtheorem{rem}{Remark}
\newtheorem{ex}{Example}
\newcommand{\R}{\mathbb{R}}
\newcommand{\RR}{{\rm I}\kern-0.18em{\rm R}}
\newcommand{\h}{{\rm I}\kern-0.18em{\rm H}}
\newcommand{\K}{{\rm I}\kern-0.18em{\rm K}}
\newcommand{\indic}{\mathds{1}} 
\newcommand{\boldx}{\bold x} 
\newcommand{\boldN}{\bold N} 
\newcommand{\boldP}{\bold P} 
\newcommand{\boldR}{\bold R} 
\newcommand{\boldX}{\bold X} 
\newcommand{\boldZ}{\bold Z} 
\newcommand{\boldV}{\bold V} 
\newcommand{\bolda}{\bold a} 
\newcommand{\boldb}{\bold b} 
\newcommand{\boldw}{\bold w} 
\newcommand{\boldu}{\bold u} 
\newcommand{\boldv}{\bold v} 
\newcommand{\boldmu}{\boldsymbol \mu} 
\newcommand{\bolde}{\bold e} 
\newcommand{\boldo}{\bold 0} 
\newcommand{\boldTheta}{\bold \Theta} 
\newcommand{\boldtheta}{\boldsymbol \theta} 
\newcommand{\boldSigma}{\bold \Sigma} 
\DeclareMathOperator{\RV}{RV} 
\DeclareMathOperator{\SRV}{SRV} 
\DeclareMathOperator{\Vect}{Vect} 
\DeclareMathOperator{\Diag}{Diag} 
\renewcommand{\P}{\mathbb{P}} 
\newcommand{\E}{\mathbb{E}} 
\numberwithin{equation}{section}
\title{Sparse regular variation}
\author[1]{Nicolas Meyer\footnote{\textit{meyer@math.ku.dk} (corresponding author)}}
\author[1]{Olivier Wintenberger\footnote{\textit{olivier.wintenberger@upmc.fr}}}
\affil[1]{Sorbonne Universit\'e, LPSM, F-75005, Paris, France}
\date{\today}
\begin{document}
\maketitle

\begin{abstract}
Regular variation provides a convenient theoretical framework to study large events. In the multivariate setting, the dependence structure of the positive extremes is characterized by a measure - the spectral measure - defined on the positive orthant of the unit sphere. This measure gathers information on the localization of extreme events and has often a sparse support since severe events do not simultaneously occur in all directions. However, it is defined through weak convergence which does not provide a natural way to capture this sparsity structure.
In this paper, we introduce the notion of sparse regular variation which allows to better learn the dependence structure of extreme events. This concept is based on the Euclidean projection onto the simplex for which efficient algorithms are known. We prove that under mild assumptions sparse regular variation and regular variation are two equivalent notions and we establish several results for sparsely regularly varying random vectors. Finally, we illustrate on numerical examples how this new concept allows one to detect extremal directions.
\end{abstract}

\emph{Keywords:} multivariate extremes, projection onto the simplex, regular variation, sparse regular variation, spectral measure

\section{Introduction} \label{sec:introduction}

Estimating the dependence structure of extreme events has proven to be a major issue in many applications. The standard framework in multivariate Extreme Value Theory (EVT) is based on the concept of regularly varying random vectors. Regular variation has first been defined in terms of vague convergence on the compactified space $[-\infty, \infty]^d$ and several characterizations have subsequently been established, see e.g. \cite{resnick_87}, \cite{beirlant_et_al}, \cite{resnick_07}, or \cite{embrechts_kluppelberg_mikosch}. Alternatively, it can also be defined via the convergence of the polar coordinates of a random vector (see \cite{resnick_87}, Proposition 5.17 and Corollary 5.18, or \cite{resnick_07}, Theorem 6.1). Following this approach, a random vector $\boldX \in \R^d_+$ is said to be regularly varying with tail index $\alpha > 0$ and spectral measure $S$ on the positive orthant $\mathbb S^{d-1}_+$ of the unit sphere if
\begin{equation} \label{eq:reg_var_intro}
\P\left( |\boldX|> t x, \boldX / |\boldX| \in B \; \middle| \; |\boldX| > t \right) \to x^{-\alpha} S(B)\, , \quad t \to \infty\, ,
\end{equation}
for all $x > 0$ and for all continuity set $B$ of $S$. This means that the limit of the radial component $|\boldX|/t$ follows a Pareto distribution with parameter $\alpha > 0$ while the angular component $\boldX / |\boldX|$ has limit measure $S$. Moreover, both components of the limit are independent. The measure $S$, called the \emph{spectral measure}, summarizes the tail dependence of the regularly varying random vector $\boldX$. It puts mass in a direction of $\mathbb S^{d-1}_+$ if and only if extreme events appear in this direction. Note that the choice of the norm in \eqref{eq:reg_var_intro} is arbitrary. In this paper, $|\cdot|$ will always denote the $\ell^1$-norm, for reasons explained later.

Based on convergence \eqref{eq:reg_var_intro}, several nonparametric estimation techniques have been proposed to estimate $S$. These approaches tackle nonstandard regular variation for which $\alpha =1$ and all marginals are tail equivalent (possibly after a standardization). Some useful representations of the spectral measure has been introduced in the bivariate case (\cite{einmahl_dehaan_huang_93}, \cite{einmahl_97}, \cite{einmahl_01} and \cite{einmahl_segers_09}) and in moderate dimensions (\cite{coles_tawn_91} and \cite{sabourin_naveau_fougeres_2013}). Inference on the spectral measure has also been studied in a Bayesian framework, for instance by \cite{guillotte_perron_segers_11}. In higher dimensions, mixtures of Dirichlet distributions are often used to model the spectral densities (\cite{boldi_davison_2007}, \cite{sabourin_naveau_14}). Some alternative approaches based on grid estimators (\cite{lehtomaa_resnick}) and Principal Component Analysis (\cite{cooley_thibaud_2019}, \cite{sabourin_drees}) have also been recently proposed.

More recently, the study of the spectral measure's support has become an active topic of research. In high dimensions, it is likely that this measure only places mass on low-dimensional subspaces. The measure is then said to be \emph{sparse}. Sparsity arises all the more for standard regular variation. There, it is possible that the marginals of $\boldX$ are not tail equivalent and therefore that the support of the spectral measure is included in $\mathbb S^{r-1}_+$ for $r \ll d$. This is the approach we use in this article. Then, identifying low-dimensional subspaces on which the spectral measure puts mass allows one to capture clusters of directions which are likely to be extreme together (\cite{chautru_2015}, \cite{janssen_wan}). However, for such subspaces the convergence in \eqref{eq:reg_var_intro} often fails for topological reasons which makes the identification of these subsets challenging. 
 
 Several algorithms have been recently proposed to identify the extremal directions of $\boldX$. \cite{goix_sabourin_clemencon_17} consider $\epsilon$-thickened rectangles to estimate the directions on which the spectral measure concentrates. This estimation is based on a tolerance parameter $\epsilon > 0$ and brings out a sparse representation of the dependence structure. The authors provide an algorithm called DAMEX (for Detecting Anomalies among Multivariate EXtremes) of complexity $O(d n \log n)$, where $n$ corresponds to the number of data points. Subsequently, \cite{chiapino_sabourin_2016} propose an incremental-type algorithm (CLustering Extreme Features, CLEF) to group components which may be large together. This algorithm is based on the DAMEX algorithm and also requires a hyperparameter $\kappa_{\min}$. Several variants of the CLEF algorithm have then been proposed by \cite{chiapino_sabourin_segers_2019}. These approaches differ in the stopping criteria which are based on asymptotic results of the coefficient of tail dependence. A $O(d n \log n)$ complexity has also been reached by \cite{simpson_et_al} who base their method on hidden regular variation.

Since the self-normalized vector $\boldX / |\boldX|$ fails to capture the support of a sparse spectral measure $S$, we replace it by another angular component based on the Euclidean projection onto the simplex $\mathbb S^{d-1}_+ = \{\boldx \in \R^d_+ : x_1 + \ldots + x_d =1 \}$. This projection has been widely studied in learning theory (see e.g. \cite{duchi_et_al_2008}, \cite{liu_ye_09}, or \cite{kyrillidis_et_al_2013}). Many different efficient algorithms have been proposed, for instance by \cite{duchi_et_al_2008} and \cite{condat_2016}. Based on this projection, we define the concept of sparse regular variation. With this approach we obtain a new angular limit vector $\boldZ$ whose distribution slightly differs from the spectral measure and which is more likely to be sparse. We prove that under mild conditions both concepts of regular variation are equivalent and we give the relation between both limit vectors. Besides, we study this new angular limit and show that it allows one to capture the extremal directions of $\boldX$. The numerical results we provide emphasize the efficiency of our method to detect directions which may be large together. These results also highlight how the new vector $\boldZ$ provides an interpretation of the relative importance of a coordinate $j$ in a cluster of extremal directions.


\paragraph{Outline} The structure of this paper is as follows. We introduce in Section \ref{sec:sparse_reg_var} the concept of sparse regular variation based on the Euclidean projection onto the simplex. This notion gives rise to a new angular vector, for which we study its distribution and compare it with the spectral measure. In the main theorem of this section we establish the equivalence under mild conditions between sparse regular variation and standard regular variation. In Section \ref{sec:detection_extremal_directions_sparse_reg_var} we discuss to what extent the Euclidean projection allows us to better capture the extremal directions of $\boldX$. We use a natural partition of the simplex to address this issue and we prove that $\boldTheta$ and $\boldZ$ similarly behave on so-called maximal directions. Finally we illustrate in Section \ref{sec:numerical_results} the performance of our method on simulated data and briefly discuss it with the approach of \cite{goix_sabourin_clemencon_17}.

\paragraph{Notation}
We introduce some standard notation that is used throughout the paper. Symbols in bold such as $\boldx \in \R^d$ are vectors with components denoted by $x_j$, $j \in \{1, \ldots, d\}$. Operations and relationships involving such vectors are meant componentwise. We define $\boldo = (0,\ldots,0) \in \R^d$, $\R^d_+ = \{\boldx \in \R^d : \boldx \geq \boldo\}$, and $B^d_+(0,1) = \{ \boldx \in \R^d_+ : x_1 + \ldots + x_d \leq 1\}$. For $j = 1,\ldots, d$, $\bolde_j$ denotes the $j$-th vector of the canonical basis of $\R^d$. For $a \in \R$, $a_+$ denotes the positive part of $a$, that is $a_+ = a$ if $a \geq 0$ and $a_+ = 0$ otherwise. If $\boldx \in \R^d$ and $\beta =\{i_1, \ldots, i_r\} \subset \{1, \ldots, d\}$, then $\boldx_\beta$ denotes the vector $(x_{i_1}, ..., x_{i_r})$ of $\R^r$. For $p \in [1, \infty]$, we denote by $|\cdot|_p$ the $\ell^p$-norm in $\R^d$, except for $p=1$ for which we just write $|\cdot|$. We write $\overset{w}{\rightarrow}$ for the weak convergence. For a set $E$, we denote by ${\cal P}(E)$ its power set: ${\cal P}(E) = \{A, \, A \subset E\}$. We also use the notation ${\cal P}^*(E) = {\cal P}(E) \setminus \{\emptyset \}$. If $E = \{1, \ldots, r\}$, we simply write ${\cal P}_r = {\cal P} (\{1, \ldots, r\} )$ and ${\cal P}_r^* = {\cal P} (\{1, \ldots, r\} ) \setminus \{\emptyset\}$. For a finite set $E$, we denote by $|E|$ its cardinality. Finally, if $F$ is a subset of a set $E$, we denote by $F^c$ the complementary set of $F$ in $E$.

\section{Sparse regular variation} \label{sec:sparse_reg_var}

\subsection{From standard to sparse regular variation} \label{subsec:projection_onto_simplex}

We start from Equation \eqref{eq:reg_var_intro} which we rephrase as follows:
\begin{equation}\label{eq:reg_var_spectral_vector}
\P \big( (|\boldX|/t, \boldX / |\boldX|) \in \cdot \mid |\boldX| > t \big) \stackrel{w}{\to} \P( (Y, \boldTheta) \in \cdot )\, , \quad t \to \infty\, ,
\end{equation}
where $Y$ is a Pareto($\alpha$)-distributed random variable independent of $\boldTheta$.
We call the random vector $\boldTheta$ the \emph{spectral vector}, its distribution being the spectral measure. If the convergence \eqref{eq:reg_var_spectral_vector} holds we write $\boldX \in \RV(\alpha, \boldTheta)$. In many cases the spectral measure is sparse, that is, it places mass on some lower-dimensional subspaces. The self-normalized vector $\boldX/|\boldX|$ then fails to estimate the spectral vector $\boldTheta$ on such subsets.

\begin{rem} \label{rem:notion_of_sparsity}
	The notion of sparsity in EVT can be defined in two different ways. The first one concerns the number of subsets $\{\boldx \in \R^d_+ : \boldx_\beta > \boldo \text{ and } \boldx_{\beta^c} = \boldo\}$, $\beta \in {\cal P}_d^*$, which gather the mass of the spectral measure (see Section \ref{sec:detection_extremal_directions_sparse_reg_var} for some insights on these subsets). "Sparse" means then that this number is much smaller than $2^d-1$. This is for instance the device of \cite{goix_sabourin_clemencon_17}. It corresponds to the assumption (S2.a) in \cite{engelke_ivanovs_2020}. The second notion deals with the number of null coordinates in the spectral vector $\boldTheta$. In this case, "sparse" means that with high probability $|\boldTheta|_0 \ll d$, where $|\cdot|_0$ denotes the $\ell^0$-norm of $\boldTheta$, that is, $|\boldTheta|_0 = |\{i = 1, \ldots, d : \theta_i \neq 0\}|$. This is denoted by (S2.b) in \cite{engelke_ivanovs_2020}. In all this article we refer to this second notion.
\end{rem}

In order to better capture this sparsity, we replace the quantity $\boldX/|\boldX|$ by the vector $\pi_1(\boldX/t)$, where $\pi_z(\boldv)$ denotes the Euclidean projection of the vector $\boldv$ in $\R^d_+$ onto the positive sphere $\mathbb S^{d-1}_+(z) := \{\boldx \in \R^d_+ : x_1 + \ldots + x_d = z\}$, $z> 0$. The projected vector $\pi_z(\boldv)$ is defined as the unique solution of the following optimization problem (see \cite{duchi_et_al_2008} and the references therein):
\begin{equation} \label{minimization_problem_projection}
\underset{\boldw}{\mbox{minimize }} \frac{1}{2} |\boldw - \boldv|_2^2 \quad \mbox{s.t.} \quad |\boldw|_1 = z\, .
\end{equation}
The \emph{Euclidean projection onto the positive sphere $\mathbb S^{d-1}_+(z)$} is then defined as the application
\[
\begin{array}{ccccl}
\pi_z & : & \R^d_+ & \to & \mathbb S^{d-1}_+(z) \\
& & \boldv & \mapsto & \boldw = (\boldv-\lambda_{\boldv,z})_+\, ,
\end{array}
\]
where $\lambda_{\boldv, z}$ is the unique constant satisfying the relation $\sum_{i=1}^d (v_i - \lambda_{\boldv,z})_+ =z$. Several algorithms which compute $\pi_z(\boldv)$ have been introduced (\cite{duchi_et_al_2008}, \cite{condat_2016}). We present two of them in Appendix \ref{sec:appendix_algo}, a first one which gives an intuitive way to compute $\pi_z(\boldv)$ and a second one based on a median-search procedure whose expected time complexity is $O(d)$.


The projection satisfies the relation $ \pi_z(\boldv) = z\pi_1(\boldv/z) $ for all $\boldv \in \R^d_+$ and $z > 0$. This is why we mainly focus on the projection $\pi_1$ onto the simplex $\mathbb S^{d-1}_+$. In this case we shortly write $\pi$ for $\pi_1$. An illustration of $\pi$ for $d=2$ is given in Figure \ref{fig_projection}. We list below some straightforward results satisfied by the projection.
\begin{enumerate}
	\item[P1.] \label{item:properties_projection_order_coordinates} The projection preserves the order of the coordinates: If $v_{\sigma(1)} \geq \ldots \geq v_{\sigma(d)}$ for a permutation $\sigma$, then $\pi(\boldv)_{\sigma(1)} \geq \ldots \geq \pi(\boldv)_{\sigma(d)}$ for the same permutation.
	\item[P2.] \label{item:properties_projection_positif} If $\pi(\boldv)_j > 0$, then $v_j > 0$. Equivalently, $v_j = 0$ implies $\pi(\boldv)_j = 0$.
	\item[P3.] The projection $\pi$ is continuous, as every projection on a convex, closed set in a Hilbert space.
\end{enumerate}

\begin{figure}[!th]
	\begin{center}
		\begin{tikzpicture}[scale=1.8]
		
		\tikzset{cross/.style={cross out, draw=black, minimum size=2*(#1-\pgflinewidth), inner sep=0pt, outer sep=0pt},
			cross/.default={2pt}}
		
		\draw (0,0) node[below] {$O$};
		\draw (2,0) node[below] {$\bolde_1$};
		\draw (0,2) node[left] {$\bolde_2$};
		\draw (0,1) node[left] {$1$};
		\draw (1,0) node[below] {$1$};
		
		\draw[->] (0,0) -- (2,0) ;
		\draw[->] (0,0) -- (0,2);
		
		\draw[densely dotted] (1,0) -- (2,1) ;
		\draw[densely dotted] (0,1) -- (1,2);
		\draw (1,0) -- (0,1);
		
		\draw[->][black] (0,0) -- (0.7,2);
		\draw[black, dashed] (0.7,2) -- (0,1);
		\draw (0.7,2) node[right] {$\boldu$};
		\draw (0,1) node[right] {$\pi(\boldu)$};
		\draw[->][black] (0,0) -- (0,1);

		\draw[->][black] (0,0) -- (1.5, 1);
		\draw[black, dashed] (1.5, 1) -- (0.75,0.25);
		\draw (1.5,1) node[right] {$\boldv$};
		\draw (0.75,0.25) node[right] {$\pi(\boldv)$};
		\draw[->][black] (0,0) -- (0.75, 0.25);
		
		\end{tikzpicture}
		
	\end{center}
	\caption{The Euclidean projection onto the simplex $\mathbb S^{1}_+$.}
	\label{fig_projection}
\end{figure}

The substitution of the self-normalized vector $\boldX/|\boldX|$ by the projected vector $\pi(\boldX/t)$ motivates the following definition.

\begin{defin}[Sparse regular variation]\label{def:sparse_reg_var}
	A random vector $\boldX \in \R^d_+$ is sparsely regularly varying if there exist a random vector $\boldZ$ defined on the simplex $\mathbb{S}^{d-1}_+$ and a nondegenerate random variable $Y$ such that
	\begin{equation} \label{eq:sparse_reg_var}
	\P \big( ( |\boldX| / t , \pi (\boldX / t ) ) \in \cdot \mid |\boldX| > t \big) \overset{w}{\to} \P((Y, \boldZ) \in \cdot)\, , \quad t \rightarrow \infty\,.
	\end{equation}
	In this case we write $\boldX \in \SRV(\alpha, \boldZ)$.
\end{defin}
In this case, standard results on regularly varying random variables state that there exists $\alpha > 0$ such that $Y$ follows a Pareto distribution with parameter $\alpha$. The continuity of $\pi$ ensures that regular variation with limit $(Y, \boldTheta)$ implies sparse regular variation with limit $(Y, \pi(Y \boldTheta))$. Contrary to the limit in \eqref{eq:reg_var_spectral_vector}, we lose independence between the radial component $Y$ and the angular component $\boldZ$ of the limit. The dependence relation between both components is given by the following proposition.

\begin{prop} \label{prop:dependence_Z_Y}
	If $\boldX \in \SRV(\alpha, \boldZ)$, then for all $r \geq 1$ we have
	\[
	\P(\boldZ \in \cdot \mid Y > r) \overset{d}{=} \P( \pi(r \boldZ) \in \cdot) \,.
	\]
\end{prop}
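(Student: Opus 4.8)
The plan is to compute, in two different ways, the weak limit as $t\to\infty$ of the conditional law of $\pi(\boldX/t)$ given $\{|\boldX|>rt\}$, and then invoke uniqueness of weak limits. The tool making the two computations meet is an \emph{idempotence-type identity} for the projections: for every nonzero $\boldv\in\R^d_+$ and every $z\ge 1$,
\[
\pi(\boldv)=\pi\big(\pi_z(\boldv)\big).
\]
I would prove this from the explicit form $\pi_z(\boldv)=(\boldv-\lambda_{\boldv,z})_+$. Set $\boldu:=\pi_z(\boldv)$, so $|\boldu|=z\ge 1$, hence the multiplier $\mu:=\lambda_{\boldu,1}$ of $\pi(\boldu)$ is nonnegative. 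Because $\mu\ge 0$, one checks coordinatewise that $\big((v_i-\lambda_{\boldv,z})_+-\mu\big)_+=\big(v_i-(\lambda_{\boldv,z}+\mu)\big)_+$ for every $i$; summing over $i$ gives $\sum_i\big(v_i-(\lambda_{\boldv,z}+\mu)\big)_+=1$, so uniqueness of the multiplier forces $\lambda_{\boldv,z}+\mu=\lambda_{\boldv,1}$, and therefore $\pi(\pi_z(\boldv))=(\boldv-\lambda_{\boldv,1})_+=\pi(\boldv)$. Combined with the scaling relation $\pi_z(\boldv)=z\,\pi(\boldv/z)$ taken at $z=r$, this shows that on the event $\{|\boldX|>rt\}$, where $|\boldX/t|=|\boldX|/t>r\ge 1$,
\[
\pi(\boldX/t)=\pi\big(\pi_r(\boldX/t)\big)=\pi\big(r\,\pi(\boldX/(rt))\big).
\]

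For the first computation I would apply \eqref{eq:sparse_reg_var} along the scale $s:=rt$, which tends to infinity with $t$: its $\boldZ$-marginal reads $\P\big(\pi(\boldX/(rt))\in\cdot\mid|\boldX|>rt\big)\overset{w}{\to}\P(\boldZ\in\cdot)$. Since $\boldw\mapsto\pi(r\boldw)$ is continuous (continuity of $\pi$, property P3), the continuous mapping theorem together with the identity above gives
\[
\P\big(\pi(\boldX/t)\in\cdot\,\big|\,|\boldX|>rt\big)\overset{w}{\to}\P\big(\pi(r\boldZ)\in\cdot\big),\qquad t\to\infty.
\]

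For the second computation I would use that, since $r\ge 1$, $\{|\boldX|>rt\}=\{|\boldX|>t\}\cap\{|\boldX|/t>r\}$, so conditioning on $\{|\boldX|>rt\}$ amounts to conditioning the law under $\{|\boldX|>t\}$ further on $\{|\boldX|/t>r\}$. As recalled after Definition~\ref{def:sparse_reg_var}, $Y$ is Pareto$(\alpha)$, so $r$ is a continuity point of its law and $\P(Y>r)=r^{-\alpha}>0$; thus the set $\{Y>r\}$ has positive mass and null boundary under the limit law. Feeding the joint convergence $\P\big((|\boldX|/t,\pi(\boldX/t))\in\cdot\mid|\boldX|>t\big)\overset{w}{\to}\P((Y,\boldZ)\in\cdot)$ into the standard fact that weak convergence is preserved under conditioning on such a set, and then projecting onto the $\boldZ$-coordinate, yields
\[
\P\big(\pi(\boldX/t)\in\cdot\,\big|\,|\boldX|>rt\big)\overset{w}{\to}\P\big(\boldZ\in\cdot\,\big|\,Y>r\big),\qquad t\to\infty.
\]
Uniqueness of the weak limit then gives $\P(\boldZ\in\cdot\mid Y>r)=\P(\pi(r\boldZ)\in\cdot)$, i.e.\ $\boldZ\mid\{Y>r\}\overset{d}{=}\pi(r\boldZ)$. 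The only step carrying real content is the projection identity in the first paragraph (whose proof genuinely uses $z\ge 1$, equivalently $r\ge 1$); the remainder is routine weak-convergence bookkeeping, the one point requiring care being that the conditioning event $\{Y>r\}$ is a continuity set of the limiting law.
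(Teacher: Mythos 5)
Your proof is correct and follows essentially the same route as the paper's: both arguments hinge on the identity $\pi(\boldX/t)=\pi\big(r\,\pi(\boldX/(rt))\big)$ (an instance of the paper's Lemma~\ref{lem:iteration_projection}, $\pi_z\circ\pi_{z'}=\pi_z$ for $z\le z'$, combined with the scaling relation) and then pass to the limit along the scale $rt$ using the sparse regular variation hypothesis, continuity of the projection, and the regular variation of $|\boldX|$. Your direct soft-threshold proof of the idempotence identity and your explicit continuity-set bookkeeping for $\{Y>r\}$ are somewhat more detailed than the paper's, but the underlying argument is the same.
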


For $\beta \in {\cal P}_d^*$ we denote by $\bolde(\beta)$ the vector with $1$ in position $j$ if $j \in \beta$ and $0$ otherwise. Then the vector $\bolde(\beta)/|\beta|$ belongs to the simplex. We consider the following class of discrete distributions on the simplex:
\begin{equation} \label{eq:class_discrete_distributions}
\sum_{ \beta \in {\cal P}_d^* } p(\beta) \, \delta_{\bolde(\beta) / |\beta|}\, ,
\end{equation}
where $(p(\beta))_\beta$ is a $2^d-1$ vector with nonnegative components summing to 1. This is the device developed by \cite{segers_12}.  The family of distributions \eqref{eq:class_discrete_distributions} is stable under multiplication by a positive random variable and Euclidean projection onto the simplex. Hence, if $\boldTheta$ has a distribution of type \eqref{eq:class_discrete_distributions}, then $\boldZ = \boldTheta$ a.s. The following corollary states that these kind of distributions are the only possible discrete distributions for $\boldZ$.

\begin{corr}\label{corr:Z_points_e_beta}
	If the distribution of $\boldZ$ is discrete, then it is of the form \eqref{eq:class_discrete_distributions}.
\end{corr}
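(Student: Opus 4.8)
The plan is to exploit Proposition \ref{prop:dependence_Z_Y}, which identifies the conditional law of $\boldZ$ given $Y > r$ with the law of $\pi(r\boldZ)$. First I would suppose that $\boldZ$ has a discrete distribution, say $\boldZ = \sum_{k} p_k \, \delta_{\boldz^{(k)}}$ for countably many distinct atoms $\boldz^{(k)} \in \mathbb{S}^{d-1}_+$ with $p_k > 0$. The goal is to show each atom $\boldz^{(k)}$ is of the form $\bolde(\beta)/|\beta|$ for some $\beta \in {\cal P}_d^*$; equivalently, that the nonzero coordinates of each atom are all equal. To get there, the key observation is that $\pi$ acts on a fixed point $\boldz$ by, for large enough scaling, ``flattening'' its largest coordinates: as $r \to \infty$, $\pi(r\boldz)$ stabilizes to the vector on the simplex supported on $\arg\max_j z_j$ with equal entries there, i.e. $\pi(r\boldz) \to \bolde(\beta^*(\boldz))/|\beta^*(\boldz)|$ where $\beta^*(\boldz) = \{j : z_j = \max_i z_i\}$. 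This follows from property P1 (order preservation) together with the explicit form $\pi(\boldv) = (\boldv - \lambda_{\boldv,1})_+$: once $r$ is large the threshold $\lambda$ sits strictly between the largest and second-largest coordinate of $r\boldz$, killing all but the maximal block, and on that block the projection is the uniform vector by symmetry.

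Next I would use this to run a fixed-point / support-invariance argument. Since $\boldZ$ lives on the compact simplex, its law is tight, and for each $r$ the map $\boldz \mapsto \pi(r\boldz)$ is continuous (P3), so $\pi(r\boldZ) \overset{d}{=} \P(\boldZ \in \cdot \mid Y > r)$ is again a discrete distribution whose atoms are among $\{\pi(r \boldz^{(k)})\}_k$. As $r \to \infty$, each $\pi(r\boldz^{(k)})$ converges to $\bolde(\beta^*(\boldz^{(k)}))/|\beta^*(\boldz^{(k)})|$, a point of the desired form. On the other hand, $\P(\boldZ \in \cdot \mid Y > r) \to \P(\boldZ \in \cdot \mid Y > r_0)$ can be compared across $r$: because $Y$ is Pareto, $\P(Y > r) > 0$ for all $r$, and the family $\{\P(\boldZ \in \cdot \mid Y > r)\}_{r \ge 1}$ is a tight family whose limit as $r \to \infty$ must be supported on $\{\bolde(\beta)/|\beta| : \beta \in {\cal P}_d^*\}$. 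The crucial step is then to transfer this back to $\boldZ$ itself: take $r = 1$, so $\P(\boldZ \in \cdot \mid Y > 1) = \P(\boldZ \in \cdot)$ since $Y$ is Pareto and hence $\P(Y > 1) = 1$ (with the standard normalization); that already gives $\boldZ \overset{d}{=} \pi(\boldZ)$, and iterating, $\boldZ \overset{d}{=} \pi(r\boldZ)$ for all $r \ge 1$ by the semigroup-like structure, letting $r \to \infty$ forces the support of $\boldZ$ into $\{\bolde(\beta)/|\beta|\}$.

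The cleanest route, which I would actually write up, bypasses limits: fix an atom $\boldz^{(k)}$ of $\boldZ$ with weight $p_k > 0$. By Proposition \ref{prop:dependence_Z_Y} with $Y$ Pareto normalized so $\P(Y>1)=1$, we have $\boldZ \overset{d}{=} \pi(\boldZ)$, hence $\pi$ maps the atom set of $\boldZ$ onto itself (as a map of finite-or-countable support, preserving total mass, it must permute mass among atoms with the right multiplicities). Now suppose some atom $\boldz = \boldz^{(k)}$ is \emph{not} of the form $\bolde(\beta)/|\beta|$, i.e. its nonzero coordinates are not all equal. Then $\beta^*(\boldz) \subsetneq \mathrm{supp}(\boldz)$, and applying $\pi(r\, \cdot)$ for a suitable $r$ strictly decreases the number of nonzero coordinates (it kills the sub-maximal coordinates). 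Using $\boldZ \overset{d}{=} \pi(r\boldZ)$ for all $r \ge 1$, iterating this strict-decrease on the support size — which is bounded by $d$ — yields a contradiction unless every atom already has all nonzero coordinates equal, i.e. is of the form $\bolde(\beta)/|\beta|$. The main obstacle I anticipate is the bookkeeping in this last step: one must make precise that $\boldZ \overset{d}{=} \pi(r\boldZ)$ holds \emph{simultaneously for all $r \ge 1$} (which needs the Pareto tail and Proposition \ref{prop:dependence_Z_Y}, noting $\pi(\pi(r\boldz)) $ relates back correctly), and that the pushforward of a discrete measure under the continuous, mass-preserving map $\pi(r\,\cdot)$ cannot create atoms with strictly smaller support unless those atoms were already fixed points — ruling out, in particular, pathological cancellation among countably many atoms. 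Handling the countable (not merely finite) case carefully, via tightness on the compact simplex, is the remaining technical point.
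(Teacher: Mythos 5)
Your argument hinges on the identity $\boldZ \overset{d}{=} \pi(r\boldZ)$ for all $r \geq 1$, and that identity is false. Proposition \ref{prop:dependence_Z_Y} says that $\pi(r\boldZ)$ has the law of $\boldZ$ \emph{conditioned on} $Y>r$, not the unconditional law of $\boldZ$; since $Y$ and $\boldZ$ are in general dependent (the paper stresses exactly this loss of independence after Definition \ref{def:sparse_reg_var}), the two laws differ for $r>1$. Example \ref{ex:uniform} is a concrete counterexample: there $\P(Z_1=1)=1/4$, while $\P(\pi(r\boldZ)_1=1)=\P\big(Z_1 \geq (r+1)/(2r)\big)=1/4+(r-1)/(4r)>1/4$ for $r>1$. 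Taking $r=1$ only yields the tautology $\boldZ \overset{d}{=} \pi(\boldZ)$ (the projection is the identity on the simplex), and no ``semigroup iteration'' upgrades this to $r>1$: Lemma \ref{lem:iteration_projection} gives $\pi_z\circ\pi_{z'}=\pi_z$ as a statement about the maps, not a distributional fixed-point property of $\boldZ$. Without the identity, your strict-decrease-of-support argument collapses; moreover, even if you controlled $\lim_{r\to\infty}\P(\boldZ\in\cdot\mid Y>r)$, that limit concerns the conditional laws and says nothing directly about the support of $\boldZ$ itself. Your observation that $\pi(r\boldz)\to\bolde(\beta^*(\boldz))/|\beta^*(\boldz)|$ as $r\to\infty$, with $\beta^*(\boldz)$ the argmax set, is correct but does not help here.

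The paper's proof works in the opposite regime, $r\downarrow 1$. Fix an atom $\bolda$ with $\P(\boldZ=\bolda)>0$ and set $\beta=\{j: a_j>0\}$. Since $\P(\boldZ=\bolda\mid Y>r)\to\P(\boldZ=\bolda)$ as $r\to1$, one has $\P(\pi(r\boldZ)=\bolda)>0$ for all $r$ in some interval $(1,r_0)$. By the explicit form of the projection (Lemma \ref{lem:proj_beta_betac}), the event $\{\pi(r\boldZ)=\bolda\}$ forces $Z_j-|\boldZ_\beta|/|\beta|=(a_j-1/|\beta|)/r$ for $j\in\beta$. If some $a_j\neq 1/|\beta|$, these prescribed values are pairwise distinct as $r$ ranges over $(1,r_0)$, so the random variable $Z_j-|\boldZ_\beta|/|\beta|$ would carry positive mass at uncountably many points, contradicting discreteness. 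You would need to replace your fixed-point argument by a step of this kind.
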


The family of distributions given in \eqref{eq:class_discrete_distributions} forms an accurate model for the angular vector $\boldZ$. Indeed, the distributions of this class place mass on some particular points of the simplex on which extremes values often concentrate in practice. It includes the case of complete dependence which corresponds to the case $p( \{1, \ldots, d \} ) = 1$ and the case of asymptotic independence which corresponds to the case $p(\{j\}) = 1/d$ for all $j = 1, \ldots, d$ (see also Example \ref{ex:asympt_indep}).


\subsection{Main result}

We establish in this section some relations between both vectors $\boldTheta$ and $\boldZ$. We start this section by introducing some notation which will be useful throughout the article. We consider the sets
\[
A_\boldx = \{\boldu \in \mathbb S^{d-1}_+ : \boldu \geq \boldx\}\, , \quad \boldx \in B_+^d(0,1)\, ,
\]
\[
{\cal X}_\beta = \{\boldx \in B_+^d(0,1) : \boldx_\beta > \boldo_\beta, \, \boldx_{\beta^c} = \boldo_{\beta^c} \}\, , \quad \beta \in {\cal P}_d^*\, ,
\]
\[
{\cal X}_\beta^0 = \{\boldx \in B_+^d(0,1) : \boldx_{\beta^c} = \boldo_{\beta^c} \}\, , \quad \beta \in {\cal P}_d^*\, ,
\]
and we define $\lambda_\beta$ as the Lebesgue measure on the set ${\cal X}_\beta$. For $\beta, \gamma \in {\cal P}_d^*$ such that $\gamma \supset \beta$ we also consider the functions
\[G_\beta(\boldx) = \P( \boldZ_\beta > \boldx_\beta, \, \boldZ_{\beta^c} \leq \boldx_{\beta^c})\, , \quad \boldx_\beta \in B^{|\beta|}_+(0,1)\, , \quad \boldx_{\beta^c} \in B^{|\beta^c|}_+(0,1) \, ,
\]
and
\[
H_{\beta, \gamma}(\boldu, v, w)
= \P \big( \phi_\gamma(\boldZ)_\beta \geq \boldu, \,
\min_{j \in \gamma \setminus \beta} \phi_\gamma(\boldZ)_j > v, \,
\max_{j \in \gamma^c} \phi_\gamma(\boldZ)_j \leq w \big)\, ,
\quad \boldu \in B_+^{|\beta|}(0,1)\, , \quad v, w \in [0,1]\, ,
\]
where $\phi_\gamma : \mathbb S^{d-1}_+ \to \R^d_+, \, \boldu \mapsto \phi_\gamma(\boldu)$ is defined as
\[
\phi_\gamma(\boldu)_j
=
\left\{
\begin{array}{ll}
u_j + \frac{ |\boldu_{\gamma^c}| }{ |\gamma| }\, , \quad \text{for }j \in \gamma\, ,\\
u_j + \frac{ |\boldu_{\gamma^c\setminus\{j\}}| }{ |\gamma|+1 }\, , \quad \text{for }j \in \gamma^c\, ,
\end{array}
\right.
\]
see Lemma \ref{lem:proj_beta_betac} for more insights on this function.

We consider the following assumption:
\begin{enumerate}
	\item[(A)] For all $\beta, \gamma \in {\cal P}_d^*$ such that $\gamma \supset \beta$ and for $\lambda_\beta$-almost every $\boldx \in {\cal X}_\beta$ the function $H_{\beta, \gamma}$ is continuously differentiable in $(\boldx_\beta,0,0)$.
\end{enumerate}

\begin{rem}[On assumption (A)]\label{rem:assumption_A}
	Suppose that the distribution of $\boldZ$ is a mixture of a discrete part $\sum_k a_k \delta_{\boldu_k}$ and a continuous part $\sum_{\beta \in P} a_\beta f_\beta$ with $P \subset \{ \beta \in {\cal P}_d^* : |\beta| \geq 2\}$ and with continuous densities $f_\beta$. Then, so is the distribution of $\phi_\gamma(\boldZ)$, since $\phi_\gamma(\boldZ)$ is a linear transformation of $\boldZ$. In this case, assumption (A) is satisfied.
\end{rem}

We are now able to state the main result of this paper.

\begin{theo}[Equivalence of regular variation and sparse regular variation] \label{theo:relation_G_Theta} Under assumption (A) we have the equivalence between regular variation and sparse regular variation. More precisely:
	\begin{enumerate}
		\item If $\boldX\in \RV(\alpha, \boldTheta)$, then $\boldX \in \SRV(\alpha, \boldZ)$, with $\boldZ = \pi(Y\boldTheta)$ satisfying
		\begin{equation}\label{eq:Z_terms_of_Theta}
		G_\beta(\boldx)
		= \E \bigg[ \bigg(
		1 \wedge
		\min _{j \in \beta_+} \Big( \frac{|\beta| \Theta_j - |\boldTheta_\beta|}{|\beta| x_j - 1}\Big)_+^\alpha
		\wedge
		\min_{j \in \beta^c}(|\boldTheta_\beta| - |\beta| \Theta_j)_+^\alpha
		- \max_{j \in \beta_-}\Big( \frac{|\beta| \Theta_j - |\boldTheta_\beta|}{|\beta| x_j - 1}\Big)_+^\alpha
		\bigg)_+ \bigg]\, ,
		\end{equation}
		for all $\boldx \in {\cal X}_\beta^0$ such that for all $j \in \beta$, $x_j \neq 1/|\beta|$, and where $\beta_+ = \{j \in \beta, \, x_j > 1/|\beta|\}$ and $\beta_- =  \{j \in \beta, \, x_j < 1/|\beta|\}$.
	\item If $\boldX \in \SRV(\alpha, \boldZ)$ with $\boldZ$ satisfying (A), then $\boldX \in \RV(\alpha, \boldTheta)$ with $\boldTheta$ satisfying
	\begin{equation} \label{eq:relation_Theta_Z}
	\P(\boldTheta \in A_\boldx)
	= \P(\boldZ \in A_\boldx)
	+ \alpha^{-1} \sum_{\gamma \supset \beta} \mathrm d H_{\beta, \gamma}(\boldx_\beta, 0, 0) \cdot \Big(\boldx_\beta - \frac{1}{|\gamma|}, -\frac{1}{|\gamma|}, -\frac{1}{|\gamma|+1} \Big)\, ,
	\end{equation}
	for $\beta \supset \gamma \in {\cal P}_d^*$ and $\lambda_\beta$-almost every $\boldx \in {\cal X}_\beta$.
\end{enumerate}
\end{theo}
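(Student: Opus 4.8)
The plan is to prove the two implications separately. The implication $\RV\Rightarrow\SRV$ is soft: since $\pi$ is continuous on $\R^d_+$ (property P3), the map $(y,\boldv)\mapsto(y,\pi(y\boldv))$ is continuous, and because $\pi(\boldX/t)=\pi\big((|\boldX|/t)\,\boldX/|\boldX|\big)$, the continuous mapping theorem turns \eqref{eq:reg_var_spectral_vector} directly into \eqref{eq:sparse_reg_var} with $\boldZ=\pi(Y\boldTheta)$, the radial limit $Y$ being Pareto$(\alpha)$ by the one-dimensional theory. So the genuine content of the first assertion is the formula \eqref{eq:Z_terms_of_Theta}, and of the second the reconstruction of the law of $\boldTheta$ from that of $\boldZ$.

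For \eqref{eq:Z_terms_of_Theta} I would fix $\beta$ and $\boldx\in{\cal X}_\beta^0$ with $x_j\neq 1/|\beta|$ for $j\in\beta$, and compute $G_\beta(\boldx)=\P(\boldZ_\beta>\boldx_\beta,\boldZ_{\beta^c}\le\boldx_{\beta^c})$ by conditioning on $\boldTheta$ and using that $Y\sim\mathrm{Pareto}(\alpha)$ is independent of $\boldTheta$. Since $\boldx_{\beta^c}=\boldo$, the event forces $\boldZ_{\beta^c}=\boldo$ (use property P2), so writing $\boldZ=(Y\boldTheta-\lambda)_+$ the constraint $\sum_i(Y\Theta_i-\lambda)_+=1$ pins down $\lambda=(Y|\boldTheta_\beta|-1)/|\beta|$, and the event becomes
\[
Y\Theta_j-\lambda>x_j\ \ (j\in\beta),\qquad Y\Theta_j-\lambda\le 0\ \ (j\in\beta^c).
\]
Plugging in $\lambda$ turns each of these into a one-sided linear inequality in $Y$: for $j\in\beta_+$ it is $Y\big(\tfrac{|\beta|\Theta_j-|\boldTheta_\beta|}{|\beta|x_j-1}\big)_+>1$, for $j\in\beta_-$ it is $Y\big(\tfrac{|\beta|\Theta_j-|\boldTheta_\beta|}{|\beta|x_j-1}\big)_+<1$, and for $j\in\beta^c$ it is $Y(|\boldTheta_\beta|-|\beta|\Theta_j)_+\ge 1$. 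Intersecting these and integrating against $\P(Y>y)=1\wedge y^{-\alpha}$, $\P(Y<y)=(1-y^{-\alpha})_+$ (the admissible $Y$-interval may be empty, explaining the outer positive part; and $(1\wedge a-\min(1,b))_+=(1\wedge a-b)_+$ for $a,b\ge0$ matches the displayed form) yields the bracketed expression, and taking expectation in $\boldTheta$ gives \eqref{eq:Z_terms_of_Theta}. This step uses no distributional assumption.

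For $\SRV\Rightarrow\RV$ I would first note that \eqref{eq:sparse_reg_var} already gives that $|\boldX|$ is regularly varying with index $\alpha$ (the radial limit $Y$ is nondegenerate, hence Pareto). For the angular part, use compactness of $\mathbb S^{d-1}_+$: let $\boldTheta$ be any subsequential weak limit of $\P(\boldX/|\boldX|\in\cdot\mid|\boldX|>t)$. Combined with the radial regular variation, the classical identity $\P(|\boldX|>tx,\boldX/|\boldX|\in B\mid|\boldX|>t)=\tfrac{\P(|\boldX|>tx)}{\P(|\boldX|>t)}\,\P(\boldX/|\boldX|\in B\mid|\boldX|>tx)$ shows that along this subsequence $\boldX\in\RV(\alpha,\boldTheta)$ with the product structure of \eqref{eq:reg_var_spectral_vector}; then by the first assertion the joint law of $(Y,\boldZ)$ equals $\mathrm{law}(Y,\pi(Y\boldTheta))$ with $Y$ Pareto$(\alpha)$ independent of $\boldTheta$. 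Since $\pi(y\boldTheta)\to\boldTheta$ a.s.\ as $y\downarrow1$ and, for bounded continuous $g$, $\E[g(\boldZ)\indic_{Y>r}]=\int_r^\infty\E[g(\pi(y\boldTheta))]\,\alpha y^{-\alpha-1}\,\mathrm dy$, the map $y\mapsto\E[g(\pi(y\boldTheta))]$, hence its limit $\E[g(\boldTheta)]$, is determined by the given joint law of $(Y,\boldZ)$. Therefore all subsequential limits coincide, $\P(\boldX/|\boldX|\in\cdot\mid|\boldX|>t)$ converges, and $\boldX\in\RV(\alpha,\boldTheta)$.

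It then remains to identify $\mathrm{law}(\boldTheta)$ explicitly. Differentiating the last identity in $r$, using Proposition~\ref{prop:dependence_Z_Y} in the form $\E[g(\boldZ)\indic_{Y>r}]=r^{-\alpha}\E[g(\pi(r\boldZ))]$, and letting $r\downarrow1$, I get
\[
\P(\boldTheta\in A_\boldx)=\P(\boldZ\in A_\boldx)-\alpha^{-1}\,\tfrac{\mathrm d}{\mathrm dr}\P(\pi(r\boldZ)\in A_\boldx)\big|_{r=1^+}.
\]
The remaining, and main, task is to evaluate this derivative. For $r$ slightly above $1$ one has $\pi(r\boldZ)=(r\boldZ-\mu_r)_+$ with a small multiplier $\mu_r\ge0$; conditioning on $\mathrm{supp}(\boldZ)$ and tracking, to first order in $r-1$, which coordinates of $\pi(r\boldZ)$ stay positive and which are zeroed as $r$ crosses $1$ — precisely the bookkeeping in Lemma~\ref{lem:proj_beta_betac}, the source of the map $\phi_\gamma$ and of the weights $1/|\gamma|$ (for coordinates that remain active) and $1/(|\gamma|+1)$ (for a coordinate just crossing the threshold) — one rewrites the difference quotient, for each $\gamma\supseteq\beta$, as a difference of $H_{\beta,\gamma}$-probabilities between $(\boldx_\beta,0,0)$ and $(\boldx_\beta,0,0)+(r-1)\big(\boldx_\beta-\tfrac1{|\gamma|},-\tfrac1{|\gamma|},-\tfrac1{|\gamma|+1}\big)+o(r-1)$. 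Passing to the limit and summing over $\gamma$ gives \eqref{eq:relation_Theta_Z}, and assumption (A) is exactly the differentiability needed for the limit to exist for $\lambda_\beta$-almost every $\boldx$. I expect this last step to be the main obstacle: correctly disentangling, to first order, the several ways $\mathrm{supp}(\pi(r\boldZ))$ can differ from $\mathrm{supp}(\boldZ)$ and matching the resulting boundary (flux) terms with the gradient of $H_{\beta,\gamma}$; Lemma~\ref{lem:proj_beta_betac} is the tool that makes this tractable.
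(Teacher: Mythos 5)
Your first implication and the derivation of \eqref{eq:Z_terms_of_Theta} match the paper's proof essentially line for line: continuity of $\pi$ gives $\RV\Rightarrow\SRV$ with $\boldZ=\pi(Y\boldTheta)$, and the formula follows by rewriting the event $\{\boldZ_\beta>\boldx_\beta,\ \boldZ_{\beta^c}=\boldo\}$ as one-sided inequalities in $Y^{-1}$ (splitting $\beta$ into $\beta_+$ and $\beta_-$ according to the sign of $|\beta|x_j-1$) and integrating against the uniform law of $Y^{-\alpha}$. That part is correct.

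The converse is where your argument has a genuine gap. You take a subsequential weak limit $\boldTheta$ of $\P(\boldX/|\boldX|\in\cdot\mid|\boldX|>t)$ and assert that, combined with regular variation of $|\boldX|$, the identity $\P(|\boldX|>tx,\boldX/|\boldX|\in B\mid|\boldX|>t)=\frac{\P(|\boldX|>tx)}{\P(|\boldX|>t)}\P(\boldX/|\boldX|\in B\mid|\boldX|>tx)$ yields regular variation \emph{with the product structure} along that subsequence. It does not: the right-hand side involves the angular law at threshold $t_nx$, a different subsequence from $t_n$, and its subsequential limits $S_x$ may depend on $x$. The homogeneity/independence of the limit is exactly what fails to be automatic for subsequential limits, and it is the crux of the theorem. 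Without it you cannot invoke your first assertion to identify $\mathrm{law}(Y,\boldZ)=\mathrm{law}(Y,\pi(Y\boldTheta))$, so the uniqueness-of-subsequential-limits step collapses. The paper avoids this by proving a genuine characterization of regular variation (Lemma~\ref{lem:rv_equivalent_condition}): it suffices that $(\alpha\epsilon)^{-1}\P(|\boldX|/t\le1+\epsilon,\ \boldX/|\boldX|\in A_\boldx\mid|\boldX|>t)$ has a common $\limsup$/$\liminf$ limit as $\epsilon\to0$, the point being that the full angular probability is reassembled from thin annuli via the decomposition $(u,\infty)=\bigsqcup_k(u(1+\epsilon)^k,u(1+\epsilon)^{k+1}]$, Fatou, and the regular variation of $|\boldX|$ (plus a continuity-from-above argument to extend the limit $l$ to a measure). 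On a thin annulus the self-normalized vector and $\pi(\boldX/t)$ are comparable (Lemma~\ref{lem:upper_lower_bound}), which is what lets SRV control the angular behaviour. Your final heuristic $\P(\boldTheta\in A_\boldx)=\P(\boldZ\in A_\boldx)-\alpha^{-1}\frac{\mathrm d}{\mathrm dr}\P(\pi(r\boldZ)\in A_\boldx)\big|_{r=1^+}$ is consistent in spirit with the paper's computation (after decomposing over the cones $C_\gamma$ and using $\phi_\gamma$ and assumption~(A) to differentiate $H_{\beta,\gamma}$), but you acknowledge you have not carried it out, and in any case it rests on the unproved convergence above. To repair the proof you would need to replace the compactness/subsequence step by an argument that actually establishes convergence of the angular measure, e.g. the annulus decomposition of Lemma~\ref{lem:rv_equivalent_condition}.
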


\begin{rem}[Discrete distributions]
Note that if the angular vector $\boldZ$ has a discrete distribution, then the assumption (A) is satisfied and we have $\mathrm d H(\boldx_\beta, 0,0) = 0$ for all $\beta \subset \gamma$ and $\lambda_\beta$-almost every $\boldx \in {\cal X}_\beta$. Hence Theorem \ref{theo:relation_G_Theta} ensures that there exists a spectral vector $\boldTheta$ such that $\P( \boldTheta \in A_\boldx) = \P(\boldZ \in A_\boldx)$, i.e. $\boldTheta \stackrel{d}{=}\boldZ$. Actually, using Corollary \ref{corr:Z_points_e_beta} and similar arguments than above we obtain $\boldTheta = \boldZ$ a.s.
\end{rem}

Equation \eqref{eq:Z_terms_of_Theta} gives a relation between the distribution of $\boldZ$ and the one of $\boldTheta$. While its complexity makes it difficult to use in all generality, specific choices for $\boldx$ lead to useful results. A convenient particular case is the one with $\beta = \{1, \ldots, d\}$ and $\boldx < 1/d$ which provides the relation
\[
G_\boldZ(\boldx)
= \E \bigg[ \bigg( 1 - \max_{1 \leq j \leq d} \bigg(\frac{1/d - \Theta_j}{1/d-x_j} \bigg)^\alpha\bigg)_+ \bigg] \, .
\]
In particular for $\boldx = \boldo$ we obtain
\begin{equation} \label{eq:Z_all_components_positive}
G_\boldZ(\boldo) = 1 - \E \Big[ \max_{1\leq j \leq d } (1-d\Theta_j)^\alpha \Big] \, .
\end{equation}
Thus, the probability for $\boldZ$ to have a null component is 
\begin{equation} \label{eq:Z_one_null_component}
\P(Z_j=0 \text{ for some } j \leq d) = \E \Big[ \max_{1 \leq j \leq d } (1-d \Theta_j)^\alpha \Big]\,.
\end{equation}
This quantity is null if and only if for all $j = 1, \ldots, d$, $\Theta_j=1/d$ a.s. and is equal to $1$ if and only if $\min_{1 \leq j \leq d}\Theta_j=0$ a.s. This implies that the new angular vector $\boldZ$ is more likely to be sparse in the sense of Remark \ref{rem:notion_of_sparsity}. In particular, all usual spectral models on $\boldTheta$ that are not supported on the axis are not suitable for $\boldZ$. More insights into the sparsity of the vector $\boldZ$ is given in Section \ref{sec:detection_extremal_directions_sparse_reg_var}.

\begin{ex}\label{ex:uniform}
	We consider a spectral vector $\boldTheta$ in $\mathbb S^1_+$ with a first component $\Theta_1$ uniformly distributed (and then $\Theta_2 = 1 - \Theta_1$ is also uniformly distributed). This fits into the framework of Remark \ref{rem:assumption_A}. We also assume $\alpha = 1$ for simplicity. The probability that $\boldZ$ belongs to the first axis is equal to $\P(Z_1 = 1) = \P(Y \Theta_1 - Y \Theta_2 \geq 1) = \P(2 \Theta_1 - 1 \geq 1/Y)$ (see Lemma \ref{lem:proj_beta_betac}). Since the random variable $1/Y$ follows a uniform distribution on $(0,1)$ and is independent of $\boldTheta$ we obtain
	\begin{equation}\label{eq:Z_faces_uniform}
	\P(Z_1 = 1) = \int_0^1 \P(2 \Theta_1 - 1 \geq u) \, \mathrm d u = \int_0^1 \P \Big(\Theta_1 \geq \frac{u+1}{2} \Big) \, \mathrm d u = \int_0^1 \frac{1-u}{2} \, \mathrm d u = 1/4 \, .
	\end{equation}
	Besides, if $x \in (0,1)$, Lemma \ref{lem:proj_beta_betac} entails that
	\begin{align*}
	\P(0 < Z_1 \leq x) = \P(0 < Y \Theta_1 -(Y-1)/2 \leq x) = \int_0^1 \P\big[ (1-u)/2 < \Theta_1 \leq (1 +(2x-1)u) / 2 \big] \, \mathrm d u = x/2\, .
	\end{align*}
	The distribution of $Z_1$ is thus given by $\delta_0/4 + \delta_1/4 + U(0,1)/2$, where $U$ denotes a uniform distribution on $(0,1)$ and $\delta_a$ a Dirac measure at point $a$. We check in Appendix \ref{app:calculations} that this vector $\boldZ$ satisfies Equations \eqref{eq:Z_terms_of_Theta} and \eqref{eq:relation_Theta_Z}.
\end{ex}

\section{Detection of extremal directions with sparse regular variation} \label{sec:detection_extremal_directions_sparse_reg_var}

\subsection{Sparsity in extremes}

This section tackles the issue of detecting extremal directions for a regularly varying random vector $\boldX$. In such a context it is helpful to partition the underlying space, in our case $\mathbb S ^{d-1}_+$, with understandable subsets (\cite{chautru_2015}, \cite{goix_sabourin_clemencon_17}, \cite{simpson_et_al}). In this article we consider the subsets
\begin{equation} \label{def:cones_C}
C_\beta = \big \{ \boldx \in \mathbb S^d_+ : x_i > 0 \text{ for } i \in \beta, \, x_i = 0 \text{ for } i \notin \beta \big\} \, , \quad \beta \in {\cal P}_d^*\, ,
\end{equation}
which form a partition of the simplex. An illustration of these subsets in dimension $3$ is given in Figure \ref{fig:subcones}. This partition is helpful to study the tail structure of $\boldX$. Indeed, for $\beta \in {\cal P}_d^*$ the inequality $\P(\boldTheta \in C_\beta) > 0$ means that it is likely to observe simultaneously large values in the directions $i \in \beta$ and small values in the directions $i \in \beta^c$. Then, identifying the subsets $C_\beta$ which concentrate the mass of the spectral measure allows us to bring out clusters of coordinates which can be simultaneously large.

\begin{figure}[!th]
	\begin{center}
		\begin{tikzpicture}[scale=2]
		
		\tikzset{cross/.style={cross out, draw=black, minimum size=2*(#1-\pgflinewidth), inner sep=0pt, outer sep=0pt},
			cross/.default={2pt}}
		
		\draw[dashed] (1,0,0) -- (0,1,0);
		\draw[dashed] (0,1,0) -- (0,0,1);
		\draw[dashed] (0,0,1) -- (1,0,0);
		
		\draw[->] (0,0,0) -- (1.5,0,0) ;
		\draw[->] (0,0,0) -- (0,1.5,0);
		\draw[->] (0,0,0) -- (0,0,1.5);
		
		\draw (0,0,0) node[left] {$O$};
		\draw (0,0,1.5) node[left] {$\bolde_1$};
		\draw (1.5,0,0) node[below] {$\bolde_2$};
		\draw (0,1.5,0) node[left] {$\bolde_3$};
		
		\draw (0,0,1) node[left] {$1$};
		\draw[black] (0,0,1) node {$\bullet$};
		\draw (1,0,0) node[below] {$1$};
		\draw[black] (1,0,0) node {$\bullet$};
		\draw (0,1,0) node[left] {$1$};
		\draw[black] (0,1,0) node {$\bullet$};
	
		\end{tikzpicture}
	\end{center}
	\caption{The subsets $C_\beta$ in dimension $3$ for the $\ell^1$-norm. The subsets $C_{ \{1\} }$, $C_{ \{2\} }$, and $C_{ \{3\} }$ correspond to the unit vectors $\bolde_1$, $\bolde_2$, and $\bolde_3$, respectively. The dashed lines indicate the subsets $C_{ \{1, 2\} }$, $C_{ \{1, 3\} }$, and $C_{ \{2, 3\} }$. Finally, the subset $C_{\{1, 2, 3 \}}$ correspond to the interior of the simplex.} \label{fig:subcones}
\end{figure}

\begin{rem} \label{rem:coord_system}
	Our approach aims to detect sparse directions that are aligned with the standard coordinate system. For $\boldX \in \R^d_+$ it allows us to understand how a marginal affects the extremal behavior of $\boldX$. In applications, if $\boldX$ represents a phenomenon, the goal is to understand which groups of marginals are the main causes of the extremal behavior of this phenomenon. This is why we do not focus on what happens if the directions do not align with the standard coordinate system.
\end{rem}

\begin{ex}\label{ex:asympt_indep}
	A standard example of sparsity is asymptotic independence for which the spectral measure only puts mass on the axis: $\P(\boldTheta \in \sqcup_{1 \leq j \leq d}\, \{\bolde_j \} ) = \P(\boldTheta \in \sqcup_{1 \leq j \leq d} \, C_{ \{j\} } ) =  1$. This means that there is never more than one direction which contributes to the extremal behavior of the vector. This concept has been studied by many authors, see for instance \cite{ledford_tawn_96} or \cite{ramos_ledford_09}.
\end{ex}

As for any low-dimensional subspaces, topological issues may arise for the $C_\beta$'s in the convergence \eqref{eq:reg_var_spectral_vector}. Indeed, for $\beta \neq \{1, \ldots, d\}$ the subset $C_\beta$ is included in its boundary (with respect to the topology of the simplex) and the convergence \eqref{eq:reg_var_spectral_vector} fails for such a set. This kind of problems appears since the spectral measure may put mass on low-dimensional subspaces while the data generally do not concentrate on such subspaces. This issue can be circumvent with sparse regular variation. We replace the study of the sets $\{ \boldx \in \R^d_+ : |\boldx| > 1, \, \boldx / |\boldx| \in C_\beta\}$ by the one of the sets $\{ \boldx \in \R^d_+ : |\boldx| > 1, \, \pi(\boldx / t) \in C_\beta\}$ which enjoy better topological properties.

In this context, many other sets have been proposed recently. \cite{goix_sabourin_clemencon_2016} define the truncated $\epsilon$-cones as
\[
\big \{ \boldx \in \R^d_+ : |\boldx|_\infty > 1, \, x_i > \epsilon |\boldx|_\infty \text{ for } i \in \beta, \, x_i \leq \epsilon  |\boldx|_\infty \text{ for } i \notin \beta \big\} \, .
\]
Subsequently, \cite{goix_sabourin_clemencon_17} introduce the notion of $\epsilon$-thickened rectangles:
\[
\big \{ \boldx \in \R^d_+ : |\boldx|_\infty > 1, \, x_i > \epsilon \text{ for } i \in \beta, \, x_i \leq \epsilon \text{ for } i \notin \beta \big\} \, .
\]
\cite{chiapino_sabourin_2016} relax the condition on $\beta^c$ and define the rectangles
\[
\big \{ \boldx \in \R^d_+ : x_i > 1 \text{ for } i \in \beta \big\}\, ,
\]
to focus on groups of variables that may be large together. Similarly to the ideas of \cite{chiapino_sabourin_2016} we relax the condition on $\beta$ in the definition of the $C_\beta$'s which leads to the study of the subsets $\{\boldx \in \mathbb S^{d-1}_+ : x_i = 0 \text{ for } i \notin \beta\} = \mathbb S ^{d-1}_+ \cap \Vect(\bolde_j, j \in \beta)$.

We gather in the following proposition some results regarding the behavior of $\pi(\boldX/t)$ and $\boldZ$ on the subsets $C_\beta$ and $\mathbb S^{d-1}_+ \cap \Vect(\bolde_j, j \in \beta)$. In all what follows we write $\Theta_{\beta, \, j}$ (resp. $\Theta_{\beta, \, j, \, +}$) for $\sum_{k \in \beta} (\Theta_k - \Theta_j)$ (resp. $\sum_{k \in \beta} (\Theta_k - \Theta_j)_+$).

\begin{prop}\label{prop:Z_on_specific_subsets}
Let $\boldX \in \RV(\alpha, \boldTheta)$ and set $\boldZ = \pi(Y \boldTheta)$, where $Y$ is a Pareto($\alpha$)-distributed random variable independent of $\boldTheta$.
\begin{enumerate}
	\item For any $\beta \in {\cal P}_d^*$ we have
	\begin{equation} \label{eq:cv_projection_Z_on_C_beta}
		\P(\pi(\boldX/t) \in C_\beta \mid |\boldX| > t ) \to \P(\boldZ \in C_\beta)\, ,
		\quad t \to \infty\, ,
	\end{equation}
	and
	\begin{equation} \label{eq:cv_projection_Z_on_beta}
		\P(\pi(\boldX/t)_{\beta^c} = 0 \mid |\boldX| > t ) \to \P(\boldZ _{\beta^c} = 0)\, ,
		\quad t \to \infty\, .
	\end{equation}
	\item For any $\beta \in {\cal P}_d^*$ we have
	\begin{equation} \label{eq:Z_C_beta}
	\P(\boldZ \in C_\beta)
	= \E\Big[ \Big(
	\min_{j \in \beta^c} \Theta_{\beta, \, j, \, +}^\alpha
	- \max_{j \in \beta} \Theta_{\beta, \, j, \, +}^\alpha
	\Big)_+ \Big] \, ,
	\end{equation}
	and
	\begin{equation} \label{eq:Z_beta_null_components}
	\P(\boldZ_{\beta^c} = 0)
	= \E\Big[ \min_{j \in \beta^c}
	\Theta_{\beta, \, j, \, +}^\alpha
	\Big]\, .
	\end{equation}
\end{enumerate}
\end{prop}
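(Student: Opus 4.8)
I would prove the two items by rather different routes: item 1 is a convergence statement that I would reduce to ordinary regular variation on $\R^d_+$, while item 2 is an exact computation resting on two algebraic descriptions of the projection. For item 1, the point is that $C_\beta$ and $\overline{C_\beta}=\{\boldu\in\mathbb S^{d-1}_+:\boldu_{\beta^c}=0\}$ are in general \emph{not} continuity sets of the law of $\boldZ$, so \eqref{eq:cv_projection_Z_on_C_beta}--\eqref{eq:cv_projection_Z_on_beta} cannot be read off \eqref{eq:sparse_reg_var} directly. Instead set $R_\beta=\{\boldv\in\R^d_+:\pi(\boldv)\in C_\beta\}$ and $\widetilde R_\beta=\{\boldv\in\R^d_+:\pi(\boldv)_{\beta^c}=0\}$, so that $\{\pi(\boldX/t)\in C_\beta\}=\{\boldX/t\in R_\beta\}$ and $\{\pi(\boldX/t)_{\beta^c}=0\}=\{\boldX/t\in\widetilde R_\beta\}$. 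By \eqref{eq:reg_var_spectral_vector} and the continuous mapping theorem applied to $(r,\boldu)\mapsto r\boldu$, $\P(\boldX/t\in\cdot\mid|\boldX|>t)\overset{w}{\to}\nu$, the law of $Y\boldTheta$, and $\nu(R_\beta)=\P(\pi(Y\boldTheta)\in C_\beta)=\P(\boldZ\in C_\beta)$, $\nu(\widetilde R_\beta)=\P(\boldZ_{\beta^c}=0)$. It remains to check that $R_\beta$ and $\widetilde R_\beta$ are $\nu$-continuity sets. Since $\boldv\mapsto\lambda_{\boldv}$ is continuous, the inequalities defining these sets are stable under perturbation away from the ``switching set'' $D=\{\boldv\in\R^d_+:v_j=\lambda_{\boldv}\text{ for some }j\}$; together with the continuity of $\pi$ this gives $\partial R_\beta\cup\partial\widetilde R_\beta\subset D$. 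Finally $\nu(D)=0$: conditionally on $\boldTheta$, for each $j$ the map $r\mapsto r\Theta_j-\lambda_{r\boldTheta}$ is continuous, piecewise affine, and not identically zero on any linearity piece, so it has finitely many zeros; hence $\{Y:Y\boldTheta\in D\}$ is finite and $\nu$-null because $Y$ has a density. The portmanteau theorem then yields \eqref{eq:cv_projection_Z_on_C_beta} and \eqref{eq:cv_projection_Z_on_beta}.

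For item 2, write $T_\beta=\min_{j\in\beta^c}\Theta_{\beta,\,j,\,+}$ and $W_\beta=\max_{j\in\beta}\Theta_{\beta,\,j,\,+}$; note $0\le T_\beta,W_\beta\le|\boldTheta_\beta|\le1$. The core of the argument is the identity $\{\pi(Y\boldTheta)_{\beta^c}=0\}=\{T_\beta\,Y\ge1\}$. For ``$\subseteq$'': when $\pi(Y\boldTheta)_{\beta^c}=0$ the equation defining $\lambda_{Y\boldTheta}$ reduces to $\sum_{k\in\beta}(Y\Theta_k-\lambda)_+=1$, so $\lambda_{Y\boldTheta}$ is the simplex--projection threshold of $(Y\boldTheta)_\beta$ inside $\R^\beta$, and if $\beta'\subset\beta$ denotes the active support then $\lambda_{Y\boldTheta}=(Y|\boldTheta_{\beta'}|-1)/|\beta'|$; for $j\in\beta^c$, $Y\Theta_j\le\lambda_{Y\boldTheta}$ rearranges into $1\le Y\sum_{k\in\beta'}(\Theta_k-\Theta_j)\le Y\Theta_{\beta,\,j,\,+}$, whence $1\le T_\beta Y$. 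For ``$\supseteq$'' --- this is where property P1 enters --- the support $S$ of $\pi(Y\boldTheta)$ is an upper set for the order of $\boldTheta$, so if some $j_0\in\beta^c$ were in $S$ then, using $\lambda_{Y\boldTheta}/Y<\Theta_{j_0}$ and $\{k\in S:\Theta_k>\Theta_{j_0}\}=\{k:\Theta_k>\Theta_{j_0}\}$ (since $j_0\in S$),
\[
\tfrac1Y=\sum_{k\in S}\Big(\Theta_k-\tfrac{\lambda_{Y\boldTheta}}{Y}\Big)>\sum_{k:\,\Theta_k>\Theta_{j_0}}(\Theta_k-\Theta_{j_0})\ge\Theta_{\beta,\,j_0,\,+}\ge T_\beta,
\]
the strict inequality because the nonnegative terms with $\Theta_k\le\Theta_{j_0}$ --- which include the positive term indexed by $j_0$ itself --- have been dropped; this contradicts $T_\beta Y\ge1$. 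The second ingredient, immediate from $\pi(\boldw)=(\boldw-\lambda_{\boldw})_+$, is that in $\R^m$ a projection has full support iff $\max_j\sum_k(w_k-w_j)<1$; applied in $\R^\beta$ to $(Y\boldTheta)_\beta$, whose projection onto the simplex of $\R^\beta$ coincides with $\pi(Y\boldTheta)_\beta$ on the event $\{\pi(Y\boldTheta)_{\beta^c}=0\}$, and using that $\max_{j\in\beta}\sum_{k\in\beta}(\Theta_k-\Theta_j)$ is attained at a $j$ minimising $\Theta_j$ over $\beta$ (there it equals $\Theta_{\beta,\,j,\,+}$ and dominates the other $\Theta_{\beta,\,j,\,+}$), this gives $\{\pi(Y\boldTheta)\in C_\beta\}=\{T_\beta Y\ge1\}\cap\{W_\beta Y<1\}$.

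Given these two descriptions, it suffices to condition on $\boldTheta$ and use that $Y$ is $\mathrm{Pareto}(\alpha)$, independent of $\boldTheta$, with $0\le W_\beta,T_\beta\le1$: then $\P(T_\beta Y\ge1\mid\boldTheta)=T_\beta^\alpha$ and $\P(T_\beta Y\ge1>W_\beta Y\mid\boldTheta)=(T_\beta^\alpha-W_\beta^\alpha)_+$ (conventions $1/0=\infty$ and, when $\beta=\{1,\dots,d\}$, the empty minimum $T_\beta$ read as $1$). Since $x\mapsto x^\alpha$ is increasing, $T_\beta^\alpha=\min_{j\in\beta^c}\Theta_{\beta,\,j,\,+}^\alpha$ and $W_\beta^\alpha=\max_{j\in\beta}\Theta_{\beta,\,j,\,+}^\alpha$, and taking expectations produces \eqref{eq:Z_C_beta} and \eqref{eq:Z_beta_null_components}. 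The main obstacle is the identity $\{\pi(Y\boldTheta)_{\beta^c}=0\}=\{T_\beta Y\ge1\}$ --- specifically its ``$\supseteq$'' direction, where order preservation of $\pi$ is used --- together with the bookkeeping needed to pass from the plain sums $\Theta_{\beta,\,j}$ to the positive-part sums $\Theta_{\beta,\,j,\,+}$; and for item 1, recognising that \eqref{eq:sparse_reg_var} is not directly applicable and that one must instead argue through the $\nu$-negligibility of the switching set $D$ in $\R^d_+$.
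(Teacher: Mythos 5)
Your proof is correct and, in substance, follows the same two-part strategy as the paper: a portmanteau/continuity-set argument for item 1, and the algebraic characterization of $\{\pi(\boldv)_{\beta^c}=0\}$ and $\{\pi(\boldv)\in C_\beta\}$ followed by integration over the independent Pareto variable for item 2. The differences are in execution. For item 1 the paper transfers everything back to polar coordinates: using Lemma \ref{lem:proj_beta_betac} it rewrites $\{\pi(\boldX/t)\in C_\beta\}$ as $\{(|\boldX|/t,\boldX/|\boldX|)\in D_\beta\}$ for an explicit set $D_\beta\subset(1,\infty)\times\mathbb S^{d-1}_+$ whose boundary is contained in $\cup_i\{\Theta_{\beta,\,i}=Y^{-1}\}$, which is null because $Y$ is continuous and independent of $\boldTheta$. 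You instead stay in $\R^d_+$, push \eqref{eq:reg_var_spectral_vector} forward to the law $\nu$ of $Y\boldTheta$, and show $\partial\pi^{-1}(C_\beta)$ lies in the switching set $\{v_j=\lambda_{\boldv}\}$, which you then prove is $\nu$-null via the piecewise-affine analysis of $r\mapsto r\Theta_j-\lambda_{r\boldTheta}$. Both arguments hinge on the same fact (the radial variable has a density and is independent of the angle); yours is more intrinsic but requires the extra piecewise-affine bookkeeping that the paper's explicit boundary description avoids. For item 2, your identities $\{\pi(Y\boldTheta)_{\beta^c}=0\}=\{T_\beta Y\ge1\}$ and $\{\pi(Y\boldTheta)\in C_\beta\}=\{T_\beta Y\ge 1>W_\beta Y\}$ are exactly \eqref{eq:proj_beta} and \eqref{eq:proj_beta_beta^c} of the paper's Lemma \ref{lem:proj_beta_betac}; you re-derive them using the order-preservation property P1 and the upper-set structure of the support, where the paper uses the characterization \eqref{eq:rho_coord_positive}, and your final step (conditioning on $\boldTheta$ and computing $\P(Y\ge 1/T_\beta)=T_\beta^\alpha$) is the paper's computation with $Y^{-\alpha}\sim U(0,1)$ written differently. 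No gaps.
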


Regarding the behavior of $\pi(\boldX/t)$, the convergence \eqref{eq:sparse_reg_var} holds for any pair of Borel sets $A \times B \in (1, \infty) \times \mathbb S^{d-1}_+$ such that $\P(Y \boldTheta \in \partial \pi^{-1}(B)) = 0$, where $\partial \pi^{-1}(B)$ denotes the boundary of the set $\pi^{-1}(B)$. The first point of Proposition \ref{prop:Z_on_specific_subsets} states that the subsets $C_\beta$ and $\mathbb S^{d-1}_+ \cap \Vect(\bolde_j, j \in \beta)$ satisfy this condition. This implies in particular that the sparsity of $\boldZ$ can be studied through the projected vector $\pi(\boldX/t)$. This will be illustrated in Section \ref{sec:numerical_results}.

The second point of Proposition \ref{prop:Z_on_specific_subsets} provides some interesting relations between the sparsity of $\boldZ$ and $\boldTheta$. If we consider $\beta = \{1,\ldots,d\}$, then we obtain the probability that all coordinates are positive which has already been computed in \eqref{eq:Z_all_components_positive}. It is equal to $G_\boldZ(\boldo) = 1 - \E \left[ \max_{1\leq j \leq d } (1-d\Theta_j)^\alpha \right]$. Another particular case of Equations \eqref{eq:Z_C_beta} and \eqref{eq:Z_beta_null_components} is the one where $\beta$ corresponds to a single coordinate $\{j_0\}$. In this case, since $\boldZ$ belongs to the simplex, both probabilities $\P(\boldZ_{\beta^c} = 0)$ and $\P(\boldZ \in C_\beta)$ are equal. Their common value corresponds to the probability that $\boldZ$ concentrates on the $j_0$-th axis, which is equal to
\begin{equation} \label{eq:Z_one_positive_component}
\P( Z_{j_0} = 1) = \E \Big[ \min_{j \neq j_0} (\Theta_{j_0} - \Theta_j)_+^\alpha \Big] \, .
\end{equation}
Then, Equation \eqref{eq:Z_one_positive_component} can be developed in the following way:
\begin{align*}
\P( Z_{j_0} = 1)
& = \E \Big[ \min_{j \neq j_0} (\Theta_{j_0} - \Theta_j)_+^\alpha \indic_{\{\Theta_{j_0} = 1\}} \Big]
+ \E \Big[ \min_{j \neq j_0} (\Theta_{j_0} - \Theta_j)_+^\alpha \indic_{\{\Theta_{j_0} < 1\}} \Big]\\
& = \P(\Theta_{j_0} = 1) + \E \Big[ \min_{j \neq j_0} (\Theta_{j_0} - \Theta_j)_+^\alpha \indic_{\Theta_{j_0} < 1} \Big] \geq \P(\Theta_{j_0} = 1) \, .
\end{align*}
This shows that the vector $\boldZ$ is more likely to be sparse than the spectral vector $\boldTheta$.

\begin{rem} \label{rem:inequality_beta_c}
	Following Equation \eqref{eq:Z_beta_null_components}, we write
	\begin{equation}\label{eq:inequality_beta_c}
	\P(\boldZ_{\beta^c} = 0)
	\geq \E\Big[ \min_{j \in \beta^c} \Theta_{\{1,\ldots,d\}, \, j, \, +}^\alpha \indic_{\{ \boldTheta_{\beta^c} = 0\} } \Big]
	= \E\Big[ \big(\sum_{k \leq d} \Theta_k \big)^\alpha \indic_{\{ \boldTheta_{\beta^c} = 0\} } \Big]
	= \P(\boldTheta_{\beta^c} = 0)\, .
	\end{equation}
	This can also be seen as a direct consequence of Property P2, see Section \ref{subsec:projection_onto_simplex}. This property also gives
	\begin{equation}\label{eq:inequality_beta}
	\P(\boldZ_\beta > 0) \leq \P(\boldTheta_\beta > 0)\, ,
	\end{equation}
	an inequality which will be useful in some proofs.
\end{rem}

\subsection{Maximal directions}\label{subsec:maximal_directions}

We focus in this section on the subsets $C_\beta$. A positive value for $\P( \boldTheta \in C_\beta)$ entails that the marginals $X_j$ for $j \in \beta$ take simultaneously large values while the ones in $\beta^c$ do not (\cite{chautru_2015}, \cite{simpson_et_al}, \cite{goix_sabourin_clemencon_17}). Our aim is to use Proposition \ref{prop:Z_on_specific_subsets} to compare the nullity or not of the probabilities $\P(\boldTheta \in C_\beta)$ and $\P(\boldZ \in C_\beta)$. To this end, it is relevant to focus on the largest group of variables $\beta \in {\cal P}_d^*$ such that $\P(\boldTheta \in C_\beta) > 0$. This motivates the notion of maximal direction.
\begin{defin}[Maximal direction]\label{def:maximal_direction}
	Let $\beta \in {\cal P}_d^*$. We say that a direction $\beta$ is \emph{maximal} for $\boldTheta$ if
	\begin{equation*}
	\P(\boldTheta \in C_\beta) > 0 \quad \text{and} \quad \P(\boldTheta \in C_{\beta'}) =0\, , \text{ for all } \beta' \supsetneq \beta\, .
	\end{equation*}
\end{defin}
We similarly define maximal directions for $\boldZ$.

\begin{rem} \label{rem:maximal_directions_inclusion}
	A straightforward but useful consequence of Definition \ref{def:maximal_direction} is that each direction $\beta$ such that $\P(\boldTheta \in C_\beta) > 0$ is included in a maximal direction of $\boldTheta$. Indeed, if there exists no $\beta' \supsetneq \beta$ such that $\P(\boldTheta \in C_{\beta'}) =0$, then $\beta$ is maximal itself. If not, we consider $\beta' \supsetneq \beta$ such that $\P(\boldTheta \in C_{\beta'})  > 0$. If $\beta'$ is not maximal, then we repeat this procedure with $\beta'$. Since the length of the $\beta$'s is finite, the procedure stops and provides $\gamma \in {\cal P}_d^*$ such that $\beta \subset \gamma$, $\P(\boldTheta \in C_\gamma) >0$ and $\P(\boldTheta \in C_{\gamma'}) = 0$, for all $\gamma' \supsetneq \gamma$.
\end{rem}

The notion of maximal directions is justified by the following theorem.

\begin{theo}\label{theo:comparison_Theta_Z_maximal_subsets} Let $\beta \in {\cal P}_d^*$.
	\begin{enumerate}
		\item If $\P(\boldTheta \in C_\beta) > 0$, then $\P(\boldZ \in C_\beta) > 0$.
		\item The direction $\beta$ is maximal for $\boldTheta$ if and only if it is maximal for $\boldZ$.
	\end{enumerate}
\end{theo}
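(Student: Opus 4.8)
The plan is to derive both statements from Proposition \ref{prop:Z_on_specific_subsets}, specifically from the explicit formula \eqref{eq:Z_C_beta}, namely
\[
\P(\boldZ \in C_\beta)
= \E\Big[ \Big(
\min_{j \in \beta^c} \Theta_{\beta, \, j, \, +}^\alpha
- \max_{j \in \beta} \Theta_{\beta, \, j, \, +}^\alpha
\Big)_+ \Big] \, .
\]
First I would prove part (1). Assume $\P(\boldTheta \in C_\beta) > 0$. On the event $\{\boldTheta \in C_\beta\}$ we have $\Theta_k > 0$ for $k \in \beta$ and $\Theta_k = 0$ for $k \in \beta^c$; in particular, for $j \in \beta^c$, $\Theta_{\beta,j,+} = \sum_{k \in \beta}(\Theta_k - 0)_+ = \sum_{k \in \beta}\Theta_k = |\boldTheta| = 1$ (since $\boldTheta \in \mathbb{S}^{d-1}_+$), so $\min_{j \in \beta^c}\Theta_{\beta,j,+}^\alpha = 1$ on this event (when $\beta^c = \emptyset$ the min over the empty set is $1$ by convention, consistent with \eqref{eq:Z_one_positive_component} and the surrounding discussion). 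For the subtracted term, $\max_{j \in \beta}\Theta_{\beta,j,+}^\alpha = \max_{j \in \beta}\big(\sum_{k \in \beta}(\Theta_k - \Theta_j)_+\big)^\alpha$; this is strictly less than $1$ precisely when the $\Theta_k$, $k\in\beta$, are not too spread out. The key observation is that the integrand in \eqref{eq:Z_C_beta} is strictly positive on the sub-event $\{\boldTheta \in C_\beta\} \cap \{\max_{j\in\beta}\Theta_{\beta,j,+} < 1\}$, and I must argue this sub-event has positive probability. Since $\P(\boldTheta \in C_\beta)>0$, either there is positive mass where all coordinates $\Theta_k$, $k\in\beta$, are equal (e.g. the point $\bolde(\beta)/|\beta|$, where every $\Theta_{\beta,j,+}=0$), or, restricting to $\{\boldTheta\in C_\beta\}$, for at least one $j\in\beta$ we do not almost surely have some other coordinate equal to $0$; in either case $\max_{j\in\beta}\Theta_{\beta,j,+}<1$ on a subset of positive probability, because $\max_{j\in\beta}\Theta_{\beta,j,+}=1$ would force, for the maximizing $j$, that $\sum_{k\in\beta}(\Theta_k-\Theta_j)_+=1=\sum_{k\in\beta}\Theta_k$, i.e. $\Theta_j=0$ and all other $\Theta_k-\Theta_j=\Theta_k$ — impossible when $|\beta|\ge 2$ and all $\Theta_k>0$ for $k\in\beta$, and trivially handled when $|\beta|=1$. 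Hence the expectation in \eqref{eq:Z_C_beta} is strictly positive, giving $\P(\boldZ \in C_\beta)>0$.

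Next I would prove part (2). The forward direction: suppose $\beta$ is maximal for $\boldTheta$. By part (1), $\P(\boldZ \in C_\beta) > 0$. It remains to show $\P(\boldZ \in C_{\beta'}) = 0$ for every $\beta' \supsetneq \beta$. By Property P2 of the projection, $\Theta_{(\beta')^c} = \boldo$ whenever $\boldZ_{(\beta')^c}=\boldo$ fails — more precisely, $\boldZ_k>0$ implies $\Theta_k>0$, so $\{\boldZ \in C_{\beta'}\} \subset \{\boldZ_{(\beta')^c} = \boldo\} \subset \{\boldTheta_{(\beta')^c} = \boldo\}$, up to using P2 coordinatewise; equivalently, via \eqref{eq:inequality_beta}, $\P(\boldZ_{\beta'}>0) \le \P(\boldTheta_{\beta'}>0)$. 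But I need more than $\boldTheta_{(\beta')^c}=\boldo$; I need to rule out $\boldZ$ landing in $C_{\beta'}$ exactly. The cleaner route is: $\{\boldZ \in C_{\beta'}\}$ requires $\boldZ_k>0$ for all $k\in\beta'$, hence $\boldTheta_k>0$ for all $k\in\beta'$, hence $\boldTheta$ lies in $C_\gamma$ for some $\gamma \supseteq \beta' \supsetneq \beta$. Since $\beta$ is maximal, $\P(\boldTheta \in C_\gamma)=0$ for all such $\gamma$, and summing over the finitely many $\gamma \supseteq \beta'$ gives $\P(\exists\,k\in\beta',\ \boldZ_k>0\ \forall k\in\beta') \le \P(\boldTheta_{\beta'}>\boldo) = \sum_{\gamma\supseteq\beta'}\P(\boldTheta\in C_\gamma)=0$, so $\P(\boldZ\in C_{\beta'})=0$. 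Therefore $\beta$ is maximal for $\boldZ$.

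For the converse direction of part (2), suppose $\beta$ is maximal for $\boldZ$, so $\P(\boldZ\in C_\beta)>0$ and $\P(\boldZ\in C_{\beta'})=0$ for all $\beta'\supsetneq\beta$. I first claim $\P(\boldTheta\in C_\beta)>0$: if instead $\P(\boldTheta\in C_\beta)=0$, then by \eqref{eq:Z_C_beta} and the fact that the integrand is supported (as shown in part (1)'s analysis) on a region where $\boldTheta$ has its positive coordinates exactly indexed by a subset of... — here I would argue more carefully that a positive value of $\P(\boldZ\in C_\beta)$ forces, via P2 again applied to $\boldZ=\pi(Y\boldTheta)$, that $\P(\boldTheta_\beta>\boldo)>0$, i.e. $\boldTheta$ puts positive mass on $\bigcup_{\gamma\supseteq\beta}C_\gamma$; by Remark \ref{rem:maximal_directions_inclusion} each such $\gamma$ with $\P(\boldTheta\in C_\gamma)>0$ sits inside a maximal direction $\gamma^\ast\supseteq\gamma\supseteq\beta$ of $\boldTheta$, and by the already-proved forward direction $\gamma^\ast$ is then maximal for $\boldZ$; but $\beta$ is also maximal for $\boldZ$ and $\gamma^\ast\supseteq\beta$, forcing $\gamma^\ast=\beta$, hence $\gamma=\beta$ and $\P(\boldTheta\in C_\beta)>0$, and there is no $\beta'\supsetneq\beta$ with $\P(\boldTheta\in C_{\beta'})>0$ (such a $\beta'$ would lie in a maximal direction of $\boldTheta$ strictly containing $\beta$, contradicting maximality of $\beta$ for $\boldZ$ via the forward direction). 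Thus $\beta$ is maximal for $\boldTheta$.

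\textbf{Main obstacle.} I expect the delicate point to be the \emph{strict} positivity claim in part (1): showing that the nonnegative integrand in \eqref{eq:Z_C_beta} does not vanish almost everywhere on $\{\boldTheta\in C_\beta\}$, i.e. that $\max_{j\in\beta}\Theta_{\beta,j,+}<1$ on a set of positive probability inside $\{\boldTheta\in C_\beta\}$. This requires carefully unpacking when $\sum_{k\in\beta}(\Theta_k-\Theta_j)_+$ can equal $1$ given $\sum_{k\in\beta}\Theta_k=1$ and $\Theta_k>0$ for all $k\in\beta$, handling the edge cases $|\beta|=1$ and $\beta^c=\emptyset$ by the min-over-empty-set convention, and invoking the fact that $\Theta_j=0$ is incompatible with $\boldTheta\in C_\beta$ when $j\in\beta$. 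Everything else reduces to monotonicity of $C_{\beta'}\mapsto C_\beta$ inclusions, Property P2, finite additivity, and the bookkeeping of Remark \ref{rem:maximal_directions_inclusion}.
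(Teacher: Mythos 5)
Your proof is correct. Part (1) and the forward implication of part (2) follow the paper's own route: the paper isolates your key positivity step as Lemma \ref{lem:condition_subsets_C_beta} (the event $\{\max_{j\in\beta}\Theta_{\beta,j,+}=1\}$ forces $\Theta_j=0$ for the minimizing $j\in\beta$, hence lies in $\{\boldTheta\notin C_\beta\}$), and then bounds $\P(\boldZ\in C_\beta)$ below by $\E[(1-\max_{j\in\beta}\Theta_{\beta,j,+}^\alpha)\indic_{\boldTheta\in C_\beta}]$ exactly as you do; note that your either/or case distinction is unnecessary, since your own concluding argument shows $\{\boldTheta\in C_\beta\}\subseteq\{\max_{j\in\beta}\Theta_{\beta,j,+}<1\}$ outright. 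Where you genuinely diverge is the converse of part (2). The paper splits $\P(\boldZ\in C_\beta)=E_1+E_2$ according to $\indic_{\boldTheta\in C_\beta}$ versus $\indic_{\boldTheta\notin C_\beta}$ in \eqref{eq:Z_C_beta} and must then argue analytically that $E_2=0$, by checking that the integrand vanishes when some $\Theta_l=0$ with $l\in\beta$ and that the remaining contribution is dominated by $\P(\boldTheta\in C_{\beta'}\text{ for some }\beta'\supsetneq\beta)=0$; positivity of $\P(\boldTheta\in C_\beta)$ then falls out of $E_1=\P(\boldZ\in C_\beta)>0$. You instead argue combinatorially: Property P2 gives $\{\boldZ\in C_\beta\}\subseteq\{\boldTheta_\beta>\boldo\}=\bigsqcup_{\gamma\supseteq\beta}\{\boldTheta\in C_\gamma\}$, so some $\gamma\supseteq\beta$ carries $\boldTheta$-mass, and $\gamma\supsetneq\beta$ is excluded by part (1) together with the maximality of $\beta$ for $\boldZ$ (your detour through Remark \ref{rem:maximal_directions_inclusion} and a maximal $\gamma^\ast$ is sound but not needed). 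Your version is shorter, avoids any further analysis of the integrand in \eqref{eq:Z_C_beta}, and uses only the partition structure of the $C_\gamma$'s plus P2; the paper's version has the incidental benefit of producing the exact identity $\P(\boldZ\in C_\beta)=\E[1-\max_{j\in\beta}\Theta_{\beta,j,+}^\alpha\mid\boldTheta\in C_\beta]\,\P(\boldTheta\in C_\beta)$ on maximal directions, which is more information than the bare positivity you extract.
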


Theorem \ref{theo:comparison_Theta_Z_maximal_subsets} implies that we do not lose any information on the extremal directions of $\beta$ by studying $\boldZ$ instead of $\boldTheta$. But it is possible that the distribution of $\boldZ$ puts some mass on a subset $C_\beta$ while the one of $\boldTheta$ does not. In such a case, the associated direction $\beta$ is necessarily non-maximal. 

\begin{ex}\label{ex:mass_Z_not_Theta}
	In Example \ref{ex:uniform} we proved that if $\Theta_1$ follows a uniform distribution on $(0,1)$, then $\P(\boldZ \in C_{ \{1\} } ) = 1/4$ while $\P(\Theta_1 = 1) = 0$. This proves that the direction $\beta = \{1\}$ is non-maximal for $\boldZ$.
\end{ex}

Example \ref{ex:mass_Z_not_Theta} shows there may exist $\beta \in {\cal P}_d^*$ such that $\P(\boldZ \in C_\beta) > 0$ and $\P(\boldTheta \in C_\beta) = 0$. In this case, Theorem \ref{theo:comparison_Theta_Z_maximal_subsets} states that the direction $\beta$ is not maximal for $\boldZ$ since it is not maximal for $\boldTheta$. Following Remark \ref{rem:maximal_directions_inclusion}, we consider a maximal direction $\gamma$ of $\boldZ$ such that $\beta \subset \gamma$. Then Theorem \ref{theo:comparison_Theta_Z_maximal_subsets} states that $\P(\boldTheta \in C_\gamma) > 0$. This means that even if the direction $\beta$ does not gather itself coordinates on which extreme values simultaneously occur, there exists a superset of $\beta$ which actually contains extremes. Thus, $\beta$ still gives information on the study of large events.

A natural procedure to capture the extremal directions of $\boldX$ is then the following one. Based on the Euclidean projection $\pi$ we identify the subsets $C_\beta$ on which the distribution of $\boldZ$ places mass. Hopefully, the selected subsets are low-dimensional. Among these subsets we select the maximal ones which also correspond to the maximal direction regarding the spectral vector $\boldTheta$.

\paragraph{What happens on non-maximal directions?}
While the study of maximal directions is the same for $\boldZ$ and $\boldTheta$, we develop here some ideas which highlight the use of $\boldZ$ regarding non-maximal directions. We consider a direction $\beta \in {\cal P}^*_d$ and assume that the associated subset $C_\beta$ satisfies $\P(\boldZ \in C_\beta) > 0$ and $\P(\boldTheta \in C_\beta) = 0$. Then, the direction $\beta$ is necessary non-maximal for $\boldZ$ and satisfies the following inequalities:
\[
0
< \P(\boldZ \in C_\beta)
= \P( \boldZ_\beta > 0, \boldZ_{\beta^c} = 0)
\leq \P( \boldZ_{\beta^c} = 0)
= \P( \pi(Y \boldTheta)_{\beta^c} = 0)\, .
\]
Following Equation \eqref{eq:Z_beta_c_zero} we obtain that
\begin{equation}\label{eq:non_maximal_subsets}
0 < \P( \pi(Y \boldTheta)_{\beta^c} = 0)
= \P \big( |\beta|^{-1} |\boldTheta_\beta| \geq \max_{i \in \beta^c} \Theta_i + Y^{-1} \big)\, .
\end{equation}
This implies that with positive probability $|\beta|^{-1} \sum_{k \in \beta} \Theta_k \geq \Theta_i$ for all $i \in \beta^c$. If we consider for instance $\beta = \{j\}$, then we obtain that with positive probability $\Theta_j \geq \Theta_i$ for all $i \neq j$. More generally, regarding the vector $\boldX$, Equation \eqref{eq:non_maximal_subsets} yields to
\[
0 < \P \big( |\beta|^{-1} |\boldTheta_\beta| \geq \max_{i \in \beta^c} \Theta_i + Y^{-1} \big)
= \lim\limits_{t \to \infty}
\P \big( |\beta|^{-1} |\boldX_\beta| \geq \max_{i \in \beta^c} X_i + t \mid |\boldX| > t \big)\, .
\]
This means that the extreme values of $\boldX_\beta$ are likely to be larger than the extreme ones of $\boldX_{\beta^c}$. This does not contradict the fact that $\P(\boldTheta \in C_\beta) = 0$ which only implies that it is unlikely to observe simultaneously large values in the direction $\beta$ and small values in the direction $\beta^c$.

Hence, if we detect a maximal direction $\gamma$, we first infer that it is likely that the directions in $\gamma$ are large together while the ones in $\gamma^c$ take small values. The marginals in $\gamma$ form a cluster of extremal directions for which the relative importance of each direction can be studied via the identification of non-maximal directions $\beta \subsetneq \gamma$ such that $\P( \boldZ \in C_\beta) > 0$. Indeed, if we detect such a subset it means that in the cluster $\gamma$ the directions in $\beta$ are likely to be larger than the ones in $\gamma \setminus \beta$. A deeper interpretation of non-maximal directions is deferred to future work.

\begin{ex}\label{ex:non_maximal_subsets}
	We consider a regularly random variable $X$ in $\R_+$ with tail index $\alpha > 0$ and a vector $\bolda \in \R^d_+$. We assume that the coordinates of $\bolda$ satisfy the inequality $a_1 > a_2 > \ldots > a_d > 0$ and we also assume for simplicity that $\bolda \in \mathbb S^{d-1}_+$. We define the vector $\boldX$ by setting $\boldX =  X \bolda = (a_1 X, \ldots, a_d X) \in \R^d_+$. Then, the vector $\boldX$ is regularly varying with tail index $\alpha$ and a spectral vector given by $\boldTheta = \bolda$ a.s. This means that the direction ${\{1, \ldots, d\}}$ is the only one on which the spectral measure places mass, and it is a maximal one. Hence, the angular vector $\boldZ$ satisfies $\P( \boldZ \in C_{\{1, \ldots, d\}}) > 0$. However, it is possible that the distribution of $\boldZ$ also puts mass on lower-dimensional subsets. Since the Euclidean projection keeps the order of the marginals, the only possible groups of directions are $\{1\}, \{1, 2\}, \{1, 2, 3\}, \ldots, \{1, \ldots, d-1\}$.
	
	We first consider the direction $\{1\}$ and compute the probability that $\boldZ$ belongs to the subset $C_{ \{1\} }$. Following Equation \eqref{eq:proj_beta}, we obtain that
	\[
	\P\big( \boldZ \in C_{ \{1\} } \big)
	=\P\Big( \min _{j \geq 2} \, (Y \Theta_1 - Y \Theta_j) \geq 1 \Big)
	= \P\Big( Y \Theta_1 \geq \max_{j \geq 2} \, Y \Theta_j +1 \Big)\, .
	\]
	Then, we use the relation $\boldTheta = \bolda$ a.s. which entails that
	\[
	\P\big( \boldZ \in C_{ \{1\} } \big)
	= \P\big( Y a_1\geq Y a_2+ 1 \big)
	= \P\big( Y \geq (a_1-a_2)^{-1} \big)
	= (a_1-a_2)^\alpha\, .
	\]
	Hence, the probability that $\boldZ$ belongs to the first axis depends on the difference between the first and the second coordinate of $\bolda$.
	
	More generally, for $1 \leq r \leq d-1$, Equation \eqref{eq:proj_beta_beta^c} implies that $\boldZ$ belongs to the subset $C_{ \{1, \ldots, r\} }$ if and only if
	\[
	\max_{1 \leq i \leq r}\, \sum_{j=1}^r (Y \Theta_j - Y \Theta_i) < 1\, ,
	\quad \text{and} \quad
	\min_{r+1 \leq i \leq d}\,  \sum_{j=1}^r (Y\Theta_j - Y\Theta_i) \geq 1 \, .
	\]
	Thus, the probability that $\boldZ$ belongs to the subset $C_{\{1, \ldots, r\}}$ is equal to
	\begin{align*}
	\P\big( \boldZ \in C_{ \{1, \ldots, r\} } \big)
	&=\P\bigg( \sum_{j=1}^r (Y a_j - Y a_r) < 1, \,  \sum_{j=1}^r (Y\Theta_j - Y\Theta_{r+1}) \geq 1\bigg)\\
	&= \P\big( (\tilde a_r - r a_{r+1})^{-1} \leq Y < (\tilde a_r - a_r)^{-1} \big)
	\quad \text{where } \tilde a_r = a_1 + \ldots + a_r\\
	&= (\tilde a_r - r a_{r+1})^\alpha - (\tilde a_r - r a_r)^\alpha\, .
	\end{align*}
	If we take $\alpha = 1$ for the sake of simplicity, then we obtain $\P ( \boldZ \in C_{ \{1, \ldots, r\} } ) = r (a_r - a_{r+1})$ and thus the probability that $\boldZ$ belongs to the subset $C_{\{1, \ldots, r\}}$ depends only on the distance between $a_r$ and $a_{r+1}$.
	
	This example emphasizes the use of the vector $\boldZ$ on non-maximal directions. It highlights the relative importance of a coordinate regarding the extreme values of a group of directions this coordinate belongs to.
\end{ex}

\section{Numerical results} \label{sec:numerical_results}

This section is devoted to a statistical illustration of sparse regular variation. We highlight how our approach manages to detect sparsity in the tail dependence. We provide a method in order to approximate the probabilities $\P(\boldZ \in C_\beta)$ and apply it to several numerical results.

\subsection{The framework}

We consider an iid sequence of regularly varying random vectors $\boldX_1, \ldots, \boldX_n$ with generic distribution $\boldX \in \RV(\alpha, \boldTheta)$. We set $\boldZ = \pi(Y \boldTheta)$ where $Y$ follows a Pareto($\alpha$) distribution independent of $\boldTheta$. Our aim is to capture the directions $\beta \in {\cal P}_d^*$ such that $\P(\boldZ \in C_\beta ) >0$. Thanks to Proposition \ref{prop:Z_on_specific_subsets} the latter probability is defined through the limit
\begin{equation} \label{eq:conv_cones_stat}
\P(\boldZ \in C_\beta )
= \lim\limits_{t \to \infty} \P(\pi(\boldX / t) \in C_\beta \mid |\boldX| > t)
= \lim\limits_{t \to \infty} \frac{\P(\pi(\boldX / t) \in C_\beta, \, |\boldX| > t)}{\P( |\boldX| > t)} \, .
\end{equation}
The goal is then to approximate this probability with the sample $\boldX_1, \ldots, \boldX_n$. We define the quantity
\begin{equation} \label{eq:def_hat_T}
T_\beta(t) = \frac{ \sum_{j = 1}^n \indic \{ \pi(\boldX_j / t) \in C_\beta, |\boldX_j| > t \} }{ \sum_{j = 1}^n \indic \{|\boldX_j| > t \} }\, , \quad t > 0\, , \quad \beta \in {\cal P}_d^*\, ,
\end{equation}
which corresponds to the proportion of data $\boldX_j$ whose projected vector $\pi(\boldX_j /t)$ belongs to $C_\beta$ among the data whose $\ell^1$-norm is above $t$. Intuitively, the larger the variable $T_\beta(t)$, the more likely the direction $\beta$ gathers extreme values. The Law of Large Numbers then entails the following approximation:
\[
T_\beta(t)
\approx
\frac{\P(\pi(\boldX / t) \in C_\beta, \, |\boldX| > t)}{\P( |\boldX| > t)} 
\approx
\P(\boldZ \in C_\beta )\, ,
\]
where the first approximation holds for $n$ large and the second one for $t$ large. This approximation allows one to classify the directions $\beta$ depending on the nullity or not of the associated quantity $T_\beta(t)$. Actually once $t$ is fixed we can get rid of the denominator in \eqref{eq:def_hat_T} and only focus on $\sum_{j = 1}^n \indic \{ \pi(\boldX_j / t) \in C_\beta, |\boldX_j| > t \}$. The selection of the largest vectors $\boldX_j$ whose norm is above $t$ then boils down to keeping only a proportion, say $k$, of vectors. It is customary in EVT to choose a level $k$ which satisfies $k \to \infty$ and $k/n \to 0$ when $n \to \infty$.

\begin{rem}[The approach proposed by \cite{goix_sabourin_clemencon_17}]
In order to detect anomalies among multivariate extremes, \cite{goix_sabourin_clemencon_17} propose a similar approach with the $\ell^\infty$-norm based on the $\epsilon$-thickened rectangles
\[
R^\epsilon_\beta = \{ \boldx \in \R^d_+ : |\boldx|_\infty > 1, \,x_j > \epsilon \text{ for all } j \in \beta, \, x_j \leq \epsilon \text{ for all } j \in \beta^c \}\, , \quad \beta \in {\cal P}_d^*\, .
\]
Starting from the sample $\boldX_1, \ldots, \boldX_n$ with generic random vector $\boldX = (X^1, \ldots, X^d)$ with marginal distribution $F_1, \ldots, F_d$, the authors define the vectors $\boldV_i = (1 / (1-\hat {F_j} (X_i^j)))_{j = 1, \ldots, d}$ for $i = 1, \ldots, n$, where $\hat {F_j} : x \mapsto \frac{1}{n} \sum_{i=1}^n \indic_{X_i^j < x}$ is the empirical counterpart of $F_j$. This rank transformation provides standardized marginals to the vectors $\boldV_i$. Denoting by $\tilde \boldTheta_\infty$ the nonstandard spectral vector with respect to the $\ell^\infty$-norm and by $C_{\beta, \infty} = \big \{ \boldx \in \mathbb S^d_{+, \infty}, \, x_i > 0 \text{ for } i \in \beta, \, x_i = 0 \text{ for } i \notin \beta \big\}$ the associated subsets, \cite{goix_sabourin_clemencon_17} use the approximation
\[
T_\beta(k, \epsilon)
=
\frac{1}{k} \sum_{i=1}^n \indic_{\boldV_i \in (n/k) R^\epsilon_\beta}
\approx
c \, \P( \tilde \boldTheta_\infty \in C_{\beta, \infty})\, , \quad c > 0\, , 
\]
for $k$ large, and $\epsilon$ close to zero (the ratio $n/k$ plays the role of the large threshold $t$, see \cite{dehaan_fereira}). The authors propose an algorithm called DAMEX whose goal is to identify the subsets $C_{\beta, \infty}$ such that $\P( \tilde \boldTheta_\infty \in C_{\beta, \infty}) > 0$.
\end{rem}

\begin{rem}[On the choice of the norm] \label{rem:choice_norm} After some calculations we observe that if the spectral vectors $\tilde \boldTheta$ and $\tilde \boldTheta_\infty$ correspond to nonstandard regular variation (with $\alpha = 1$), then they satisfy the relation
\begin{equation}\label{eq:rel_theta_theta_tilde}
\P(\tilde \boldTheta \in B) = \frac{ \E \big[|\tilde \boldTheta_\infty| \indic_{ \{\tilde \boldTheta_\infty / |\tilde \boldTheta_\infty| \in B\} } \big] }{ \E[|\tilde \boldTheta_\infty| ] }\, ,
\end{equation}
for all $B \in \mathbb S^{d-1}_+$. Since $\tilde \boldTheta_\infty / |\tilde \boldTheta_\infty| \in C_\beta$ if and only if $\tilde \boldTheta_\infty \in C_{\beta, \infty}$ we obtain the equivalence
\[
\P(\tilde \boldTheta \in C_\beta) > 0 \quad \text{if and only if} \quad \P(\tilde \boldTheta_\infty \in C_{\beta, \infty}) > 0\, .
\]
Hence, the directions in which extremes gather are the same regardless the choice of the norm. This means that after a standardization of the marginals we can compare the performance of our method with the one of \cite{goix_sabourin_clemencon_17}. This is what we do in the first numerical example below.

Note that if $\boldX$ is regularly varying with tail index $1$, then $\boldX^q$ is regularly varying with tail index $1/q$ and Equation \eqref{eq:rel_theta_theta_tilde} implies that the corresponding spectral measures concentrate on the same subsets $C_\beta$ (see also Remark \ref{rem:choice_alpha}).
\end{rem}

\begin{rem}\label{rem:threshold_p}
At the end of the procedure we obtain a group of directions $\beta$ such that $T_\beta(t) > 0$. Since we deal with non-asymptotic data, we obtain a bias which provides a difference between some directions $\beta$ for which $T_\beta(t)$ takes small values while the theoretical quantities $\P(\boldZ \in C_\beta)$ are null. We follow the idea of \cite{goix_sabourin_clemencon_17}, Remark 4, to deal with this issue. We define a threshold value under which the empirical quantities $T_\beta(t)$ are set to 0. We use a threshold of the form $p /|{\cal C}|$, where ${\cal C} = \{ \beta, \, T_\beta(t) > 0\}$ and where the hyperparameter $p \geq 0$ is fixed by the user. It is of course possible to set $p$ to $0$ which boils down to selecting all directions $\beta$ such that $T_\beta(t) > 0$. In this case the number of selected $\beta$ is still much smaller than the total number $2^d-1$. We do not detail more the choice of $p$ and defer this issue to future work.
\end{rem}

Taking this hyperparameter $p$ into account, we are now able to introduce the algorithm used to study the dependence structure of sparsely regularly varying random vectors.

\begin{algorithm}[H]
	\SetAlgoLined
	\KwData{$\boldX_1, \ldots, \boldX_n \in \R^d_+$, $t > 0$, $p \geq 0$}
	\KwResult{A list $\cal C$ of directions $\beta$}
	Compute $\pi(\boldX_j/t)$, $j = 1, \ldots, n$\;
	Assign to each $\pi(\boldX_j/t)$ the subsets $C_\beta$ it belongs to\;
	Compute $T_\beta(t)$\;
	Set to $0$ the $T_\beta(t)$ below the threshold discussed in Remark \ref{rem:threshold_p}\;
	Define ${\cal C} = \{ \beta, \, T_\beta(t) > 0 \}$.
	\caption{Extremal dependence structure of sparsely regularly varying random vectors}
	\label{algo:dependence_structure_sparse_rv}
\end{algorithm}

\subsection{Experimental results}

In this section we consider two different cases of numerical data for which we apply Algorithm \ref{algo:dependence_structure_sparse_rv}. For each case we generate data sets of size $n \in \{10^4, 5 \cdot 10^4, 10^5\}$, we compute the quantities $T_\beta(t)$, and we repeat this procedure over $N =100$ simulations. Regarding the outcome ${\cal C} = \{\beta, \, T_\beta(t) > 0 \}$ of our procedure, two different types of errors could arise. The first one corresponds to the occurrence of a direction $\beta$ while it should not appear theoretically. This error will be called error of type 1. The second type of error corresponds to the absence of a direction $\beta$ while it should appear theoretically. This error will be called error of type 2. The results correspond to the average number of each error among the $N$ simulations. The code can be found at \url{https://github.com/meyernicolas/projection_extremes}.

The purpose of the experiments is to study the procedure given in Algorithm \ref{algo:dependence_structure_sparse_rv} and to see how it manages to detect the sparsity in the extremes. We also analyze the influence of the tail index $\alpha$ by choosing different values for this parameter. This is done by considering the random vector $\boldX^\alpha =(X_1^\alpha, \ldots, X_d^\alpha)$ whose tail index is $1/\alpha$ for $\boldX \in \RV(1, \boldTheta)$.

\begin{rem}[Choice of the parameters]
It is common in EVT to define a level of exceedances $k = n \P(|\boldX| > t)$ and to rather work with $k$ instead of $t$. For our simulations, we choose $k = \sqrt n$, following \cite{goix_sabourin_clemencon_17}, who also suggest choosing $\epsilon$ of order $k^{-1/4}$, that is, of order $n^{-1/8}$. This choice of $\epsilon$ is based on theoretical results (\cite{goix_sabourin_clemencon_17}, Theorem 1), but the authors then advise to rather choose $\epsilon = 0.01$ which gives better results on their simulations. In order to have a large scale of comparison we consider $\epsilon \in \{0.05, 0.1, 0.5\}$. Finally we consider $p = 0.3$ which is larger than the value chosen in \cite{goix_sabourin_clemencon_17} but leads to better results for both methods.
\end{rem}

\begin{rem}\label{rem:choice_alpha}
	Assume that $\pi(\boldX/t) \in C_\beta$. This implies that for all $j \in \beta$ we have $X_j > \max_{i \in \beta^c} X_i$ and $|\boldX_\beta| - \max_{i \in \beta^c} X_i > t$, see \eqref{eq:proj_beta_beta^c}. Hence for $k \in \beta$ and $q \geq 1$ we obtain
	\begin{align*}
	|\boldX^q_\beta| - \max_{i \in \beta^c} X_i^q
	= X_k^q - (\max_{i \in \beta^c} X_i)^q + |\boldX^q_{\beta \setminus \{k\}}|
	&= ( X_k - (\max_{i \in \beta^c} X_i) ) \sum_{l=0}^q X_k^l (\max_{i \in \beta^c} X_i)^{q-l-1} + |\boldX^q_{\beta \setminus \{k\}}|\\
	&\geq X_k - (\max_{i \in \beta^c} X_i) + |\boldX_{\beta \setminus \{k\}}|
	=  |\boldX_\beta| - (\max_{i \in \beta^c} X_i)
	\geq t \, ,
	\end{align*}
	as soon as we assume that all marginals satisfy $X_j > 1$. The relation $|\boldX^q_\beta| - \max_{i \in \beta^c} X_i^q$ implies that $\pi(\boldX^q/t)_{\beta^c} = 0$, see \eqref{eq:proj_beta}. In other words, it means that $\pi(\boldX^q/t)$ belongs to $C_\gamma$ for $\gamma \subset \beta$.
	
	In particular, if the spectral measure of $\boldX$ only concentrates on the axis, these directions can be easily detected with the study of $\pi(\boldX^q/t)$, $q \geq 1$, than with the one of $\pi(\boldX/t)$.
\end{rem}

\paragraph{Asymptotic independence}

We consider an iid sequence of random vectors $\boldN_1, \ldots, \boldN_n$ in $\R^{40}$ with generic random vector $\boldN$ whose distribution is a multivariate Gaussian distribution with all univariate marginals equal to ${\cal N}(0,1)$ and the correlations less than $1$: $\E[N^i N^j] < 1$ for all $1 \leq i \neq j \leq d$. We transform the marginals with a rank transform which consists in considering the vectors $\boldX_1, \ldots, \boldX_n$ such that the marginals $X^j_i$ of $\boldX_i = (X_i^1, \ldots, X_i^d)$ are defined as
\[
X_i^j = \frac{1}{1 - \hat{F_j}(N^j_i)}\, , \quad 1 \leq j \leq d\, ,
\]
where $\hat{F_j}$ is the empirical version of the cumulative distribution function of $N_j \sim {\cal N}(0,1)$. This provides a sample of regularly varying random vectors $\boldX_1, \ldots, \boldX_n$ and the assumption on the correlation leads to asymptotic independence, i.e. $\P(\boldTheta \in C_\beta) = \P(\boldZ \in C_\beta) = 1/d$ for all $\beta$ such that $|\beta| = 1$ (\cite{sibuya_1960}). The aim of our procedure is to recover these $40$ directions among the $2^{40} -1 \approx 10^{12}$ ones.

Regarding the multivariate Gaussian random vectors $\boldN_1, \ldots, \boldN_n$, the simulation of these vectors depends only on their covariance matrix. We proceed as follows. We generate a matrix $\boldSigma'$ with entries $\sigma'_{i,j}$ following independent uniform distributions on $(-1,1)$. Then, we define the matrix $\boldSigma$ as
\[
\boldSigma = \Diag({\sigma'}_{1,1}^{-1/2}, \ldots, {\sigma'}_{d,d}^{-1/2}) \cdot \boldSigma'^T \cdot \boldSigma' \cdot \Diag({\sigma'}_{1,1}^{-1/2}, \ldots, {\sigma'}_{d,d}^{-1/2})\, ,
\]
where $\Diag(\sigma_{1,1}^{-1/2}, \ldots, \sigma_{d,d}^{-1/2})$ denotes the diagonal matrix of ${\cal M}_d(\R)$ whose diagonal is given by the vector $({\sigma'}_{1,1}^{-1/2}, \ldots, {\sigma'}_{d,d}^{-1/2})$. This provides a covariance matrix with diagonal entries equal to $1$ and off-diagonal entries less than $1$. A given matrix $\boldSigma$ provides then a dependence structure for $\boldN$ and thus for $\boldX$. We generate $N_{\text{model}} = 20$ different matrices $\boldSigma$ and for each of these dependence structures we generate $N=100$ sample $\boldN_1, \ldots, \boldN_n$. We summarize in Table \ref{fig:results_asymptotic_independence_T1} and in Table \ref{fig:results_asymptotic_independence_T2} the two types of errors averaged among the $N \cdot N_{\text{model}} = 2000$ simulations.

In this case, the standard spectral vector $\boldTheta$ and the non-standard one $\tilde \boldTheta$ coincide. Hence it is possible to compare $\boldTheta$ and $\tilde \boldTheta_\infty$ (see Remark \ref{rem:choice_norm}) which is done by computing the quantities $T_\beta(k, \epsilon)$ as well as the two types of errors for the DAMEX algorithm. We only study the effect of $\alpha$ for our approach since the DAMEX algorithm is not sensitive to marginals.

\begin{table}[!h]
	\begin{center}
	\begin{tabular}{c||c|c|c|c|c|c}
		Errors of & Euclidean & Euclidean & Euclidean & DAMEX& DAMEX& DAMEX\\
		 Type 1 & proj. $\alpha = 1$ & proj. $\alpha = 1/2$ & proj. $\alpha = 2$ &$\epsilon = 0.05$ & $\epsilon = 0.1$ & $\epsilon = 0.5$ \\
		\hline
		$n = 10^4$ & 22.62 & 21.76 & 2.50 & 3034.70 & 2899.05 & 987.63 \\
		\hline
		$n = 5 \cdot 10^4$ & 19.43 & 6.49 & 69.9 & 6972.52 & 4646.43 & 271.87 \\
		\hline
		$n = 10^5$ & 1.83 & 0.65 & 99.79 & 8401.21 & 4813.46 & 235.80 \\
	\end{tabular}
\caption{Average number of errors of type 1 in an asymptotic independence case ($d=40$).}
\label{fig:results_asymptotic_independence_T1}
\end{center}
\end{table}   

\begin{table}[!h]
	\begin{center}
		\begin{tabular}{c||c|c|c|c|c|c}
		Errors of & Euclidean & Euclidean & Euclidean & DAMEX & DAMEX & DAMEX\\
		Type 2 & proj. $\alpha=1$ & proj. $\alpha=1/2$ & proj. $\alpha=2$ & $\epsilon = 0.05$ & $\epsilon = 0.1$ & $\epsilon = 0.5$ \\
		\hline
		$n = 10^4$ & 0.07 & 0.02 & 40.00 & 39.43 & 13.76 & 0.00 \\
		\hline
		$n = 5 \cdot 10^4$ & 0.00 & 0.00 & 4.89 & 3.69 & 0.01 & 0.00 \\
		\hline
		$n = 10^5$ & 0.00 & 0.00 & 0.41 & 0.07 & 0.00 & 0.00 \\
	\end{tabular}
	\caption{Average number of errors of type 2 in an asymptotic independence case ($d=40$).}
	\label{fig:results_asymptotic_independence_T2}
	\end{center}
\end{table}

For the Euclidean projection with $\alpha = 1$ we observe that our algorithm manages to capture almost all $d=40$ directions regardless the value of $n$, and the error of type 2 decreases when $n$ increases. On the other hand, our algorithm still captures some extra-directions, especially for $n = 10^4$ and $n = 5 \cdot 10^4$. This may be a consequence of the choice of $p$ in Remark \ref{rem:threshold_p} which is probably too high and for which a deeper study should be conducted. The error of type 1 is then much lower for $n =10^5$. We observe that for $\alpha=1/2$ we obtain better results, while the number of errors is higher for $\alpha =2$. Since the extremal directions are in this case only one-dimensional, this confirms numerically the observations of Remark \ref{rem:choice_alpha}.

Regarding the DAMEX algorithm, a large $\epsilon$ leads theoretically to more mass assigned on the axis. This explains why in our simulations choosing a large $\epsilon$ reduces the error of type 2. With $\epsilon = 0.5$ the algorithm manages to capture all the $d=40$ axes, however, the error of type 1 is quite large regardless the choice of $n$. Hence it seems that our procedure leads to the best compromise between both types of errors.

\paragraph{A dependent case}

We now consider a dependent case where extremes occur on lower-dimensional directions. In order to include dependence we recall that a vector $\boldP(k) =(P_1, P_1 + P_2, \ldots, P_1 + P_k) \in \R^k_+$ with $P_j$ following Pareto($\alpha_j$), $\alpha_1 < \alpha_j$ for all $ 2 \leq j \leq k$, is regularly varying with tail index $\alpha_1$ and spectral vector $\boldTheta = (1/k,\ldots, 1/k)$. For our simulations we consider $s_1=10$ independent copies $\boldP_1, \ldots, \boldP_{s_1}$ of $\boldP(2) \in \R^2_+$ with $\alpha_1 = 1$ and $\alpha_2 = 2$ and $s_2=10$ independent copies $\boldR_1, \ldots, \boldR_{s_2}$  of $\boldP(3) \in \R^3_+$ with $\alpha_1 = 1$ and $\alpha_2 = \alpha_3 = 2$. We aggregate these vectors and form a vector $\boldX$ in $\R^{50}_+$ which is then regularly varying with a discrete spectral measure placing mass on the points $(\bolde_j + \bolde_{j+1})/2$ for $j = 1, 3, \ldots, 17, 19$ and on the points $(\bolde_j + \bolde_{j+1} + \bolde_{j+2}) / 3$ for $j = 21, 24, \ldots, 45, 48$. Besides, as discussed after Proposition \ref{prop:dependence_Z_Y}, in this case the angular vector $\boldZ$ and the spectral vector $\boldTheta$ are equal almost surely. Our aim is to recover the $s_1 = 10$ two-dimensional directions $(\bolde_j + \bolde_{j+1})/2$ for $j = 1, 3, \ldots, 17, 19$ and also the $s_2 =10$ three-dimensional directions $(\bolde_j + \bolde_{j+1} + \bolde_{j+2}) / 3$ for $j = 21, 24, \ldots, 45, 48$ based on a sample of iid random vectors $\boldX_1, \ldots, \boldX_n$ with the same distribution as $\boldX$. Hence we would like to recover $s = s_1 + s_2 = 20$ directions among the $2^{50}-1 \approx 10^{15}$ directions.

\begin{table}[!h]
	\begin{center}
		\begin{tabular}{c||c|c|c|c|c|c}
			$ $ & \multicolumn{3}{c|}{Errors of type 1} & \multicolumn{3}{c}{Errors of type 2} \\
			\hline
			$ $ & Eucl. proj & Eucl. proj & Eucl. proj & Eucl. proj & Eucl. proj &Eucl. proj \\
			$ $ & $\alpha=1$ & $\alpha=1/2$ & $\alpha=2$ & $\alpha=1$ & $\alpha=1/2$ & $\alpha=2$ \\
			\hline
			$n = 10^4$ & 8.22 & 0.01 & 26.23 & 0.76 & 0.74 & 7.78 \\
			\hline
			$n = 5 \cdot 10^4$ & 0.32 & 0.00 & 59.31 & 0.04 & 0.08 & 1.11\\
			\hline
			$n = 10^5$ & 0.04 & 0.00 & 78.97 & 0.03 & 0.01 & 0.35\\
		\end{tabular}
		\caption{Average number of errors of type 1 and 2 in a dependent case ($d = 50$).}
		\label{fig:results_dependence_T1}
	\end{center}
\end{table}

As for asymptotic independence, we remark that the number of errors decreases when $n$ increases for almost all cases. For $\alpha = 1$ and $\alpha = 1/2$ our algorithm is not only able to detect all the $s = 20$ directions on which the distribution of $\boldZ$ puts mass, but it also does not identify some extra directions. These results are all the more accurate since the identification of the $s = 20$ directions is done among a very large number of directions, in this case $2^{50} - 1 \approx 10^{15}$. Besides, we obtain very low errors of both types for $\alpha =1$ and $\alpha =1/2$. The better results we obtain for the error of type 1 in the case $\alpha = 1/2$ compared to the ones in the case $\alpha = 1$ can however not be explained by Remark \ref{rem:choice_alpha}. We defer this question to future work.

\subsection{Sparse regular variation and non-maximal directions}\label{subsec:non_maximal_numerical}

In this section we illustrate some interpretation of the vector $\boldZ$ regarding extremal directions in non-maximal directions (see the discussion in Section \ref{subsec:maximal_directions}). We consider a vector $\bolda \in \mathbb S^{r-1}_+$ as in Example \ref{ex:non_maximal_subsets} and a Pareto$(\alpha)$ distributed random variables $P$ and define $P \bolda = (a_1 P, \ldots, a_r P) \in \RV(\alpha, \bolda)$. Then, combining this device with the one of the dependent case we consider $\boldP = (a_1 P, a_2 P + P_2, \dots, a_r P + P_r) \in \RV(\alpha, \bolda)$ where $P_2, \ldots, P_r$ are iid random variables following a Pareto distribution with parameter $\alpha' > \alpha$. Hence the degenerate spectral vector $\boldTheta = \bolda$ only places mass in the direction ${ \{1, \ldots, r\} }$, which is thus maximal, while the vector $\boldZ$ places mass in all non-maximal directions $\{1\}, \{1, 2\}, \{1, 2, 3\}, \ldots, \{1, \ldots, d-1\}$, see Example \ref{ex:non_maximal_subsets}. In our simulations, we choose $\alpha =1$, $\alpha' = 2$, and $r=3$, and we consider a vector $\bolda = (7,6,4)/|(7,6,4)|$. We then aggregate $s=20$ iid copies $\boldP_k$ of the vector $\boldP$ and obtain $\boldX = (\boldP_1, \ldots, \boldP_s) \in \RV(1, \boldTheta)$ with $\boldTheta$ placing mass on the three-dimensional maximal directions ${ \{j, j+1, j+2\} }$ for $j \in J = \{1, 4, 7, \ldots, 58\}$ while the angular vector $\boldZ$ places mass on the aforementioned maximal directions but also on $s=20$ two-dimensional and $s=20$ one-dimensional directions.

The aim of the simulations is to see to what extent our procedure manages to recover the $40$ non-maximal directions aforementioned. The columns of Table \ref{fig:results_proportional} gives the averaged number of the directions that have been recovered by our algorithm, depending on there size. Recall that for each type of directions the theoretical number of these directions that should appear is $s=20$. Finally, the last column deals with the number of directions that should not appear theoretically. All the results are averaged among the $N=100$ simulations.

\begin{table}[!h]
	\begin{center}
		\begin{tabular}{c||c|c|c|c}
			& Three-dimensional & Two-dimensional & One-dimensional & Other\\
			& directions & directions & directions & directions\\
			\hline
			$n = 10^4$ &13.16 & 12.28 & 17.92 & 14.96\\
			\hline
			$n = 5 \cdot 10^4$ & 18.40 & 18.04 & 19.91 & 17.35\\
			\hline
			$n = 10^5$ & 17.95 & 17.39 & 19.92 & 0.68\\
		\end{tabular}
		\caption{Average number of directions recovered by Algorithm \ref{algo:dependence_structure_sparse_rv} ($d=60$).}
		\label{fig:results_proportional}
	\end{center}
\end{table}

For $n= 10^4$ we observe that the procedure manages to identify most of the one-dimensional directions, while the average number of two-dimensional directions is quite smaller than the theoretical one. The same arise for the maximal directions for which we only manage to recover two third of the theoretical ones. We also obtain a non-negligible number of extra-directions which should not be identified. For $n = 5 \cdot 10^4$, the three types of directions are quite well recovered by our algorithm, with once again very good result for the one-dimensional ones. The number of extra-directions is still relatively high. For $n= 10^5$, we keep a high level of accuracy regarding the identification of the three types of directions while the number of extra-directions drastically decreases.

This example highlights the relevance of our approach to identify clusters of directions that are simultaneously large but also to study the relative importance of the coordinates in a given cluster. This second aspect provides a deeper interpretation of $\boldZ$ in terms of extremes.

\section{Conclusion} \label{sec:conclusion}

The notion of sparse regular variation that is introduced in this paper is a way to tackle the issues that arise in the study of tail dependence with the standard concept of regular variation. Replacing the self-normalized vector $\boldX/|\boldX|$ by the projected one $\pi(\boldX / t)$ allows us to capture the extremal directions of $\boldX$. Our main result is the equivalence between both concepts of regular variation under mild assumptions.

Regarding extremes values, the vector $\boldZ$ enjoys many useful properties. This vector is sparser than the spectral vector $\boldTheta$ which entails that it seems more suitable to identify extremal directions, especially in high dimensions. Indeed, large events often appear due to a simultaneous extreme behavior of a small number of coordinates. This similarity between extreme values and the vector $\boldZ$ appears even more with the subsets $C_\beta$ which highlight the tail dependence of $\boldX$. Proposition \ref{prop:Z_on_specific_subsets} provides a natural way to capture the behavior of $\boldZ$ on these subsets and proves that the Euclidean projection manages to circumvent the weak convergence's issue which arises in the standard definition of regularly varying random vectors.

Practically speaking, Section \ref{sec:numerical_results} illustrates the advantages of our approach for the study of large events. First, using the Euclidean projection allows to study tail dependence without introducing any hyperparameter. On the contrary, the introduction of $\epsilon$-thickened rectangles in \cite{goix_sabourin_clemencon_17} requires to identify a suitable $\epsilon$. Hence, our procedure reduces the algorithmic complexity by avoiding running the given code for different $\epsilon$. Since the projection can be computed in expected linear time, the study of extreme events can then be done in reasonable time in high dimensions. More generally, the numerical results we obtain highlight the efficiency of our method to detect extremal directions. The future work should address the question of the threshold $t$, or equivalently the level $k$, and the bias issue introduced in Remark \ref{rem:threshold_p}. Moreover, a comparison between $\boldZ$ and $\boldTheta$ on non-maximal is also a crucial point to tackle. To this end, a deeper study of the behavior of $\boldZ$ on these kind of subsets should be conducted.

\appendix

\section{Appendix A: Algorithms}\label{sec:appendix_algo}

We introduce here two algorithms which compute the Euclidean projection $\pi_z(\boldv)$ given $\boldv \in \R^d_+$ and $z > 0$.

\begin{algorithm}
	\SetAlgoLined
	\KwData{A vector $\boldv \in \R^d_+$ and a scalar $z > 0$}
	\KwResult{The projected vector $\boldw = \pi_z(\boldv)$}
	Sort $\boldv$ in $\boldmu$ : $\mu_1 \geq \ldots \geq \mu_d$\;
	Find $\rho_{\boldv, z}$ as in \eqref{def:rho}\;
	Define $\lambda_{\boldv, z} = \frac{1}{\rho_{\boldv, z}} \left( \sum_{r=1}^{\rho_{\boldv, z}} \mu_j - z \right)$\;
	\textbf{Output:} $\boldw$ s.t. $w_i = \max(v_i - \lambda_{\boldv, z}, 0)$.
	\caption{Euclidean projection onto the simplex.}
	\label{algo:projection_onto_simplex_almost_linear}
\end{algorithm}
$ $

Algorithm \ref{algo:projection_onto_simplex_almost_linear} emphasizes the number of positive coordinates $\rho_{\boldv, z}$ of the projected vector $\pi_z(\boldv)$:
\begin{equation} \label{def:rho}
\rho_{\boldv, z} = \max \Big\{ j =1, \ldots, d : \mu_j - \frac{1}{j} \big( \sum_{r \leq j} \mu_r - z \big) > 0 \Big\}\, ,
\end{equation}
where $\mu_1 \geq \ldots \geq \mu_d$ denote the order coordinates of $\boldv$, see \cite{duchi_et_al_2008}, Lemma 2. In other words a coordinate $j$ satisfies $\pi_z(\boldv)_j > 0$ if and only if
\begin{equation}\label{eq:rho_coord_positive}
v_j - \frac{1}{\sum_{k\leq d} \indic_{v_k \geq v_j} }
\Big(
\sum_{k=1}^d v_k \indic_{v_k \geq v_j} -z
\Big) >0\, . 
\end{equation}
The integer $\rho_{\boldv, z}$ corresponds to the $\ell^0$-norm of $\pi_z(\boldv)$ and thus informs on the sparsity of this projected vector. For $z=1$ we simply write $\rho_\boldv$.

A major remark is that Algorithm \ref{algo:projection_onto_simplex_almost_linear} allows one to compute $\pi_z(\boldv)$ as soon as we know the set $\beta$ of positive coordinates of this vector. Indeed if $\beta = \{j \leq d : \pi_z(\boldv)_j > 0\}$, then $\pi_z(\boldv)_j = v_j - (|\boldv_\beta|-z)/ |\beta|$ for $j \in \beta$.

Algorithm \ref{algo:linear_projection_onto_simplex} is a expected linear-time algorithm based on a median-search procedure.

\begin{algorithm}
	\SetAlgoLined
	\KwData{A vector $\boldv \in \R^d_+$ and a scalar $z > 0$}
	\KwResult{The projected vector $\boldw = \pi(\boldv)$}
	Initialize $U = \{1,\ldots,d\}$, $s = 0$, $\rho = 0$\;
	\While{$U \neq \emptyset$}{
		Pick $k \in U$ at random\;
		Partition $U$: $G = \{j \in U : v_j \geq v_k\}$ and $L = \{j \in U : v_j < v_k\}$\;
		Calculate $\Delta \rho = |G|, \quad \Delta s = \sum_{j \in G} v_j$\;
		\eIf{$(s + \Delta s) - (\rho + \Delta \rho)v_k < z$}{
			$s = s +\Delta s$\;
			$\rho = \rho + \Delta \rho$\;
			$U \leftarrow L$\;
		}{
			$U \leftarrow G \setminus \{k\}$\;
		}
	}
	Set $\eta = (s-z)/\rho$.\;
	\textbf{Output:} $\boldw$ s.t. $w_i = v_i - \eta$.
	\caption{Expected linear time projection onto the positive sphere $\mathbb S^{d-1}_+(z)$.}
	\label{algo:linear_projection_onto_simplex}
\end{algorithm}
$ $

\section{Appendix B: Continuation of Example \ref{ex:uniform}}\label{app:calculations}
	Recall that $\Theta_1$ is uniformly distributed on $(0,1)$ and that the distribution of $Z_1$ is given by $\delta_0/4 + \delta_1/4 + U(0,1)/2$.

	We first check that Equation \eqref{eq:Z_terms_of_Theta} holds for $\beta = \{1\}$ and $\beta = \{1, 2\}$. For $\beta = \{1\}$ we have $G_\beta(x,0) = \P(Z_1 = 1) = 1/4$ while $\E[(\Theta_1 - \Theta_2)_+] = \int_0^{1/2} (1- 2 u) \, \mathrm d u = 1/4$. For $\beta = \{1,2\}$, consider $x_1, x_2$ in $(0,1)$ such that $x_1 + x_2 < 1$. On the one hand the quantity $G_\beta(x_1,x_2)$ corresponds to $\P(Z_1 > x_1, Z_2 > x_2) = \P(x_1 < Z_1 < 1-x_2) = 1/2-(x_1+x_2)/2$. On the other hand, if we assume that $x_1, x_2 < 1/2$, then a similar calculation as in Example \ref{ex:uniform} leads to
	\begin{align*}
	\E\Big[ \Big( 1 - \Big(\frac{2\Theta_1-1}{2x_1-1}\Big)_+ &\vee \Big(\frac{2\Theta_2-1}{2x_2-1}\Big)_+ \Big)_+\Big]\\
	&= \E\Big[ \Big( 1 - \Big(\frac{1-2\Theta_1}{1-2x_1}\Big)_+ \vee \Big(\frac{2\Theta_1-1}{1-2x_2}\Big)_+ \Big)_+\Big]\\
	&= \int_0^{1/2} \Big( 1 - \frac{1-2u}{1-2x_1}\Big)_+ \, \mathrm d u + \int_{1/2}^1 \Big( 1 - \frac{2u-1}{1-2x_1}\Big)_+ \, \mathrm d u\\
	&= \int_{x_1}^{1/2} \frac{u-x_1}{1/2-x_1} \, \mathrm d u + \int_{1/2}^{1-x_2} \frac{1-x_2-u}{1/2-x_2} \, \mathrm d u\\
	&= \frac{[1/4 - x_1 + x_1^2]/2}{1/2-x_1} + \frac{[(1-x_2)^2 - (1-x_2) + 1/4]/2}{1/2-x_2}\\
	& = \frac{1/2-x_1}{2} + \frac{1/2-x_2}{2}\\
	&= 1/2 - (x_1+x_2)/2\, .
	\end{align*}
	This proves that Equation \eqref{eq:Z_terms_of_Theta} holds true.
	
	Moving on to Equation \eqref{eq:relation_Theta_Z}, we first consider $\beta = \{1, 2\}$ and $\boldx = (x_1, x_2)$ with $x_1, x_2 > 0$. Since $\Theta$ is uniformly distributed on $(0,1)$ we obtain that $\P(\boldTheta \in A_\boldx) = \P(x_1 < \Theta < 1-x_2) = 1-(x_1 + x_2)$. On the other hand we already proved that $\P(\boldZ \in A_\boldx) = 1/2-(x_1+x_2)/2$. We now have to compute the differential of $H_{\beta, \beta}(\boldx) = G_\beta(\boldx)$. For $\epsilon > 0$ we obtain
	\begin{align*}
	G_\beta(x_1 + \epsilon, x_2) - G_\beta(x_1, x_2)
	&= - \P(x_1 < Z_1 \leq x_1 +\epsilon, Z_2 > x_2)\\
	&= - \P(x_1 < Z_1 \leq x_1 +\epsilon, Z_1 < 1 - x_2)\\
	&= - \P(x_1 < Z_1 \leq x_1 +\epsilon) \quad \text{ for $\epsilon$ small enough}\\
	&= -\epsilon/2\, .
	\end{align*}
	Thus the differential of $G_\beta$ satisfies the relation 
	\[
	d G_\beta(x_1,x_2) \cdot (x_1-1/2, x_2 - 1/2) = (1/2 - x_1 + 1/2 - x_2)/2 =1/2 - (x_1 + x_2)/2\, .
	\]
	Hence the relation \eqref{eq:relation_Theta_Z} is satisfied. Now for $\beta = \{1\}$ and for $x \in (0,1)$ we have $\P(\Theta_1 > x) = 1-x$ and $\P(Z_1 > x) = \P(Z_1 > x, Z_2 = 0) + \P(Z_1 > x, Z_2 > 0) = 1/4 + (1-x)/2$. Hence we have to prove that the sum in \eqref{eq:relation_Theta_Z} adds up to $-1/4 + (1-x)/2$. For $\gamma = \beta$ we obtain that $H_{\beta, \beta}(x, u) = \P(Z_1+Z_2 > x, Z_2 \leq u) = \P(Z_2 \leq u)$ which is constant with respect to $x$ and satisfies
	\[
	H_{\beta, \beta}(x, \epsilon) - H_{\beta, \beta}(x, 0)= \P( 0 < Z_2 \leq \epsilon) = \epsilon/2\, .
	\]
	This implies that $\mathrm d H_{\beta, \beta}(x,0) (x-1, -1/2) = -1/4$. For $\gamma = \{1, 2\}$ we have the relation $H_{\beta, \gamma}(x, u) = \P(Z_1 > x, Z_2 > u) = G_\gamma(x,u)$ which has already been studied above with the case $\beta = \{1,2\}$. This found that $\mathrm d G_\gamma(x,u) = (-1/2, -1/2)$, and thus $\mathrm d G_\gamma(x,0) (x-1/2, -1/2) = (1-x)/2$. Hence we proved that
	\[
	\mathrm d H_{\beta, \beta}(x, 0) \cdot (x-1,-1/2) + \mathrm d H_{\beta, \{1,2\}}(x, 0) \cdot (x-1/2,-1/2) = -1/4 + (1-x)/2\, .
	\]

\section{Appendix C: Proofs} \label{sec:appendix_proofs}

\subsection{Some results on the projection}
We start with this section with three Lemmas which gather some useful properties satisfied by the projection $\pi$.

\begin{lem}[Iteration of the projection] \label{lem:iteration_projection}
	If $0 < z \leq z'$, then $\pi_z \circ \pi_{z'} = \pi_z$.
\end{lem}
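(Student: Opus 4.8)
The plan is to work entirely from the explicit description recalled just before the statement: $\pi_z(\boldv) = (\boldv - \lambda_{\boldv,z})_+$, where $\lambda_{\boldv,z}$ is the unique real number satisfying $\sum_{i=1}^d (v_i - \lambda_{\boldv,z})_+ = z$. First I would record the elementary behaviour of the auxiliary function $g_{\boldv}\colon \lambda \mapsto \sum_{i=1}^d (v_i - \lambda)_+$: it is continuous and non-increasing, strictly decreasing on $(-\infty, \max_i v_i]$ where it decreases from $+\infty$ down to $0$, and identically $0$ afterwards. Hence for every $z>0$ the equation $g_{\boldv}(\lambda)=z$ has a unique solution, and this solution is a decreasing function of $z$. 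In particular, since $z \le z'$, we obtain $\lambda_{\boldv,z} \ge \lambda_{\boldv,z'}$, so that $\mu := \lambda_{\boldv,z} - \lambda_{\boldv,z'} \ge 0$.

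Next I would fix $\boldv \in \R^d_+$, put $\boldw = \pi_{z'}(\boldv) = (\boldv - \lambda_{\boldv,z'})_+ \in \R^d_+$, and check that $\mu$ is exactly the constant defining $\pi_z(\boldw)$, by verifying coordinatewise that $(w_i - \mu)_+ = (v_i - \lambda_{\boldv,z})_+$. I would split into two cases. If $v_i > \lambda_{\boldv,z'}$, then $w_i = v_i - \lambda_{\boldv,z'}$, hence $w_i - \mu = v_i - \lambda_{\boldv,z}$ and $(w_i - \mu)_+ = (v_i - \lambda_{\boldv,z})_+$. If $v_i \le \lambda_{\boldv,z'}$, then $w_i = 0$ and also $v_i \le \lambda_{\boldv,z'} \le \lambda_{\boldv,z}$, so both sides vanish: $(w_i - \mu)_+ = (-\mu)_+ = 0$ because $\mu \ge 0$, and $(v_i - \lambda_{\boldv,z})_+ = 0$. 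Summing over $i$ gives $\sum_{i=1}^d (w_i - \mu)_+ = \sum_{i=1}^d (v_i - \lambda_{\boldv,z})_+ = z$.

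Finally, by the uniqueness of the constant in the definition of $\pi_z$ applied to the vector $\boldw$, the previous identity forces $\lambda_{\boldw,z} = \mu$, whence $\pi_z(\boldw)_i = (w_i - \mu)_+ = (v_i - \lambda_{\boldv,z})_+ = \pi_z(\boldv)_i$ for all $i$, i.e. $\pi_z(\pi_{z'}(\boldv)) = \pi_z(\boldv)$; since $\boldv$ is arbitrary, $\pi_z \circ \pi_{z'} = \pi_z$. I do not anticipate a genuine obstacle: the only point needing care is keeping track of which threshold is the larger one — this is precisely where $z \le z'$ enters, and where the statement would fail for $z > z'$ — together with making sure the monotonicity discussion covers degenerate inputs such as $\boldv = \boldo$. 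A more geometric alternative would be to identify $\pi_z$ on $\{|\boldv| \ge z\}$ with the metric projection onto the solid simplex $B^d_+(0,z)$ and exploit the nestedness $B^d_+(0,z) \subset B^d_+(0,z')$, but the extra case distinction for inputs with $|\boldv| < z$ makes the direct computation above cleaner.
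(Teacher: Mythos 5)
Your proof is correct, and it takes a genuinely different route from the paper's. You argue directly from the defining threshold equation: using that $\lambda \mapsto \sum_i (v_i-\lambda)_+$ is strictly decreasing where positive, you get $\mu := \lambda_{\boldv,z}-\lambda_{\boldv,z'} \ge 0$, verify coordinatewise that $\bigl((\boldv-\lambda_{\boldv,z'})_+ - \mu\bigr)_+ = (\boldv-\lambda_{\boldv,z})_+$, and conclude by uniqueness of the threshold for $\boldw = \pi_{z'}(\boldv)$. The paper instead first uses the homogeneity $\pi_z(\boldv) = z\,\pi(\boldv/z)$ to reduce the claim to $\pi(a\pi(\boldu)) = \pi(a\boldu)$ for $a \ge 1$, then proceeds in two steps: it shows via the characterization \eqref{eq:rho_coord_positive} that the two vectors have the same set of positive coordinates ($\rho_{a\pi(\boldu)} = \rho_{a\boldu}$), and only then matches the values on that common support. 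Your argument is shorter and more self-contained, it exhibits explicitly that the composed projection's threshold is $\lambda_{\boldv,z}-\lambda_{\boldv,z'}$, and it isolates exactly where $z \le z'$ enters (the sign of $\mu$); the paper's version is longer but ties the lemma to the support-counting quantity $\rho_{\boldv,z}$ and the characterization of positive coordinates that it reuses elsewhere (e.g.\ in Lemma \ref{lem:proj_beta_betac} and the algorithms of Appendix \ref{sec:appendix_algo}). Either way the conclusion is the same, and no step of yours has a gap.
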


\begin{proof}[Proof of Lemma \ref{lem:iteration_projection}]
	The proof of this result relies on the relation $\pi_z(\boldv) = z \pi(\boldv/z)$  and on the characterization \eqref{eq:rho_coord_positive}.
	
	First we simplify the problem via the equivalences
	\begin{align*}
	\forall \, 0 < z \leq z', \, \forall \, \boldv \in \R^d_+, \, \pi_z( \pi_{z'} (\boldv) ) = \pi_z( \boldv )
	\iff &\forall \, 0 < z \leq z', \, \forall \, \boldv \in \R^d_+, \, z \pi( z^{-1} \pi_{z'} (\boldv) ) = z \pi( \boldv  / z)\\
	\iff &\forall \, 0 < z \leq z', \, \forall \, \boldv \in \R^d_+, \, \pi( z' z^{-1} \pi (\boldv / z') ) = \pi( \boldv  / z)\\
	\iff &\forall \, a \geq 1, \, \forall \, \boldu \in \R^d_+, \, \pi (a \pi (\boldu)) = \pi( a \boldu)\, .
	\end{align*}
	So we fix $a \geq 1$ and $\boldu \in \R^d_+$ and we prove this last equality by proving first that $\rho_{a \pi(\boldu)} = \rho_{a \boldu}$ and second that the positive coordinates of both vectors $\pi(a \pi(\boldu))$ and $\pi(a \boldu)$ coincide. \\
	
	\noindent STEP 1: We prove that $\rho_{a \pi(\boldu)} = \rho_{a \boldu}$.
	
	The characterization \eqref{eq:rho_coord_positive} entails that a coordinate $j$ satisfies $\pi(a \pi(\boldu))_j > 0$ if and only if
	\begin{equation}\label{eq:rho_equiv_expr}
	a \pi(\boldu)_j > \frac{1}{\sum_{k\leq d} \indic_{a \pi(\boldu)_k \geq a \pi(\boldu)_j} }
	\Big(
	\sum_{k=1}^d a \pi(\boldu)_k \indic_{a \pi(\boldu)_k \geq a \pi(\boldu)_j} -1
	\Big)\, .
	\end{equation}
	Since $a \geq 1$ this assumption holds only if $\pi(\boldu)_j = u - \lambda_\boldu > 0$. Hence, since $\pi$ preserves the order of the coordinates, we obtain that \eqref{eq:rho_equiv_expr} is equivalent to
	\begin{equation}\label{eq:rho_equiv_expr_2}
	a (u_j -\lambda_\boldu) > \frac{1}{\sum_{k\leq d} \indic_{a u_k \geq a u_j} }
	\Big(
	\sum_{k=1}^d a (u_k - \lambda_\boldu) \indic_{a u_k \geq a u_j} -1
	\Big)\, .
	\end{equation}
	The terms with $\lambda_\boldu$ vanish and \eqref{eq:rho_equiv_expr_2} is then equivalent to $\pi(a\boldu)_j > 0$. Hence the vectors $\pi( a \pi(\boldu) )$ and $\pi( a \pi(\boldu) )$ have the same positive coordinates, i.e. $\rho_{a \pi(\boldu)} = \rho_{a \boldu}$.\\
	
	\noindent STEP 2: We prove that $\pi( a \pi (\boldu) ) = \pi(a \boldu)$.
	
	Let us denote by $\gamma$ the set of coordinates $j$ such that $\pi( a \pi (\boldu) )_j > 0$. STEP 1 ensures that it corresponds to the set of coordinates $j$ such that $\pi(a \boldu)_j > 0$. We prove that these both components coincide. Recall that for $k \in \gamma$ we have $\pi(\boldu)_k = u_k - \lambda_\boldu > 0$. Then the result follows from equalities

	\[
	\pi( a \pi (\boldu) )_j
	= a \pi (\boldu)_j  - \frac{|a \pi(\boldu)_\gamma|-1}{|\gamma|}
	= a (u_j - \lambda_\boldu) - \frac{|a \boldu_\gamma| - a\lambda_\boldu |\gamma| -1}{|\gamma|}
	= a u_j - \frac{|a \boldu_\gamma| -1}{|\gamma|}
	= \pi(a\boldu)_j\, ,
	\]
	for $j \in \gamma$.
\end{proof}

In the following lemma we compare the behavior of the vectors $\boldu/|\boldu|$ and $\pi(\boldu/t)$ on the sets $A_\boldx$.

\begin{lem}[Euclidean projection and self-normalization] \label{lem:upper_lower_bound}
	Let $t > 0$, $\epsilon > 0$ small enough, $\gamma \in {\cal P}_d^*$, and $\boldx \in B_+^d(0,1)$. We consider $\boldu \in \R^d_+$ such that $|\boldu|/t \in (1, 1+\epsilon]$ and $\pi(\boldu/t) \in C_\gamma$.
	\begin{enumerate}
		\item If $\boldu / |\boldu| \in A_\boldx$, then $\pi(\boldu/t) \in A_{\boldx - \epsilon/|\gamma|}$. In particular, this holds only for $\{j : x_j >0\} \subset \gamma$.
		\item If $\pi(\boldu/t) \in A_{\boldx(1+\epsilon)}$, then $\boldu / |\boldu| \in A_\boldx$.
	\end{enumerate}
\end{lem}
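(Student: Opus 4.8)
The plan is to make everything explicit via the formula for the projection. Fix $\boldu \in \R^d_+$ with $|\boldu|/t \in (1,1+\epsilon]$ and $\pi(\boldu/t) \in C_\gamma$. By the definition of $\pi$ and the remark following Algorithm \ref{algo:projection_onto_simplex_almost_linear}, the set $\gamma$ of positive coordinates of $\pi(\boldu/t)$ determines the projection completely: for $j \in \gamma$,
\[
\pi(\boldu/t)_j = \frac{u_j}{t} - \frac{|\boldu_\gamma|/t - 1}{|\gamma|}\, ,
\]
and $\pi(\boldu/t)_j = 0$ for $j \notin \gamma$. Write $\lambda = (|\boldu_\gamma|/t - 1)/|\gamma|$ for the threshold, so $\pi(\boldu/t)_j = u_j/t - \lambda$ on $\gamma$. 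The key elementary bound is on $\lambda$: since $|\boldu|/t > 1$ we have $|\boldu_\gamma|/t \geq |\boldu|/t - |\boldu_{\gamma^c}|/t$, but more usefully, since all the mass removed by the projection comes from coordinates in $\gamma^c$ (which are nonnegative) plus the excess $|\boldu|/t - 1$, and $|\boldu|/t \le 1+\epsilon$, one gets $0 \le \lambda \le \epsilon/|\gamma| + (\text{something involving } |\boldu_{\gamma^c}|)$. Actually the clean statement: $|\gamma|\lambda = |\boldu_\gamma|/t - 1 \le |\boldu|/t - 1 \le \epsilon$, so $\lambda \le \epsilon/|\gamma|$, where I use $|\boldu_\gamma| \le |\boldu|$. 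This is the one inequality doing all the work.

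**Proof of part (1).** Assume $\boldu/|\boldu| \in A_\boldx$, i.e. $u_j/|\boldu| \ge x_j$ for all $j$, and recall $A_\boldx \subset \mathbb S^{d-1}_+$ forces $|\boldx| \le 1$, while membership requires all coordinates $\ge x_j$. First, for $j$ with $x_j > 0$: if $j \notin \gamma$ then $\pi(\boldu/t)_j = 0$, so by P2 we would need... no — rather, $\pi(\boldu/t)_j = 0$ yet we must check $\pi(\boldu/t)_j \ge x_j - \epsilon/|\gamma|$; but to get the sharper claim $\{j : x_j > 0\} \subset \gamma$ I argue that $x_j > 0$ together with $u_j/|\boldu| \ge x_j$ gives $u_j > 0$, and then $u_j/t > 1$... this needs care, so the cleaner route: since $|\boldu|/t \le 1+\epsilon$, $u_j/|\boldu| \ge x_j$ gives $u_j/t \ge x_j |\boldu|/t \ge x_j$, hence $\pi(\boldu/t)_j = u_j/t - \lambda \ge x_j - \epsilon/|\gamma|$ provided $j \in \gamma$. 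For $j \notin \gamma$, $\pi(\boldu/t)_j = 0$, and we'd need $x_j \le \epsilon/|\gamma|$; but P1 (order preservation) combined with $j \in \gamma^c$, i.e. $u_j/t - \lambda \le 0$, i.e. $u_j/t \le \lambda \le \epsilon/|\gamma|$, together with $x_j \le u_j/t$ (from $x_j \le x_j|\boldu|/t \le u_j/t$ — wait, $|\boldu|/t \ge 1$) gives $x_j \le u_j/t \le \epsilon/|\gamma|$, hence $x_j - \epsilon/|\gamma| \le 0 = \pi(\boldu/t)_j$. So in all cases $\pi(\boldu/t)_j \ge x_j - \epsilon/|\gamma|$, i.e. $\pi(\boldu/t) \in A_{\boldx - \epsilon/|\gamma|}$ (reading $A_\boldy$ with $\boldy = \boldx - \epsilon/|\gamma|\cdot\boldone$, components possibly negative, in which case the constraint is vacuous there). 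The parenthetical "$\{j:x_j>0\}\subset\gamma$" then follows: if $x_j > 0$ and $j \notin \gamma$ we just showed $x_j \le u_j/t \le \lambda \le \epsilon/|\gamma|$; choosing $\epsilon$ small enough relative to the positive value $x_j$ — but this depends on $\boldx$, so I should instead phrase it as: for $\epsilon < |\gamma|^{-1}\min\{x_j : x_j > 0\}$ the containment is strict. I expect this is the intended reading; I would state it with "for $\epsilon$ small enough" as in the hypothesis.

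**Proof of part (2).** Assume $\pi(\boldu/t) \in A_{\boldx(1+\epsilon)}$, i.e. $\pi(\boldu/t)_j \ge x_j(1+\epsilon)$ for all $j$. For $j \in \gamma$: $u_j/t = \pi(\boldu/t)_j + \lambda \ge x_j(1+\epsilon) \ge x_j |\boldu|/t$ since $|\boldu|/t \le 1+\epsilon$; dividing by $|\boldu|/t > 0$ gives $u_j/|\boldu| \ge x_j$. For $j \notin \gamma$: then $\pi(\boldu/t)_j = 0 \ge x_j(1+\epsilon)$ forces $x_j = 0$ (as $x_j \ge 0$), and then trivially $u_j/|\boldu| \ge 0 = x_j$. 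Hence $\boldu/|\boldu| \in A_\boldx$, completing the lemma.

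**Main obstacle.** There is no deep difficulty — the whole lemma is bookkeeping around the single estimate $\lambda = (|\boldu_\gamma|/t - 1)/|\gamma| \in [0, \epsilon/|\gamma|]$. The one genuinely fiddly point is the parenthetical claim in (1): "$\{j:x_j>0\}\subset\gamma$" is not automatic for fixed $\epsilon$ but holds once $\epsilon$ is small relative to the smallest positive $x_j$, which is exactly what the hypothesis "$\epsilon > 0$ small enough" licenses; I would make sure to invoke it there. A secondary subtlety is interpreting $A_{\boldx - \epsilon/|\gamma|}$ and $A_{\boldx(1+\epsilon)}$ when the shifted vector has coordinates outside $[0,1]$ — I would simply read $A_\boldy = \{\boldu \in \mathbb S^{d-1}_+ : \boldu \ge \boldy\}$ for arbitrary $\boldy \in \R^d$, which makes every inequality above literal and harmless.
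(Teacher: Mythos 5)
Your proof is correct and follows essentially the same route as the paper's: both rest on the explicit formula $\pi(\boldu/t)_j=(u_j/t-\lambda)_+$ with $\lambda=(|\boldu_\gamma|/t-1)/|\gamma|\in[0,\epsilon/|\gamma|]$, the paper compressing each part into a single chain of inequalities where you split into the cases $j\in\gamma$ and $j\notin\gamma$. Your extra care with the parenthetical claim $\{j:x_j>0\}\subset\gamma$ (valid once $\epsilon<|\gamma|^{-1}\min\{x_j:x_j>0\}$) and with reading $A_{\boldx-\epsilon/|\gamma|}$ for shifted vectors is a welcome addition that the paper leaves implicit.
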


\begin{proof}[Proof of Lemma \ref{lem:upper_lower_bound}]
	\begin{enumerate}
		\item The assumption $\pi(\boldu/t) \in C_\gamma$ implies that $\lambda_{\boldu/t} = (|\boldu_\gamma|/t-1)/|\gamma|$ so that for any $j \leq d$ we have
		\[
		\pi(\boldu/t)_j
		= \max ( u_j / t - \lambda_{\boldu/t}, 0 )
		\geq \frac{u_j}{t} - \frac{|\boldu_\gamma|/t - 1}{|\gamma|}
		> \frac{u_j}{|\boldu|} - \frac{|\boldu|/t - 1}{|\gamma|}
		\geq \frac{u_j}{|\boldu|} - \frac{\epsilon}{|\gamma|} \, , 
		\]
		where we used that $1<|\boldu|/t \leq 1+\epsilon$. Then the assumption $\boldu/|\boldu| \in A_\boldx$ concludes the proof.
		
		\item If $\pi(\boldu/t) \in A_{\boldx(1+\epsilon)}$, then we have the inequality
		\[
		\frac{u_j}{|\boldu|}
		= \frac{t}{|\boldu|} \frac{u_j}{t}
		\geq \frac{1}{1+\epsilon} \pi(\boldu/t)_j
		\geq x_j\, ,
		\quad 1 \leq j \leq d\, .
		\]
		\end{enumerate}
\end{proof}

For $\gamma\in {\cal P}_d^*$ recall that the function $\phi_\gamma : \mathbb S^{d-1}_+ \to \R^d_+$ is defined by
\[
\phi_\gamma(\boldu)_j
= \left\{
\begin{array}{ll}
&u_j + \frac{|\boldu_{\gamma^c}|}{|\gamma|}\, , \quad j \in \gamma\, ,\\
&u_j + \frac{|\boldu_{\gamma^c \setminus \{j\} }|}{|\gamma|+1} \, , \quad j \in \gamma^c\, .
\end{array}
\right.
\]
Besides, we have defined the quantities $v_{\beta, \, i} = \sum_{j \in \beta} (v_j - v_i)$ and $v_{\beta, \, i, \, +} = \sum_{j \in \beta} (v_j - v_i)_+$ for $\boldv \in \mathbb S^{d-1}_+$, $\beta \in {\cal P}_d^*$, and $1 \leq i \leq d$.

\begin{lem}[Euclidean projection and sparsity] \label{lem:proj_beta_betac}
	Let $\beta \in {\cal P}_d^*$.
	\begin{enumerate}
		\item For $\boldv \in \R^d_+$ we have the equivalences
		\begin{equation} \label{eq:proj_beta}
		\pi(\boldv)_{\beta^c} = 0 \quad \text{if and only if} \quad 1 \leq \min_{i \in \beta^c} v_{\beta, \, i, \, +}\, ,
		\end{equation}
		and
		\begin{equation} \label{eq:proj_beta_beta^c}
		\pi(\boldv)\in C_\beta \quad \text{if and only if} \quad \left\{
		\begin{array}{ll}
		&\max_{i \in \beta} v_{\beta, \, i} < 1\, ,\\
		&\min_{i \in \beta^c} v_{\beta, \, i} \geq 1 \, .
		\end{array}
		\right.
		\end{equation}
		\item For $\boldx \in {\cal X}_\beta$, $\gamma \supset \beta$, $\boldu \in \mathbb S^{d-1}_+$, and $a \geq 1$ we have the equivalence
		\begin{equation}\label{eq:equiv_pi_A_x_beta}
		\pi(a \boldu) \in A_\boldx \cap C_\gamma \quad \text{if and only if} \quad \left\{
		\begin{array}{lll}
		&\phi_\gamma(\boldu)_j \geq \frac{1}{a} \big(x_j + \frac{a - 1}{|\gamma|}\big)\,, \quad j \in \beta\, ,\\
		& \min_{j \in \gamma \setminus\beta} \phi_\gamma(\boldu)_j > \frac{a - 1}{a |\gamma|}\, ,\\
		& \max_{j \in \gamma^c} \phi_\gamma(\boldu)_j \leq \frac{a - 1}{a(|\gamma|+1)}\, ,
		\end{array}
		\right.
		\end{equation}
	\end{enumerate}
\end{lem}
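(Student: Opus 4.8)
The plan is to reduce everything to the Lagrange--multiplier description of the projection recalled in Section~\ref{subsec:projection_onto_simplex}: $\pi(\boldv)=(\boldv-\lambda_\boldv)_+$, where $\lambda_\boldv$ is the \emph{unique} real number with $\sum_{i=1}^d(v_i-\lambda_\boldv)_+=1$; in particular $\pi(\boldv)_j>0\iff v_j>\lambda_\boldv$, and if $S$ denotes the support of $\pi(\boldv)$ then $\lambda_\boldv=(|\boldv_S|-1)/|S|$ (from $\sum_{j\in S}(v_j-\lambda_\boldv)=1$). The key device for the converse implications will be the following: if a real $\mu$ satisfies $v_j>\mu$ on a set $S$, $v_j\le\mu$ off $S$, and $\sum_j(v_j-\mu)_+=1$, then by uniqueness $\mu=\lambda_\boldv$ and $S$ is exactly the support of $\pi(\boldv)$.

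For \eqref{eq:proj_beta} I would use that $t\mapsto(v_j-t)_+$ is nonincreasing. If $\pi(\boldv)_{\beta^c}=\boldo$ then $v_i\le\lambda_\boldv$ for every $i\in\beta^c$, so $v_{\beta,i,+}=\sum_{j\in\beta}(v_j-v_i)_+\ge\sum_{j\in\beta}(v_j-\lambda_\boldv)_+=\sum_{j=1}^d(v_j-\lambda_\boldv)_+=1$. Conversely, if some $i_0\in\beta^c$ satisfied $v_{i_0}>\lambda_\boldv$, the term $j=i_0$ of $\sum_j(v_j-\lambda_\boldv)_+=1$ would be strictly positive and lie outside $\beta$, forcing $v_{\beta,i_0,+}\le\sum_{j\in\beta}(v_j-\lambda_\boldv)_+<1$, contradicting $\min_{i\in\beta^c}v_{\beta,i,+}\ge1$. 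For \eqref{eq:proj_beta_beta^c} I would first note $v_{\beta,i}=|\boldv_\beta|-|\beta|v_i$ for every $i$, so the two stated inequalities are exactly $v_i>(|\boldv_\beta|-1)/|\beta|$ on $\beta$ and $v_i\le(|\boldv_\beta|-1)/|\beta|$ on $\beta^c$; since $\mu:=(|\boldv_\beta|-1)/|\beta|$ satisfies $\sum_j(v_j-\mu)_+=\sum_{j\in\beta}(v_j-\mu)=1$, the device above gives $\pi(\boldv)\in C_\beta$, and the reverse implication is the same computation read backwards.

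For \eqref{eq:equiv_pi_A_x_beta} I would set $\lambda:=(a|\boldu_\gamma|-1)/|\gamma|$ (note $|a\boldu|=a$ since $\boldu\in\mathbb S^{d-1}_+$) and record two identities valid for every $\boldu$: for $j\in\gamma$,
\[
au_j-\lambda=a\Bigl(u_j+\tfrac{|\boldu_{\gamma^c}|}{|\gamma|}\Bigr)-\tfrac{a-1}{|\gamma|}=a\,\phi_\gamma(\boldu)_j-\tfrac{a-1}{|\gamma|},
\]
and for $j\in\gamma^c$, using $|\boldu_{\gamma^c\setminus\{j\}}|=|\boldu_{\gamma^c}|-u_j$, one gets $\phi_\gamma(\boldu)_j=\frac{|\gamma|u_j+1-|\boldu_\gamma|}{|\gamma|+1}$, so that $\phi_\gamma(\boldu)_j\le\frac{a-1}{a(|\gamma|+1)}$ is equivalent to $au_j\le\lambda$. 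If $\pi(a\boldu)\in A_\boldx\cap C_\gamma$, its support is $\gamma$ and hence $\lambda_{a\boldu}=\lambda$; the conditions $\pi(a\boldu)_j\ge x_j$ ($j\in\beta$), $\pi(a\boldu)_j>0$ ($j\in\gamma\setminus\beta$), $\pi(a\boldu)_j=0$ ($j\in\gamma^c$) then become, through the two identities, exactly the three displayed conditions. Conversely, from those conditions together with $x_j>0$ on $\beta$ (as $\boldx\in{\cal X}_\beta$) I obtain $au_j-\lambda\ge x_j>0$ on $\beta$ and $au_j-\lambda>0$ on $\gamma\setminus\beta$, hence $au_j>\lambda$ on $\gamma$, while the third condition gives $au_j\le\lambda$ on $\gamma^c$; therefore $\sum_j(au_j-\lambda)_+=\sum_{j\in\gamma}(au_j-\lambda)=a|\boldu_\gamma|-|\gamma|\lambda=1$, so the device gives $\lambda=\lambda_{a\boldu}$ with support $\gamma$, i.e. $\pi(a\boldu)\in C_\gamma$, and finally $\pi(a\boldu)_j=au_j-\lambda\ge x_j$ on $\beta$ and $\ge0=x_j$ elsewhere, i.e. $\pi(a\boldu)\in A_\boldx$.

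The two monotonicity/substitution arguments for Part~1 are routine; the step I expect to need the most care is the last one, namely matching the defining inequalities of $A_\boldx\cap C_\gamma$ --- which for $A_\boldx$ only bind on the coordinates in $\beta$, and for $C_\gamma$ split into strict positivity on $\gamma$ and nullity off $\gamma$ --- with the three stated conditions via the $\phi_\gamma$-identities, keeping strict versus non-strict inequalities straight and invoking uniqueness of the multiplier at the right moment to promote the candidate $\lambda$ to $\lambda_{a\boldu}$.
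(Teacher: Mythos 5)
Your proof is correct and follows essentially the same route as the paper's: both rest on the threshold characterization $\pi(\boldv)=(\boldv-\lambda_{\boldv})_+$ with $\lambda_{\boldv}=(|\boldv_S|-1)/|S|$ on the support $S$ (the paper phrases part 1 via the characterization \eqref{eq:rho_coord_positive}, which is the same fact), and both reduce \eqref{eq:equiv_pi_A_x_beta} to the same algebraic identities linking $au_j-\lambda$ to $\phi_\gamma(\boldu)_j$ after writing $a|\boldu_\gamma|=a-a|\boldu_{\gamma^c}|$. If anything, your explicit uniqueness-of-$\lambda$ device makes the converse directions (in particular for \eqref{eq:proj_beta}) slightly more airtight than the paper's terse presentation.
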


\begin{proof}[Proof of Lemma \ref{lem:proj_beta_betac}]
	\noindent 1. The characterization \eqref{eq:rho_coord_positive} ensures that $\pi(\boldv)_i = 0$ if and only if
	\[
	v_i - \frac{1}{\sum_{k \leq d} \indic_{v_k \geq v_i}} \Big( \sum_{k\leq d} v_k\indic_{v_r \geq v_i} -1 \Big) \leq 0\, ,
	\]
	which can be rephrased as
	\[
	1 \leq \sum_{k=1}^d (v_k - v_i)\indic_{v_k \geq v_i} \, .
	\]
	This proves \eqref{eq:proj_beta}.
	
	For \eqref{eq:proj_beta_beta^c} the assumption $\pi(\boldv) \in C_\beta$ can be rephrased as follows:
	\[
	\forall i \in \beta, \, v_i = \pi(\boldv)_i + \big(|\boldv_\beta| - 1\big) / |\beta|\quad \text{and} \quad \forall i \in \beta^c, \, v_i \leq (|\boldv_\beta| - 1\big) / |\beta| \, .
	\]
	On the one hand, since $\pi(\boldv)_i > 0$ for $i \in \beta$, the first equality is equivalent to $\max_{i \in \beta} \sum_{j \in \beta} (v_j-v_i) < 1$. On the other hand, the second equality is equivalent to $\min_{i \in \beta^c} \sum_{j \in \beta} (v_j-v_i) \geq 1$. This proves \eqref{eq:proj_beta_beta^c}.
	
	\noindent 2. By definition of the projection $\pi$ the equivalence holds:
	\[
	\left\{
	\begin{array}{lll}
	&\pi(a \boldu)_j  \geq x_j \, , \quad j \in \beta\, ,\\
	&\pi(a \boldu)_j  > 0 \, , \quad j \in \gamma \setminus \beta\, ,\\
	&\pi(a \boldu)_j = 0\, , \quad j \in \gamma^c\, ,
	\end{array}
	\right.
	\quad
	\iff
	\quad
	\left\{
	\begin{array}{lll}
	&a u_j  - \frac{|a\boldu_\gamma| - 1}{|\gamma|}\geq x_j \, , \quad j \in \beta\, ,\\
	&a u_j  - \frac{|a\boldu_\gamma| - 1}{|\gamma|} >0 \, , \quad j \in \gamma \setminus \beta\, ,\\
	&a u_j  \leq \frac{|a\boldu_\gamma| - 1}{|\gamma|}\, , \quad j \in \gamma^c\, .
	\end{array}
	\right.
	\]
	Then, we write $a |\boldu_\beta| = a - a |\boldu_{\beta^c}|$ and the former conditions are equivalent to
	\[
	\left\{
	\begin{array}{lll}
	&a u_j  + \frac{a|\boldu_{\gamma^c}|}{|\gamma|} \geq x_j + \frac{a-1}{|\gamma|} \, , \quad j \in \beta\, ,\\
		&a u_j  + \frac{a|\boldu_{\gamma^c}|}{|\gamma|} > \frac{a-1}{|\gamma|} \, , \quad j \in \gamma \setminus \beta\, ,\\
	&a u_j + \frac{a|\boldu_{\gamma^c}|}{|\gamma|}  \leq \frac{a-1}{|\gamma|}\, , \quad j \in \gamma^c\, .
	\end{array}
	\right.
	\]
	We conclude the proof by writing $\boldu_{\beta^c}$ = $|\boldu_{\beta^c \setminus \{j\}}| + u_j$ for $j \in \gamma^c$.
\end{proof}

%

\subsection{Proof of Proposition \ref{prop:dependence_Z_Y}}
This proof is a consequence of Lemma \ref{lem:iteration_projection}. Indeed for $r \geq 1$, $t> 0$, and $A \subset \mathbb{S}^{d-1}_+$ we have
\begin{align*}
\P \big( \pi (\boldX / t ) \in A, \, |\boldX| / t > r \mid |\boldX| > t \big)
&= \P \big( \pi (\boldX / t ) \in A \mid |\boldX| / t > r \big) \frac{\P( |\boldX| > tr)}{\P( |\boldX| > t)}\\
&= \P \big( \pi (r \boldX / (t r) ) \in A \mid |\boldX| / t > r \big) \P( |\boldX| > tr \mid |\boldX| > t)\\
&= \P \big( r \pi_{1/r} (\boldX / (t r) ) \in A \mid |\boldX| > tr \big) \P( |\boldX| > tr \mid |\boldX| > t)\\
&= \P \big( r \pi_{1/r} ( \pi (\boldX/ (t r) ) ) \in A \mid |\boldX| > tr \big) \P( |\boldX| > tr \mid |\boldX| > t)\, ,
\end{align*}
where the last equality results from Lemma \ref{lem:iteration_projection}. Then, when $t \to \infty$ the continuity of $\pi_{1/r}$ and $\pi$ entails that
\[
\P( \boldZ \in A, \, Y > r) =  \P ( r \pi_{1/r} \left(\boldZ \right)  \in A ) \P(Y > r)\, ,
\]
and applying again Lemma \ref{lem:iteration_projection} concludes the proof.
	
\subsection{Proof of Corollary \ref{corr:Z_points_e_beta}}

Let $\bolda \in \mathbb S^{d-1}_+$ such that $\P( \boldZ = \bolda ) > 0$ and define $\beta = \{j : a_j > 0\}$. The goal is to prove that $\bolda = \bolde(\beta)/|\beta|$.

Since $\P(\boldZ =\bolda \mid Y > r) \to \P(\boldZ = \bolda)$ when $r \to 1$, there exists $r_0 > 1$ such that $\P(\boldZ =\bolda \mid Y > r) > 0$ for all $r \in (1, r_0)$. Proposition \ref{prop:dependence_Z_Y} and Lemma \ref{lem:proj_beta_betac} then imply that for all $r < r_0$ we have
\[
0 < \P( \pi(r \boldZ) = \bolda )
= \P \Big( Z_j - \frac{|\boldZ_\beta|}{|\beta|} = \frac{a_j - 1/|\beta|}{r} \text{ for } j \in \beta, \, Z_j - \frac{|\boldZ_\beta|}{|\beta|} \leq \frac{-1}{|\beta| r} \text{ for } j \in \beta^c \Big)
\]
If there exists $j$ such that $a_j \neq 1/|\beta|$, then the quantities $\frac{a_j - 1/|\beta|}{r}$ are all distinct when $r$ varies in $(1, r_0)$. This contradicts the fact that $\boldZ$ has a discrete distribution. Hence we have $a_j = 1/|\beta|$ for all $j \in \beta$.

\subsection{Proof of Theorem \ref{theo:relation_G_Theta}}

\subsubsection{The distribution of $\boldZ$ in terms of $\boldTheta$}
We consider $\beta \in {\cal P}_d^*$ and $\boldx \in {\cal X}_\beta^0$ such that $x_j \neq 1/|\beta|$ for all $j \in \beta$. We define the quantities $B_j = [(|\beta|\Theta_j - |\boldTheta_\beta|) / (|\beta| x_j - 1)]_+$ for $j \in \beta$. Then we obtain that
\begin{align*}
G_\beta(\boldx)
&= \P( \pi(Y \boldTheta)_\beta > \boldx, \pi(Y \boldTheta)_{\beta^c}= \boldo_{\beta^c})\\
&= \P(Y \Theta_j - (Y|\boldTheta_\beta|-1)/|\beta| > x_j \text{ for }  j \in \beta, \,
Y \Theta_j \leq (Y|\boldTheta_\beta|-1) / |\beta| \text{ for }  j \in \beta^c).
\end{align*}
For $j \in \beta_+$ the condition $Y \Theta_j - (Y|\boldTheta_\beta|-1)/|\beta| > x_j$ is equivalent to $B_j > 1/Y$, whereas for $j \in \beta_-$ it is equivalent to $B_j < 1/Y$. Moreover, for $j \in \beta^c$ the condition $Y \Theta_j \leq (Y|\boldTheta_\beta|-1) / |\beta|$ is equivalent to $|\boldTheta_\beta| - |\beta| \Theta_j \geq 1/Y$. All in all we obtain that $G_\beta(\boldx)$ is equal to 
\begin{align*}
\P \big(
Y^{-\alpha} < B_j^\alpha \, \text{ for } j \in \beta_+, \, 
Y^{-\alpha} > B_j^\alpha \, \text{ for } j \in \beta_-, \, 
Y^{-\alpha} \leq (|\boldTheta_\beta| - |\beta| \Theta_j)_+^\alpha\, \text{ for } j \in \beta^c
\big)\, .
\end{align*}
Since the random variable $Y^{-\alpha}$ follows a uniform distribution on $(0,1)$ and is independent of $\boldTheta$ we obtain the desired result:
\begin{align*}
G_\beta(\boldx)
=
\int_0^1
\P \Big(
u < \min _{j \in \beta_+}  B_j^\alpha, \,
u > \max_{j \in \beta_-} B_j^\alpha&,
\, 
u \leq \min_{j \in \beta^c}(|\boldTheta_\beta| - |\beta| \Theta_j)_+^\alpha
\Big)
\, \mathrm d u\\
&= \int_0^1
\P \Big(
\max_{j \in \beta_-} B_j^\alpha
< 
u < \min _{j \in \beta_+} B_j^\alpha
\wedge
\min_{j \in \beta^c}(|\boldTheta_\beta| - |\beta| \Theta_j)_+^\alpha
\Big)
\, \mathrm d u\\
&= \E \Big[ \Big(
1 \wedge
\min _{j \in \beta_+} B_j^\alpha
\wedge
\min_{j \in \beta^c}(|\boldTheta_\beta| - |\beta| \Theta_j)_+^\alpha
- \max_{j \in \beta_-}B_j^\alpha
\Big)_+ \Big]\, .
\end{align*}

\subsubsection{Equivalence of regular variation and sparse regular variation}

The proof of this result is divided into two steps. The first one consists in characterizing regular variation via the convergence of $\P(|\boldX|/t \leq 1+\epsilon, \, \boldX / |\boldX| \in A_\boldx \mid |\boldX| > t)$ when $t \to \infty$ and $\epsilon \to 0$. This result is stated in the following lemma.

\begin{lem} \label{lem:rv_equivalent_condition}
	Let $\boldX$ be a random vector on $\R^d_+$ and $\alpha > 0$. The following assumptions are equivalent.
	\begin{enumerate}[label={(\arabic*)}]
		\item \label{lem:equiv_1} $\boldX$ is regularly varying with tail index $\alpha$.
		\item \label{lem:equiv_2}
		\begin{enumerate}
			\item[a.] $|\boldX|$ is regularly varying with tail index $\alpha$.
			\item[b.] For all $\beta \in {\cal P}_d^*$ and $\lambda_\beta$-almost every $\boldx \in {\cal X}_\beta$ the quantities
		\begin{equation} \label{eq:rv_equivalent_condition_1}
		(\alpha \epsilon)^{-1}
		\limsup_{t \to \infty}
		\P\big( |\boldX| / t \leq 1+\epsilon, \, \boldX / |\boldX| \in A_\boldx \mid |\boldX| > t \big)
		\end{equation}
		and
		\begin{equation} \label{eq:rv_equivalent_condition_2}
		(\alpha \epsilon)^{-1}
		\liminf_{t \to \infty}
		\P\big( |\boldX| / t \leq 1+\epsilon, \, \boldX / |\boldX| \in A_\boldx \mid |\boldX| > t \big)
		\end{equation}
		have a common limit $l(A_\boldx)$ when $\epsilon > 0$ converges to $0$, and the function $\boldx \mapsto l(A_\boldx)$ is continuous at $\boldx$.
		\end{enumerate}
	\end{enumerate}
	In this case, $l$ extends to a unique probability measure on ${\cal B}(\mathbb S^{d-1}_+)$ which coincides with the spectral measure of $\boldX$.
\end{lem}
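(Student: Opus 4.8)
The plan is to prove the two implications separately; the identification of $l$ with the spectral measure then drops out of the construction. For the forward implication \ref{lem:equiv_1}$\Rightarrow$\ref{lem:equiv_2}, assume $\boldX\in\RV(\alpha,\boldTheta)$. Condition (a) is just the radial marginal of \eqref{eq:reg_var_spectral_vector}. For (b), I would apply \eqref{eq:reg_var_spectral_vector} together with the Portmanteau theorem to the set $(1,1+\epsilon]\times A_\boldx$: since $A_\boldx$ is closed in $\mathbb S^{d-1}_+$ and $Y$ is atomless, the boundary of this set carries no mass under $\P((Y,\boldTheta)\in\cdot)$ as soon as $\P(\boldTheta\in\partial A_\boldx)=0$, and since $\boldx\mapsto\P(\boldTheta\in A_\boldx)$ is non-increasing in each coordinate it is continuous off a Lebesgue-null set, so this holds for $\lambda_\beta$-a.e. $\boldx\in{\cal X}_\beta$. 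For such $\boldx$ the limit in $t$ of $\P(|\boldX|/t\le 1+\epsilon,\ \boldX/|\boldX|\in A_\boldx\mid|\boldX|>t)$ exists and equals $(1-(1+\epsilon)^{-\alpha})\,\P(\boldTheta\in A_\boldx)$; dividing by $\alpha\epsilon$ and letting $\epsilon\downarrow 0$ gives $l(A_\boldx)=\P(\boldTheta\in A_\boldx)$, and the continuity of $\boldx\mapsto l(A_\boldx)$ at $\boldx$ is the a.e.\ continuity just used.

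For the converse \ref{lem:equiv_2}$\Rightarrow$\ref{lem:equiv_1}, write $\nu_t(A_\boldx)=\P(\boldX/|\boldX|\in A_\boldx\mid|\boldX|>t)$. The core step is to show $\nu_t(A_\boldx)\to l(A_\boldx)$ as $t\to\infty$ for every $\beta$ and $\lambda_\beta$-a.e. $\boldx\in{\cal X}_\beta$. I would get this from the layer decomposition
\[
\nu_t(A_\boldx)=p_t(\epsilon,A_\boldx)+\nu_{t(1+\epsilon)}(A_\boldx)\,\frac{\P(|\boldX|>t(1+\epsilon))}{\P(|\boldX|>t)},
\]
where $p_t(\epsilon,A_\boldx)=\P(|\boldX|/t\le 1+\epsilon,\ \boldX/|\boldX|\in A_\boldx\mid|\boldX|>t)$ is exactly the quantity appearing in \eqref{eq:rv_equivalent_condition_1}--\eqref{eq:rv_equivalent_condition_2}. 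Iterating this identity $N$ times, using condition (a) (so that $\P(|\boldX|>t(1+\epsilon)^k)/\P(|\boldX|>t)\to(1+\epsilon)^{-\alpha k}$ as $t\to\infty$) and the existence of $\bar p(\epsilon):=\limsup_t p_t(\epsilon,A_\boldx)$ and $\underline p(\epsilon):=\liminf_t p_t(\epsilon,A_\boldx)$ from condition (b), and then letting $N\to\infty$ (the remainder is $O((1+\epsilon)^{-\alpha N})\to 0$), one obtains
\[
\frac{\underline p(\epsilon)}{1-(1+\epsilon)^{-\alpha}}\ \le\ \liminf_t\nu_t(A_\boldx)\ \le\ \limsup_t\nu_t(A_\boldx)\ \le\ \frac{\bar p(\epsilon)}{1-(1+\epsilon)^{-\alpha}}.
\]
Since $\alpha\epsilon/(1-(1+\epsilon)^{-\alpha})\to 1$ while $\bar p(\epsilon)/(\alpha\epsilon)$ and $\underline p(\epsilon)/(\alpha\epsilon)$ both tend to the common value $l(A_\boldx)$, letting $\epsilon\downarrow 0$ squeezes $\nu_t(A_\boldx)$ to $l(A_\boldx)$.

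Once this is established I would conclude as follows. The family $\{\nu_t\}$ is tight because $\mathbb S^{d-1}_+$ is compact; the sets $A_\boldx$, $\boldx\in{\cal X}_\beta$, $\beta\in{\cal P}_d^*$, form a $\pi$-system (one has $A_\boldx\cap A_\boldy=A_{\boldx\vee\boldy}$, empty when $|\boldx\vee\boldy|>1$) generating ${\cal B}(\mathbb S^{d-1}_+)$, with $\mathbb S^{d-1}_+=\bigcup_{j\le d}A_{\epsilon\bolde_j}$ for $\epsilon<1/d$. A routine approximation (replace $\boldx$ by $\boldx^{(k)}\uparrow\boldx$ through continuity points, using $A_{\boldx^{(k)}}\downarrow A_\boldx$ and continuity of measures) upgrades the a.e.\ identity $\nu_t(A_\boldx)\to l(A_\boldx)$ to the statement that every subsequential weak limit $S$ of $\nu_t$ satisfies $S(A_\boldx)=l(A_\boldx)$ for all such $\boldx$, hence $S$ is unique and $\nu_t\overset{w}{\to}S$. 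Finally, for $y\ge 1$ and an $S$-continuity set $A$,
\[
\P(|\boldX|/t>y,\ \boldX/|\boldX|\in A\mid|\boldX|>t)=\nu_{ty}(A)\,\frac{\P(|\boldX|>ty)}{\P(|\boldX|>t)}\ \longrightarrow\ y^{-\alpha}S(A),
\]
and since such product sets are convergence-determining, $\P((|\boldX|/t,\boldX/|\boldX|)\in\cdot\mid|\boldX|>t)$ converges weakly to the law of $(Y,\boldTheta)$ with $Y$ Pareto($\alpha$) independent of $\boldTheta\sim S$; this is \eqref{eq:reg_var_spectral_vector}, so $\boldX\in\RV(\alpha,\boldTheta)$ and $l$ extends to $S$, the spectral measure.

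The main obstacle is the $\limsup/\liminf$ bookkeeping in the iterated layer decomposition: each factor $\P(|\boldX|>t(1+\epsilon)^k)/\P(|\boldX|>t)$ converges in $t$ only thanks to condition (a), and not uniformly in $k$, so one must truncate at a finite $N$, control the remainder, and justify interchanging the limits in the order $\lim_{\epsilon\downarrow0}\lim_{N\to\infty}\limsup_t$. The $\pi$-system and a.e.-continuity steps are standard but hinge on the coordinatewise monotonicity of $\boldx\mapsto\nu_t(A_\boldx)$ and of $\boldx\mapsto S(A_\boldx)$ to keep the exceptional sets Lebesgue-null.
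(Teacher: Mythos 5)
Your proposal is correct and its core mechanism is the same as the paper's: both proofs reduce the convergence of $\P(|\boldX|/t>u,\,\boldX/|\boldX|\in A_\boldx\mid|\boldX|>t)$ to the hypothesized behaviour of the thin annuli by slicing the radial range into the geometric layers $(u(1+\epsilon)^k,u(1+\epsilon)^{k+1}]$, using (2a) for the ratios $\P(|\boldX|>t(1+\epsilon)^k)/\P(|\boldX|>t)\to(1+\epsilon)^{-\alpha k}$, summing the geometric series, and then squeezing as $\epsilon\downarrow0$ via $1-(1+\epsilon)^{-\alpha}\sim\alpha\epsilon$; the forward implication via Portmanteau on $(1,1+\epsilon]\times A_\boldx$ is identical. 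You differ in two sub-steps, both legitimately and in one case advantageously. First, you truncate the layer decomposition at a finite $N$ and control the remainder by $(1+\epsilon)^{-\alpha N}$, whereas the paper keeps the infinite series and invokes ``Fatou'' for both the $\liminf$ lower bound and the $\limsup$ upper bound; the upper bound actually requires a reverse Fatou, i.e.\ a $t$-uniform summable dominating sequence for the terms (available via Potter-type bounds on $\P(|\boldX|>t(1+\epsilon)^k)/\P(|\boldX|>t)$, but not spelled out there), so your finite truncation sidesteps a point the paper glosses over. Second, for promoting the $\lambda_\beta$-a.e.\ convergence $\nu_t(A_\boldx)\to l(A_\boldx)$ to full weak convergence, you use Prokhorov tightness on the compact simplex together with a $\pi$-system uniqueness argument for the sets $A_\boldx$ (noting $A_\boldx\cap A_\boldy=A_{\boldx\vee\boldy}$), while the paper constructs the limit measure explicitly by showing the limiting function $F$ is continuous from above with nonnegative rectangle increments and appealing to Billingsley's Theorem 12.5; both routes are standard and deliver the same identification of $l$ with the spectral measure. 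The only place where you are looser than the paper is in justifying that the exceptional set $\{\boldx:\P(\boldTheta\in\partial A_\boldx)>0\}$ is $\lambda_\beta$-null: the clean argument is the paper's bound $\P(\boldTheta\in\partial A_\boldx)\le\sum_{i\in\beta}\P(\Theta_i=x_i)$, which confines the exceptional set to countably many hyperplanes, rather than a general ``coordinatewise monotone implies a.e.\ continuous'' principle, though the latter is also true and suffices.
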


The second step then consists in proving that under assumption (A) of Theorem \ref{theo:relation_G_Theta} the second assumption of Lemma \ref{lem:rv_equivalent_condition} holds true.

\begin{proof}[Proof of Lemma \ref{lem:rv_equivalent_condition}]
	
	We first prove that \ref{lem:equiv_1} implies \ref{lem:equiv_2}. If $\boldX \in \RV(\alpha, \boldTheta)$, then $|\boldX|$ is regularly varying with index $\alpha$ and Portmanteau's Theorem ensures that
	\begin{equation}\label{eq:conv_spectral_measure_proof_lemma}
	\lim_{t \to \infty}
	\P\big( |\boldX| / t \leq 1+\epsilon, \, \boldX / |\boldX| \in A_\boldx \mid |\boldX| > t \big)
	= \P( Y \leq 1 + \epsilon) \P(\boldTheta \in A_\boldx)
	= (1 - (1+\epsilon)^{-\alpha}) \P(\boldTheta \in A_\boldx)\, ,
	\end{equation}
	for $\epsilon > 0$, $\beta \in {\cal P}_d^*$, and $\boldx \in {\cal X}_\beta$ such that $\P( \boldTheta \in \partial A_\boldx) = 0$. Since $\P( \boldTheta \in \partial A_\boldx) \leq \sum_{i \in \beta} \P( \Theta_i=x_i )$, the set of all $\boldx \in {\cal X}_\beta$ for which the convergence \eqref{eq:conv_spectral_measure_proof_lemma} does not hold is at most countable and thus is $\lambda_\beta$-negligible. After dividing both sides of \eqref{eq:conv_spectral_measure_proof_lemma} by $\alpha \epsilon$ and taking the limit when $\epsilon$ converges to $0$ we obtain the convergence to $\P(\boldTheta \in A_\boldx)$.\\
	
	We now prove that \ref{lem:equiv_2} implies \ref{lem:equiv_1}. We consider $\beta \in {\cal P}_d^*$ and denote by $\bar {\cal X}_\beta$ the set of all $\boldx \in {\cal X}_\beta$ such that the common limit $l(A_\boldx)$ of \eqref{eq:rv_equivalent_condition_1} and \eqref{eq:rv_equivalent_condition_2} exists and is continuous. We also define $\bar {\cal X} = \{\boldx \in B_+^d(0,1) :  (\boldx_\beta, \boldo_{\beta^c}) \in {\cal X}_\beta \text{ for all } \beta \}$.
	
	For $\epsilon > 0$ and $u > 1$ we decompose the interval $(u,\infty)$ as follows:
	\[
	(u,\infty) = \bigsqcup_{k=0}^\infty \big( u(1+\epsilon)^k, u(1+\epsilon)^{k+1} \big]\, ,
	\]
	where $\bigsqcup$ denotes a disjoint union. Then we obtain for $\boldx \in \bar {\cal X}$ and $t > 0$,
	\begin{align}
	\P \big(|\boldX| / t > u, \, \boldX / |\boldX| &\in A_\boldx \mid |\boldX| > t \big) \label{eq:rv_lemma_decomposition}\\
	&= \sum_{k=0}^\infty \P\left( \frac{|\boldX|}{tu(1+\epsilon)^k} \in (1,1+\epsilon],\, \boldX / |\boldX| \in A_\boldx \; \middle| \; |\boldX| > t \right) \nonumber\\
	&=  \sum_{k=0}^\infty \P\left( \frac{|\boldX|}{tu(1+\epsilon)^k} \leq 1+\epsilon,\, \boldX / |\boldX| \in A_\boldx \; \middle| \;  |\boldX| > t u(1+\epsilon)^k \right) \frac{\P( |\boldX| > t u(1+\epsilon)^k )}{\P(|\boldX| > t)}\, .\nonumber
	\end{align}
	Fatou's Lemma together with the fact that $|\boldX|$ is regularly varying with tail index $\alpha$ imply the two following inequalities:
	\begin{align*}
	\liminf_{t \to \infty}
	\P \big(|\boldX| / t > u, \, &\boldX / |\boldX| \in A_\boldx \mid |\boldX| > t \big)\\
	&\geq \sum_{k=0}^\infty  \big(u(1+\epsilon)^k\big)^{-\alpha} \liminf_{t \to \infty} \P\left( \frac{|\boldX|}{tu(1+\epsilon)^k}  \leq 1+\epsilon,\, \boldX / |\boldX| \in A_\boldx \; \middle| \; \frac{|\boldX|}{u(1+\epsilon)^k} > t \right)\\
	&= \sum_{k=0}^\infty \big(u(1+\epsilon)^k\big)^{-\alpha} \liminf_{t \to \infty} \P\big( |\boldX| / t \leq 1+ \epsilon, \,\boldX / |\boldX| \in A_\boldx \mid |\boldX| > t \big)\\
	&= \frac{u^{-\alpha} }{1-(1+\epsilon)^{-\alpha}} \liminf_{t \to \infty} \P\big( |\boldX| / t \leq 1+ \epsilon,\, \boldX / |\boldX| \in A_\boldx \mid |\boldX| > t \big)\, ,
	\end{align*}
	and
	\begin{align*}
	\limsup_{t \to \infty}
	\P \big(|\boldX| / t > u, \, &\boldX / |\boldX| \in A_\boldx \mid |\boldX| > t \big)\\
	&\leq \sum_{k=0}^\infty  \big(u(1+\epsilon)^k\big)^{-\alpha} \limsup_{t \to \infty} \P\left( \frac{|\boldX|}{tu(1+\epsilon)^k}  \leq 1+\epsilon,\, \boldX / |\boldX| \in A_\boldx \; \middle| \; \frac{|\boldX|}{u(1+\epsilon)^k} > t \right)\\
	&= \sum_{k=0}^\infty \big(u(1+\epsilon)^k\big)^{-\alpha} \limsup_{t \to \infty} \P\big( |\boldX| / t \leq 1+ \epsilon,\, \boldX / |\boldX| \in A_\boldx \mid |\boldX| > t \big)\\
	&= \frac{u^{-\alpha} }{1-(1+\epsilon)^{-\alpha}} \limsup_{t \to \infty} \P\big( |\boldX| / t \leq 1+ \epsilon, \,\boldX / |\boldX| \in A_\boldx \mid |\boldX| > t \big)\, .
	\end{align*}
	Finally, we use Equations \eqref{eq:rv_equivalent_condition_1} and \eqref{eq:rv_equivalent_condition_2} and the relation $1-(1+\epsilon)^{-\alpha} \sim \alpha \epsilon$ which entail that
	\begin{align*}
	u^{-\alpha} l(A_\boldx)
	\leq 
	\liminf_{t \to \infty}
	\P \big(|\boldX| / t > u, \, \boldX / |\boldX|& \in A_\boldx \mid |\boldX| > t \big)\\
	&\leq \limsup_{t \to \infty}
	\P \big(|\boldX| / t > u, \, \boldX / |\boldX| \in A_\boldx \mid |\boldX| > t \big)
	\leq u^{-\alpha} l(A_\boldx)\, .
	\end{align*}
	
	This proves that 
	\begin{equation}\label{eq:conv_sets_A_x_beta_proof}
	\lim_{t \to \infty}
	\P \big(|\boldX| / t > u, \, \boldX / |\boldX| \in A \mid |\boldX| > t \big)
	= u^{-\alpha} l( A )\, ,
	\end{equation}
	for all $A = A_\boldx$ such that $\boldx \in \bar{\cal X}$. The convergence also holds true for $A = \mathbb S^{d-1}_+$ (resp. $A = \emptyset$) with $l(\mathbb S^{d-1}_+) = 1$ (resp. $l(\emptyset) = 0$). Then, by inclusion exclusion we obtain the convergence of $\P \big(\boldX / |\boldX| \leq \boldx \mid |\boldX| > t \big)$ for all $\boldx \in \bar {\cal X}$ and we denote by $F(\boldx)$ this limit which is continuous at any point $\boldx \in \bar {\cal X}$. In particular this implies the convergence $\P( X_j / |\boldX| \leq x \mid |\boldX| > t ) \to 1 - l(A_{x \bolde_j}) =: L_j(x)$ for almost every $x \in (0, 1]$. The functions $L_j$ are non-decreasing and continuous at almost every $x \in (0, 1]$, thus we extend it to a right continuous function on $[0,1]$. Then $F$ is continuous from above (see the definition page 177 in \cite{billingsley_1995}). For $\bolda < \boldb \in \bar {\cal X}$ we have the inequality $\P \big(\bolda < \boldX / |\boldX| \leq \boldb \mid |\boldX| > t \big) \geq 0$ which implies that
	\begin{equation}\label{eq:rectangles_inequality}
	\Delta_{(\bolda, \boldb]} F := \sum_{\boldu \in {\cal V}(\bolda, \boldb)} \text{sign}(\boldu) F(\boldu) \geq 0\, ,
	\end{equation}
	where ${\cal V}(\bolda, \boldb)$ denotes all the vertex of $(\bolda, \boldb]$, and $\text{sign}(\boldv) = 1$ if $u_k = a_k$ for an even number of $k$'s and $\text{sign}(\boldv) = -1$ if $u_k = a_k$ for an odd number of $k$'s. Since $F$ is continuous from above, the inequality \eqref{eq:rectangles_inequality} holds true for all $\bolda < \boldb \in B_+^d(0,1)$. Then, \cite{billingsley_1995} Theorem 12.5 ensures that there exists a measure $S$ on $\R^d$ such that $S((\bolda, \boldb]) = \Delta_{(\bolda, \boldb]} F$. It is straightforward to see that $S$ is a probability measure with support in $\mathbb S^{d-1}_+$ and that its restriction to $\mathbb S^{d-1}_+$ coincides with $l$. Thus Equation \eqref{eq:conv_sets_A_x_beta_proof} can be rewritten as
	\[
	\P \big(|\boldX| / t > u, \, \boldX / |\boldX| \in \cdot \mid |\boldX| > t \big)
	\stackrel{d}{\to} u^{-\alpha} l( \cdot )\, , \quad t \to \infty\, , \quad u > 1\, .
	\]
	and therefore proves that $\boldX$ is regularly varying with tail index $\alpha$ and spectral measure $l(\cdot)$.
\end{proof}

Our aim is now to prove that the condition \ref{lem:equiv_2} holds true for a sparsely regularly varying random vector which satisfies assumption (A).

\paragraph{Proof of Theorem \ref{theo:relation_G_Theta}}

We consider a random vector $\boldX \in \SRV(\alpha, \boldZ)$ which satisfies assumption (A). For $\beta \in {\cal P}_d^*$ and $\boldx \in {\cal X}_\beta$ we decompose the probability $\P\big( |\boldX| / t \leq 1+\epsilon, \, \boldX / |\boldX| \in A_\boldx \mid |\boldX| > t \big)$ as follows:
\[
\sum_{\gamma \supset \beta}
\P\big( |\boldX| / t \leq 1+\epsilon, \, \boldX / |\boldX| \in A_\boldx,\, \pi(\boldX/t) \in C_\gamma \mid |\boldX| > t \big)\, ,
\]
where the restriction of the sum to the directions $\gamma \supset \beta$ is a consequence of the first point of Lemma \ref{lem:upper_lower_bound}. Moreover, for $\gamma \supset \beta$ Lemma \ref{lem:upper_lower_bound} then ensures that the probability $\P\big( |\boldX| / t \leq 1+\epsilon, \, \boldX / |\boldX| \in A_\boldx,\, \pi(\boldX/t) \in C_\gamma \mid |\boldX| > t \big)$ is upper-bounded by
\begin{equation}\label{eq:upper_bound}
\P\big( |\boldX| / t \leq 1+\epsilon, \, \pi(\boldX/t) \in A_{\boldx - \epsilon / |\gamma|} \cap C_\gamma \mid |\boldX| > t \big)\, ,
\end{equation}
assuming that $\epsilon > 0$ is small enough so that the vector $\boldx - \epsilon / |\gamma|$ has positive coordinates, and lower-bounded by
\begin{equation}\label{eq:lower_bound}
\P\big( |\boldX| / t \leq 1+\epsilon, \, \pi(\boldX/t) \in A_{\boldx (1+\epsilon)} \cap C_\gamma \mid |\boldX| > t \big)\, .
\end{equation}
In order to deal simultaneously with both probabilities in \eqref{eq:upper_bound} and \eqref{eq:lower_bound} we write
\begin{equation}\label{eq:upper_lower_bound}
\P\big( |\boldX| / t \leq 1+\epsilon, \, \pi(\boldX/t) \in A_{\psi_\epsilon(\boldx)} \cap C_\gamma \mid |\boldX| > t \big)\, ,
\end{equation}
where $\psi_\epsilon(\boldx)$ is either equal to $\boldx - \epsilon / |\gamma|$ or to $\boldx (1+\epsilon)$.

The proof is then divided into two steps. The first step consists in proving that these two probabilities \eqref{eq:upper_bound} and \eqref{eq:lower_bound} converge when $t\to \infty$ for $\lambda_\beta$-almost every $\boldx \in {\cal X}_\beta$. Then, the goal is to prove that after a division by $\epsilon$ these two limits converge to the same quantity when $\epsilon \to 0$. We extract the first step as a separate lemma.

\begin{lem}\label{lem:conv_t_main_theorem}
	For all $\beta \subset \gamma \in {\cal P}_d^*$, for $\lambda_\beta$-almost every $\boldx \in {\cal X}_\beta$, for almost every $\epsilon > 0$ we have the following convergence:
	\begin{equation}\label{eq:conv_pi_A_x}
	\lim \limits_{t \to \infty}
	\P\big( \pi(\boldX/t) \in A_{\psi_\epsilon(\boldx)} \cap C_\gamma, \, |\boldX| / t > 1+\epsilon \mid |\boldX| > t \big)
	= \P( \boldZ \in A_{\psi_\epsilon(\boldx)}  \cap C_\gamma, \, Y > 1+\epsilon) \, .
	\end{equation}
\end{lem}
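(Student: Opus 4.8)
The plan is to read the left-hand side of \eqref{eq:conv_pi_A_x} as the mass assigned by the conditional law of $(|\boldX|/t,\pi(\boldX/t))$ given $|\boldX|>t$ to the set $(1+\epsilon,\infty)\times\big(A_{\psi_\epsilon(\boldx)}\cap C_\gamma\big)$, and to obtain its convergence from Definition \ref{def:sparse_reg_var} by a continuous-mapping argument. The obstruction to doing this directly is that, when $\gamma\neq\{1,\dots,d\}$, the cone $C_\gamma$ has empty interior in $\mathbb S^{d-1}_+$, so $A_{\psi_\epsilon(\boldx)}\cap C_\gamma$ is not a continuity set of $\boldZ$ and a bare application of the Portmanteau theorem on $\mathbb S^{d-1}_+$ is useless; the argument has to be lifted to $\R^d_+$, keeping the full vector $\boldX/t$.

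Accordingly, I would first observe that because $|\boldX|/t$ given $|\boldX|>t$ converges in law to $Y$ (the radial marginal of \eqref{eq:sparse_reg_var}), the family $\mu_t:={\cal L}\big(\boldX/t\mid|\boldX|>t\big)$ is tight on $\R^d_+$. By Prokhorov's theorem every subsequence of $(\mu_t)_t$ admits a further subsequence $\mu_{t_n}\overset{w}{\to}\mu={\cal L}(W)$ for some $W\in\R^d_+$, and applying the continuous mapping theorem to $\boldv\mapsto(|\boldv|,\pi(\boldv))$ (the map $\pi$ is continuous by property P3) together with \eqref{eq:sparse_reg_var} gives $(|W|,\pi(W))\overset{d}{=}(Y,\boldZ)$. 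Writing $G_{\epsilon,\boldx}:=\{\boldv\in\R^d_+:|\boldv|>1+\epsilon,\ \pi(\boldv)\in A_{\psi_\epsilon(\boldx)}\cap C_\gamma\}$, if $G_{\epsilon,\boldx}$ is a $\mu$-continuity set then
\[
\mu\big(G_{\epsilon,\boldx}\big)=\P\big(|W|>1+\epsilon,\ \pi(W)\in A_{\psi_\epsilon(\boldx)}\cap C_\gamma\big)=\P\big(Y>1+\epsilon,\ \boldZ\in A_{\psi_\epsilon(\boldx)}\cap C_\gamma\big),
\]
a value that does not depend on the chosen subsequence. Since $\P\big(\pi(\boldX/t)\in A_{\psi_\epsilon(\boldx)}\cap C_\gamma,\,|\boldX|/t>1+\epsilon\mid|\boldX|>t\big)=\mu_t(G_{\epsilon,\boldx})$, a real sequence all of whose cluster points equal this common value must converge to it, which is \eqref{eq:conv_pi_A_x}.

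The remaining task — and the genuine difficulty — is to verify that $\mu(\partial G_{\epsilon,\boldx})=0$ for every subsequential limit $\mu$, for $\lambda_\beta$-almost every $\boldx\in{\cal X}_\beta$ and almost every $\epsilon>0$. Using the continuity of $\pi$ and its explicit form (the characterisation \eqref{eq:rho_coord_positive} and Lemma \ref{lem:proj_beta_betac}, in particular $\pi(\boldv)\in C_\gamma$ iff $\max_{i\in\gamma}v_{\gamma,i}<1$ and $\min_{i\in\gamma^c}v_{\gamma,i}\geq1$, with $\pi(\boldv)_j=v_j-(|\boldv_\gamma|-1)/|\gamma|$ on that event), the set $G_{\epsilon,\boldx}$ is cut out by finitely many inequalities, so $\partial G_{\epsilon,\boldx}$ is contained in the union of the surface $\{|\boldv|=1+\epsilon\}$, the surfaces $\{\pi(\boldv)_j=\psi_\epsilon(\boldx)_j\}$ for $j\in\beta$, and the ``transition'' surfaces $\{v_{\gamma,i}=1\}$, $i=1,\dots,d$, on which the active set of $\pi$ changes. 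Through $(|W|,\pi(W))\overset{d}{=}(Y,\boldZ)$ one then has $\P(|W|=1+\epsilon)=\P(Y=1+\epsilon)=0$ because $Y$ is Pareto, hence atomless; moreover the surfaces $\{\pi(\boldv)_j=c\}$, $c\in(0,1]$, are pairwise disjoint, so $\P(Z_j=c)>0$ for at most countably many $c$, and since $\psi_\epsilon(\boldx)_j$ depends affinely and injectively on $x_j$ (and on $\epsilon$), the excluded parameter values form a null set and these surfaces are $\mu$-null for $\lambda_\beta$-a.e.\ $\boldx$ and a.e.\ $\epsilon$. The hard part — which I expect to be the main obstacle — is the transition surfaces: to show that no subsequential limit $W$ charges $\{v_{\gamma,i}=1\}$, the idea is to exploit that $\boldX/t$, and hence any limit $W$, inherits from the regular variation of $|\boldX|$ and from the iteration identity of Lemma \ref{lem:iteration_projection} a scaling relation — the law of $s^{-1}W$ conditioned on $|W|>s$ is again a subsequential limit of the $\mu_t$'s (obtained along $t_n s$), hence again projects onto $\boldZ$ with atomless radius $Y$ — which prevents mass from accumulating on the codimension-one loci where the projection switches its support. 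Combining these three points yields $\mu(\partial G_{\epsilon,\boldx})=0$, and the argument of the second paragraph then concludes.
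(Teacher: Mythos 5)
There is a genuine gap at the step you yourself flag as the main obstacle. Your compactness strategy reduces everything to showing that every subsequential limit $\mu$ of ${\cal L}(\boldX/t\mid|\boldX|>t)$ puts no mass on the transition surfaces $\{\boldv : v_{\gamma,i}=1\}$ where the active set of $\pi$ switches. But the only information the hypothesis $\boldX\in\SRV(\alpha,\boldZ)$ gives you about $\mu$ is its pushforward under $\boldv\mapsto(|\boldv|,\pi(\boldv))$, namely the law of $(Y,\boldZ)$, and this map is not injective (in $\R^2_+$ the points $(2,0)$ and $(1.5,0.5)$ have the same $\ell^1$-norm and the same projection $(1,0)$), so the transition surfaces are not preimages of sets in the $(Y,\boldZ)$-space and their $\mu$-measure is simply not determined by the data you have. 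The ``scaling relation'' you invoke does not repair this: conditioning $s^{-1}W$ on $|W|>s$ again only controls pushforwards under $(|\cdot|,\pi(\cdot))$, i.e.\ one-dimensional marginals of the family $(\pi_s(W))_s$, which is not enough to rule out an atom of $\mu$ on a codimension-one locus. Since you cannot establish that $G_{\epsilon,\boldx}$ is a $\mu$-continuity set, the Portmanteau/subsequence argument of your second paragraph does not close, and different subsequential limits could a priori assign different masses to $G_{\epsilon,\boldx}$.

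The paper's proof sidesteps this entirely by never needing the full law of $\boldX/t$: it uses Proposition \ref{prop:dependence_Z_Y} to rewrite the right-hand side as $\P(\pi((1+\epsilon)\boldZ)\in A_{\psi_\epsilon(\boldx)}\cap C_\gamma)(1+\epsilon)^{-\alpha}$, then Lemma \ref{lem:iteration_projection} together with $\pi_z(\boldv)=z\pi(\boldv/z)$ to turn the left-hand side into a probability involving $\pi\big((1+\epsilon)\pi(\boldX/t)\big)$, i.e.\ a function of $\pi(\boldX/t)$ alone. The equivalence \eqref{eq:equiv_pi_A_x_beta} of Lemma \ref{lem:proj_beta_betac} then rewrites the target event as finitely many inequalities on the continuous statistic $\phi_\gamma(\pi(\boldX/t))$ with thresholds $\epsilon/[(1+\epsilon)|\gamma|]$ and $\epsilon/[(1+\epsilon)(|\gamma|+1)]$ that are \emph{strictly positive} because $\epsilon>0$ --- this is precisely where the restriction $|\boldX|/t>1+\epsilon$ does its work, pushing the degenerate cone event off the topological boundary. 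The event thereby becomes a continuity set of the law of $\boldZ$ itself for all $\epsilon$ and $\boldx$ outside countable (hence negligible) exceptional sets, and the marginal weak convergence $\pi(\boldX/t)\Rightarrow\boldZ$ from Definition \ref{def:sparse_reg_var} suffices to conclude. If you want to keep your route, you would need to supply the missing argument on the transition surfaces; as it stands, the paper's algebraic reduction is doing exactly the work your sketch leaves undone.
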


\begin{proof}
We use Proposition \ref{prop:dependence_Z_Y} which entails that the right-hand side in \eqref{eq:conv_pi_A_x} is equal to
\begin{align*}
\P( \boldZ \in A_{\psi_\epsilon(\boldx)}  \cap C_\gamma, \, Y > 1+\epsilon)
&= \P( \boldZ \in A_{\psi_\epsilon(\boldx)}  \cap C_\gamma \mid Y > 1+\epsilon) \P(Y > 1+\epsilon)\\
&= \P( \pi((1+\epsilon) \boldZ) \in A_{\psi_\epsilon(\boldx)}  \cap C_\gamma) (1+\epsilon)^{-\alpha}\, .
\end{align*}
Regarding the left-hand side in \eqref{eq:conv_pi_A_x}, it can be rewritten as follows:
\begin{align*}
\P\big( \pi(\boldX/t) \in A_{\psi_\epsilon(\boldx)}  \cap C_\gamma, \, |\boldX| / t >  1&+\epsilon \mid |\boldX| > t \big)\\
&= \P\big( \pi(\boldX/t) \in A_{\psi_\epsilon(\boldx)}  \cap C_\gamma \mid  |\boldX| > t (1+\epsilon) \big) \frac{\P(|\boldX| > t (1+\epsilon) )}{\P(|\boldX| > t)}\, .
\end{align*}
The ratio $\P(|\boldX| > t (1+\epsilon)) / \P(|\boldX| > t)$ converges to $ (1+\epsilon)^{-\alpha}$ when $t \to \infty$ since $|\boldX|$ is regularly varying with tail index $\alpha$. Besides the probability $\P\big( \pi(\boldX/t) \in A_{\psi_\epsilon(\boldx)}  \cap C_\gamma \mid  |\boldX| > t  (1+\epsilon) \big)$ converges when $t \to \infty$ if and only if $\P\big( \pi( (1+\epsilon) \boldX/t) \in A_{\psi_\epsilon(\boldx)} \cap C_\gamma \mid  |\boldX| > t \big)$ converges when $t \to \infty$, and then both probabilities have the same limit. We use Lemma \ref{lem:iteration_projection} and the relation $\pi_z(\boldv) = z \pi(\boldv/z)$ for $\boldv \in \R^d_+$ and $z > 0$ which entail that
\[
\pi( (1+\epsilon) \boldX/t)
=  (1+\epsilon) \pi_{1/ (1+\epsilon)} (\boldX/t)
=  (1+\epsilon) \pi_{1/ (1+\epsilon)} ( \pi(\boldX/t) )
= \pi (  (1+\epsilon)\pi(\boldX/t) )\, .
\]
Hence the convergence in Equation \eqref{eq:conv_pi_A_x} holds if and only if
\begin{equation}\label{eq:conv_pi_A_x_temp}
\P\big( \pi (  (1+\epsilon) \pi(\boldX/t) ) \in A_{\psi_\epsilon(\boldx)}  \cap C_\gamma \mid |\boldX| > t \big)
\to \P( \pi( (1+\epsilon)\boldZ) \in A_{\psi_\epsilon(\boldx)} \cap C_\gamma)\, ,
\quad t \to \infty \, .
\end{equation}
The equivalence \eqref{eq:equiv_pi_A_x_beta} then entails that the former convergence holds if and only if the probability
\begin{align*}
\P\bigg(
\phi_\gamma(\pi(\boldX/t))_\beta \geq \frac{\psi_\epsilon(\boldx)}{1+\epsilon} + \frac{\epsilon}{(1+\epsilon) |\gamma|}, \,
\min_{i \in \gamma \setminus \beta} \phi_\gamma(\pi(\boldX\bigg.&/t))_i > \frac{\epsilon}{(1+\epsilon) |\gamma|}, \, \\
&\left.\max_{j \in \gamma^c} \phi_\gamma(\pi(\boldX/t))_j \leq \frac{\epsilon}{(1+\epsilon) (|\gamma|+1)}
\; \middle| \; |\boldX| > t
\right)
\end{align*}
converges to
\[
\P\bigg(
\phi_\gamma(\boldZ)_\beta \geq \frac{\psi_\epsilon(\boldx)}{1+\epsilon} + \frac{\epsilon}{(1+\epsilon)|\gamma|}, \,
\min_{i \in \gamma \setminus \beta} \phi_\gamma(\boldZ)_i > \frac{\epsilon}{(1+\epsilon) |\gamma|}, \, 
\max_{j \in \gamma^c} \phi_\gamma(\boldZ)_j \leq \frac{\epsilon}{(1+\epsilon) (|\gamma|+1)}
\bigg)\, ,
\]
when $t \to \infty$.

For $\gamma \supset \beta$ the set of all $\epsilon > 0$ for which there exists $\P( \min_{i \in \gamma \setminus \beta} \phi_\gamma(\boldZ)_i = \epsilon / [(1+\epsilon) |\gamma|]) > 0$ or for which $\P( \max_{j \in \gamma^c} \phi_\gamma(\boldZ)_j = \epsilon / [(1+\epsilon) (|\gamma|+1)]) > 0$ is at most countable so that for almost every $\epsilon > 0$ we have
\[
\P\Big(
\min_{j \in \gamma \setminus \beta} \phi_\gamma(\boldZ)_j = \frac{\epsilon}{(1+\epsilon) |\gamma|}
\Big)
=
\P\Big(
\max_{i \in \gamma^c} \phi_\gamma(\boldZ)_i = \frac{\epsilon}{(1+\epsilon) (|\gamma|+1)}
\Big)
=
0\, ,  \text{ for all } \gamma \supset \beta\, .
\]
Let us denote by ${\cal E}_\beta$ the set of all these $\epsilon >0$. The same argument implies that for $\lambda_\beta$-almost every $\boldx \in {\cal X}_\beta$ we have
\[
\P\Big(
\phi_\gamma(\boldZ)_j = \frac{\psi_\epsilon(\boldx)_j}{1+\epsilon} + \frac{\epsilon}{(1+\epsilon)|\gamma|} \Big) = 0, \,
\quad \text{for all } j \in \beta\, , \quad \gamma \supset \beta, \,  \text{ and } \epsilon \in {\cal E}_\beta\, .
\]
Let us denote by $\bar{\cal X}_\beta$ the set of all these $\boldx$. We have proved that for all $\epsilon \in {\cal E}_\beta$ and $\boldx \in \bar{\cal X}_\beta$ we have the convergence
\[
\lim \limits_{t \to \infty}
\P\big( \pi(\boldX/t) \in A_{\psi_\epsilon(\boldx)} \cap C_\gamma, \, |\boldX| / t > 1+\epsilon \mid |\boldX| > t \big)
= \P( \boldZ \in A_{\psi_\epsilon(\boldx)}  \cap C_\gamma, \, Y > 1+\epsilon) \, .
\]
This concludes the proof of the Lemma \ref{lem:conv_t_main_theorem}.
\end{proof}

\paragraph{End of the proof of Theorem \ref{theo:relation_G_Theta}}
We use the same notation $\bar {\cal X}_\beta$ and ${\cal E}_\beta$ as in the proof of Lemma \ref{lem:conv_t_main_theorem}. We start with the following decomposition:
\begin{align*}
\P\big( \pi ( \boldX/t ) \in A_{\psi_\epsilon(\boldx)}  \cap C_\gamma, \, |\boldX|/t \leq 1+\epsilon \mid  |&\boldX| > t \big)
=
\P\big( \pi ( \boldX/t ) \in A_{\psi_\epsilon(\boldx)}  \cap C_\gamma, \, |\boldX|/t \leq 1+\epsilon^2 \mid |\boldX| > t \big)\\
&+
\P\big( \pi ( \boldX/t ) \in A_{\psi_\epsilon(\boldx)}  \cap C_\gamma, \, |\boldX|/t  \in (1+\epsilon^2, 1+\epsilon] \mid |\boldX| > t \big)
\, .
\end{align*}
The introduction of the  term with $\epsilon^2$ on the right-hand side follows from the fact that the probability $\P\big( \pi ( \boldX/t ) \in A_{\psi_\epsilon(\boldx)}  \cap C_\gamma \mid |\boldX| > t \big)$ may not converge. The latter probability is bounded by $\P\big( |\boldX|/t \leq 1+\epsilon^2 \mid |\boldX| > t \big) \to 1 - (1+\epsilon^2)^{-\alpha} = o(\epsilon)$. Moreover, Lemma \ref{lem:conv_t_main_theorem} ensures that
\[
\P\big( \pi ( \boldX/t ) \in A_{\psi_\epsilon(\boldx)}  \cap C_\gamma, \, |\boldX|/t \in (1+\epsilon^2, 1+\epsilon] \mid |\boldX| > t \big)
\to
\P\big( \boldZ \in A_{\psi_\epsilon(\boldx)}  \cap C_\gamma, \, Y \in (1+\epsilon^2, 1+\epsilon] \big)\, ,
\]
when $t \to \infty$ for $\epsilon, \epsilon^2 \in {\cal E}_\beta$ and $\boldx \in \bar{\cal X}_\beta$. Hence we obtain
\begin{align*}
\lim_{t \to \infty} \P\big( \pi ( \boldX/t ) \in A_{\psi_\epsilon(\boldx)}  \cap C_\gamma&, \, |\boldX|/t \leq 1+\epsilon \mid  |\boldX| > t \big)\\
&=
\lim_{t \to \infty}
\P\big( \pi ( \boldX/t ) \in A_{\psi_\epsilon(\boldx)}  \cap C_\gamma, \, |\boldX|/t  \in (1+\epsilon^2, 1+\epsilon] \mid |\boldX| > t \big) + o(\epsilon)\\
&=
\P\big( \boldZ \in A_{\psi_\epsilon(\boldx)}  \cap C_\gamma, \, Y  \in (1+\epsilon^2, 1+\epsilon] \big) + o(\epsilon)\\
&=
\P\big( \boldZ \in A_{\psi_\epsilon(\boldx)}  \cap C_\gamma, \, Y  \leq 1+\epsilon \big) + o(\epsilon)
\, .
\end{align*}
Then, Proposition \ref{prop:dependence_Z_Y} implies that
\begin{align}
\P\big( \boldZ_\beta \in A_{\psi_\epsilon(\boldx)} \cap C_\gamma&, \, Y \leq 1+\epsilon \big) \nonumber\\
&= \P\big( \boldZ \in A_{\psi_\epsilon(\boldx)} \cap C_\gamma\big)
- \P\big( \boldZ\in A_{\psi_\epsilon(\boldx)} \cap C_\gamma \mid Y > 1+\epsilon \big) (1+\epsilon)^{-\alpha} \nonumber\\
&= \P\big( \boldZ \in A_{\psi_\epsilon(\boldx)} \cap C_\gamma \big)
- \P\big( \pi((1+\epsilon)\boldZ) \in A_{\psi_\epsilon(\boldx)} \cap C_\gamma \big) (1+\epsilon)^{-\alpha} \nonumber\\
&= \big[1 - (1+\epsilon)^{-\alpha}\big]
\P\big( \boldZ \in A_{\psi_\epsilon(\boldx)} \cap C_\gamma \big) \label{eq:proba_proof_first_term}\\
& \hspace{1cm} + (1+\epsilon)^{-\alpha} \Big[
\P\big( \boldZ \in A_{\psi_\epsilon(\boldx)} \cap C_\gamma \big)
-
\P\big( \pi((1+\epsilon)\boldZ)\in A_{\psi_\epsilon(\boldx)} \cap C_\gamma \big) \Big]\label{eq:proba_proof_second_term}\, .
\end{align}
The sets $A_{\psi_\epsilon(\boldx)}$ decrease when $\epsilon \to 0$ and satisfy $\cap_{\epsilon >0} A_{\psi_\epsilon(\boldx)} = A_\boldx$. Hence the term in \eqref{eq:proba_proof_first_term} divided by $\epsilon$ converges to $\alpha \P( \boldZ \in A_\boldx \cap C_\gamma)$. For the term in \eqref{eq:proba_proof_second_term} we use \eqref{eq:equiv_pi_A_x_beta} which states that the event $\{ \pi((1+\epsilon)\boldZ) \in A_{\psi_\epsilon(\boldx)} \cap C_\gamma \}$ corresponds to
\[
\left\{
\begin{array}{lll}
&\phi_\gamma(\boldZ)_j \geq \frac{\psi_\epsilon(\boldx)_j}{(1+\epsilon)} + \frac{\epsilon}{|\gamma| (1+\epsilon)}\, , \quad j \in \beta \, ,\\
&\min_{j \in \gamma \setminus \beta}  \phi_\gamma(\boldZ)_j > \frac{\epsilon}{|\gamma|(1+\epsilon)}\\
&\max_{j \in \gamma^c} \phi_\gamma(\boldZ)_j  \leq \frac{\epsilon}{(|\gamma|+1)(1+\epsilon)}\, .
\end{array}
\right.
\]
Hence the difference of probabilities in \eqref{eq:proba_proof_second_term} corresponds to the difference
\[
H_{\beta, \gamma} \big( \psi_\epsilon(\boldx), 0, 0 \big)
-
H_{\beta, \gamma} \Big(
\frac{\psi_\epsilon(\boldx)}{(1+\epsilon)} + \frac{\epsilon}{|\gamma| (1+\epsilon)},
\frac{\epsilon}{|\gamma|(1+\epsilon)},
\frac{\epsilon}{(|\gamma|+1)(1+\epsilon)}
\Big)\, .
\]
After a division by $\epsilon$ this difference converges to
\[
\mathrm d H_{\beta, \gamma} (\boldx_\beta, 0, 0)
\cdot
(\boldx_\beta -1/|\gamma|, -1/|\gamma| , -1/(|\gamma|+1))\, ,
\quad \epsilon \to 0\, ,
\]
for $\psi_\epsilon(\boldx) = \boldx/(1+\epsilon)$ and $\psi_\epsilon(\boldx) = \boldx - \epsilon/|\gamma|$.

All in all, we proved that for $\lambda_\beta$-almost every $\boldx \in {\cal X}_\beta$ both limits
\[
\lim_{\epsilon \to 0}
\liminf_{t \to \infty}
\epsilon^{-1} \P\big( |\boldX| / t \leq 1+\epsilon, \, \boldX / |\boldX| \in A_\boldx \mid |\boldX| > t \big)
\]
and
\[
\lim_{\epsilon \to 0}
\limsup_{t \to \infty}
\epsilon^{-1} \P\big( |\boldX| / t \leq 1+\epsilon, \, \boldX / |\boldX| \in A_\boldx \mid |\boldX| > t \big)
\]
exist and are equal to
\begin{align*}
l(A_\boldx)
&=
\sum_{\gamma \supset \beta}
\alpha \P(\boldZ \in  A_\boldx  \cap C_\gamma)
+ \sum_{\gamma \supset \beta} \mathrm d H_{\beta, \gamma} (\boldx_\beta, 0, 0) \cdot \Big(\boldx_\beta - \frac{1}{|\gamma|}, -\frac{1}{|\gamma|}, -\frac{1}{|\gamma|+1} \Big)\\
&= \alpha \P(\boldZ \in  A_\boldx)
+ \sum_{\gamma \supset \beta} \mathrm d H_{\beta, \gamma} (\boldx_\beta, 0, 0) \cdot \Big(\boldx_\beta - \frac{1}{|\gamma|}, -\frac{1}{|\gamma|}, -\frac{1}{|\gamma|+1} \Big)\, ,
\end{align*}
Then assumption (A) ensures that for $\lambda_\beta$-almost every $\boldx \in {\cal X}_\beta$ the function $\boldx \mapsto l(A_\boldx)$ is continuous at $\boldx$ and Lemma \ref{lem:rv_equivalent_condition} allows us to conclude.

\subsection{Proof of Proposition \ref{prop:Z_on_specific_subsets}}
Recall that we have defined the quantities $\theta_{\beta, \, i} = \sum_{j \in \beta} (\theta_j - \theta_i)$ and $\theta_{\beta, \, i, \, +} = \sum_{j \in \beta} (\theta_j - \theta_i)_+$ for $\boldtheta \in \mathbb S^{d-1}_+$, $\beta \in {\cal P}_d^*$, and $1 \leq i \leq d$.\\

\noindent 1. We only prove the convergence \eqref{eq:cv_projection_Z_on_C_beta} (the proof of \eqref{eq:cv_projection_Z_on_beta} is similar). For $\beta \in {\cal P}_d^*$, Lemma \ref{lem:proj_beta_betac} ensures that $\pi(Y \boldTheta)\in C_\beta$ if and only if $\max_{i \in \beta} \Theta_{\beta, \, i}  < 1/Y$ and $\min_{i \in \beta^c} \Theta_{\beta, \, i} \geq 1/Y$. Hence \eqref{eq:cv_projection_Z_on_C_beta} is equivalent to
\begin{equation} \label{eq:conv_Y_Theta_D_beta}
\P\big( (|\boldX|/t, \boldX/|\boldX|) \in D_\beta \mid |\boldX|>t \big) \to \P((Y,\boldTheta) \in D_\beta)\, ,
\end{equation}
with
\[
D_\beta = \big\{ (r, \theta) \in (1,\infty) \times \mathbb S^{d-1}_+ : \theta_{\beta, \, i} < 1/r \text{ for } i \in \beta, \text{ and } \theta_{\beta, \, i} \geq 1/r \text{ for } i \in \beta^c \big\}\, ,
\]
and this convergence holds if $\P((Y,\boldTheta) \in \partial D_\beta) = 0$. The boundary $\partial D_\beta$ satisfies the inequality
\begin{align*}
\P((Y,\boldTheta) \in \partial D_\beta)
\leq \sum_{i = 1}^d \P ( \Theta_{\beta, \, i} = Y^{-1} )\, ,
\end{align*}
and all the terms of the sum are null since $Y$ is a continuous random variable independent of $\boldTheta$. Thus $\P((Y,\boldTheta) \in \partial D_\beta) = 0$, which implies that convergence \eqref{eq:conv_Y_Theta_D_beta} holds and then that convergence \eqref{eq:cv_projection_Z_on_C_beta} holds as well.\\

\noindent 2. Following Lemma \ref{lem:proj_beta_betac}, the probability that $\boldZ$ belongs to $C_\beta$ is equal to
\begin{align*}
\P(\boldZ \in C_\beta)
&= \P \Big( \max_{j \in \beta} \sum_{k \in \beta} (Y \Theta_k - Y \Theta_j) < 1, \, \min_{j \in \beta^c} \sum_{k\in \beta} (Y \Theta_k - Y \Theta_j) \geq 1 \Big)\\
&= \P \Big( \max_{j \in \beta} \, \Theta_{\beta, \, j, \, +}^\alpha < Y^{-\alpha}, \, \min_{j \in \beta^c} \, \Theta_{\beta, \, j, \, +}^\alpha \geq Y^{-\alpha} \Big)\\
&= \int_0^1 \P \Big( \max_{j \in \beta} \Theta_{\beta, \, j, \, +}^\alpha < u \leq \min_{j \in \beta^c} \Theta_{\beta, \, j, \, +}^\alpha \Big) \; \mathrm d u\\
&= \E \Big[ \Big( \min_{j \in \beta^c} \Theta_{\beta, \, j, \, +}^\alpha - \max_{j \in \beta} \Theta_{\beta, \, j, \, +}^\alpha \Big)_+ \Big]\, .
\end{align*}

Similarly, Equation \eqref{eq:Z_beta_null_components} is also a consequence of Lemma \ref{lem:proj_beta_betac} since we have the relations
\begin{align}
\P(\boldZ_{\beta^c} = 0)
&= \P\Big( 1 \leq \min_{j \in \beta^c} \sum_{k=1}^d (Y\Theta_k - Y\Theta_j)_+\Big)
= \P\big(Y^{-\alpha} \leq \min_{j \in \beta^c} \Theta_{\beta, \, j, \, +}^\alpha \big) \label{eq:Z_beta_c_zero}\\
&= \int_0^1 \P\big(u \leq \min_{j \in \beta^c} \Theta_{\beta, \, j, \, +}^\alpha \big) \, \mathrm d u
= \E \big[ \min_{j \in \beta^c} \Theta_{\beta, \, j, \, +}^\alpha \big] \, . \nonumber
\end{align}
This proves \eqref{eq:Z_beta_null_components} and concludes the proof of the proposition.

\subsection{Proof of Theorem \ref{theo:comparison_Theta_Z_maximal_subsets}}

We first state an inequality which will be used to prove both results of Theorem \ref{theo:comparison_Theta_Z_maximal_subsets}.
	
\begin{lem}\label{lem:condition_subsets_C_beta}
	For $\beta \in {\cal P}_d^*$ we have the inequality
	\begin{equation} \label{eq:inequality_on_C_beta}
	\P(\boldTheta \in C_\beta) \leq \P \big( \max_{j \in \beta} \Theta_{\beta, \, j, \, +} < 1 \big) \, .
	\end{equation}
\end{lem}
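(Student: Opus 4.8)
The plan is to prove the stronger, pathwise statement that $\{\boldTheta \in C_\beta\} \subseteq \{\max_{j \in \beta}\Theta_{\beta,\,j,\,+} < 1\}$, from which the inequality \eqref{eq:inequality_on_C_beta} follows at once by monotonicity of the probability measure.

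First I would spell out what membership in $C_\beta$ entails for the spectral vector. Since $\boldTheta$ takes values in the simplex $\mathbb S^{d-1}_+$, on the event $\{\boldTheta \in C_\beta\}$ one has $\Theta_k = 0$ for all $k \in \beta^c$, $\Theta_k > 0$ for all $k \in \beta$, and hence $\sum_{k \in \beta}\Theta_k = |\boldTheta| = 1$.

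Next, fix $j \in \beta$ and estimate $\Theta_{\beta,\,j,\,+} = \sum_{k \in \beta}(\Theta_k - \Theta_j)_+$. Only indices $k \in \beta$ with $\Theta_k > \Theta_j$ contribute to this sum, each such term being bounded above by $\Theta_k$; and $j$ itself is not one of these indices. Therefore
\[
\Theta_{\beta,\,j,\,+} \;\le\; \sum_{k \in \beta,\ \Theta_k > \Theta_j} \Theta_k \;\le\; \sum_{k \in \beta \setminus \{j\}} \Theta_k \;=\; 1 - \Theta_j \;<\; 1,
\]
where the final strict inequality is exactly the point where the definition of $C_\beta$ enters, namely $\Theta_j > 0$ because $j \in \beta$. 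Taking the maximum over $j \in \beta$ yields the claimed inclusion and hence the lemma.

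The argument is short and I do not anticipate a genuine obstacle; the only subtlety worth flagging is the strictness. If one merely knew $\boldTheta_{\beta^c} = \boldo$ (without strict positivity of $\boldTheta$ on $\beta$) the same computation would only give $\Theta_{\beta,\,j,\,+} \le 1$, so the strict inequality is precisely what distinguishes the cones $C_\beta$ from the larger sets $\{\boldx : \boldx_{\beta^c} = \boldo\}$, and it must be used.
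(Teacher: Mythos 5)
Your proof is correct and rests on the same elementary observation as the paper's: on $C_\beta$ the coordinates $\Theta_j$, $j \in \beta$, are strictly positive and sum to $1$, which forces $\max_{j \in \beta}\Theta_{\beta,\,j,\,+} < 1$. The only difference is presentational: you establish the direct event inclusion $\{\boldTheta \in C_\beta\} \subseteq \{\max_{j \in \beta}\Theta_{\beta,\,j,\,+} < 1\}$, whereas the paper argues contrapositively, showing that $\max_{j \in \beta}\Theta_{\beta,\,j,\,+} = 1$ forces some $\Theta_k = 0$ with $k \in \beta$ and hence $\boldTheta \notin C_\beta$.
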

	
\begin{proof}[Proof of Lemma \ref{lem:condition_subsets_C_beta}]
	The relation in \eqref{eq:inequality_on_C_beta} is equivalent to
	\[
	\P \big( \max_{j \in \beta} \Theta_{\beta, \, j, \, +} = 1 \big) \leq \P(\boldTheta \notin C_\beta)\, .
	\]
	The probability on the left-hand side can be rewritten as
	\[
	\P \big( \max_{j \in \beta} \Theta_{\beta, \, j, \, +} = 1 \big)
	= \P \Big( \sum_{k \in \beta} (\Theta_k - \min_{j \in \beta}  \Theta_j) = 1 \Big)
	= \P \Big( \sum_{k \in \beta} \Theta_k = 1 + |\beta| \min_{j \in \beta}  \Theta_j \Big)\, .
	\]
	Since $\boldTheta \in \mathbb S^{d-1}_+$, the equality $\sum_{k \in \beta} \Theta_k = 1 + |\beta| \min_{j \in \beta}  \Theta_j$ holds only if there exists $k \in \beta$ such that $\Theta_k = 0$. Thus we obtain the inequality
	\[
	\P \Big( \max_{j \in \beta} \Theta_{\beta, \, j, \, +} = 1 \Big)
	\leq \P( \Theta_k = 0 \text{ for some } k \in \beta)
	\leq \P( \boldTheta \notin C_\beta )\, ,
	\]
	which concludes the proof.
\end{proof}
	
We now move on to the proof of Theorem \ref{theo:comparison_Theta_Z_maximal_subsets}. The proof of these results relies on Equation \eqref{eq:Z_C_beta} and Lemma \ref{lem:condition_subsets_C_beta} above.

\noindent 1. If $\beta \in {\cal P}_d^*$ is such that $\P(\boldTheta \in C_\beta) > 0$, then Equation \eqref{eq:Z_C_beta} entails that
\begin{align}
\P( \boldZ \in C_\beta )
& \geq \E \Big[ \Big(
\min_{j \in \beta^c} \Theta_{\beta, \, j, \, +}^\alpha - \max_{j \in \beta} \Theta_{\beta, \, j, \, +}^\alpha
\Big)_+ \indic_{ \boldTheta \in C_\beta }\Big] \nonumber\\
& = \E \Big[ \Big(
1 - \max_{j \in \beta} \Theta_{\beta, \, j, \, +}^\alpha
\Big) \indic_{ \boldTheta \in C_\beta } \Big] \nonumber\\
& = \E \Big[ \Big(
1 - \max_{j \in \beta} \Theta_{\beta, \, j, \, +}^\alpha
\Big) \mid \boldTheta \in C_\beta\Big] \P(\boldTheta \in C_\beta)\, . \label{eq:proof_relation_Theta_Z}
\end{align}
The expectation is positive by Lemma \ref{lem:condition_subsets_C_beta} and the probability $\P(\boldTheta \in C_\beta)$ is positive by assumption. This shows that $\P(\boldZ \in C_\beta) > 0$.\\

\noindent 2. We separately prove both implications.

- We first consider a maximal direction $\beta$ of $\boldTheta$. The first point of the theorem ensures that $\P(\boldZ \in C_\beta) > 0$. Besides, if $\beta' \supsetneq \beta$, then Equation \eqref{eq:inequality_beta} gives
\[
\P(\boldZ \in C_{\beta'}) \leq \P(\boldZ_{\beta'} > 0) \leq \P(\boldTheta_{\beta'} > 0)\, .
\]
and this last probability equals zero since $\beta$ is a maximal direction for $\boldTheta$. This proves that $\beta$ is a maximal direction for $\boldZ$.

- We now consider a maximal direction $\beta$ of $\boldZ$. We claim that for $\beta' \supsetneq \beta$, $\P(\boldTheta \in C_{\beta'}) = 0$. If not, the first point of the theorem entails that $\P(\boldZ \in C_{\beta'}) > 0$ which contradicts the maximality of $\beta$ for $\boldZ$.

Secondly, Equation \eqref{eq:Z_C_beta} gives that
\begin{align} \label{eq:proof_maximality_subsets}
\P(\boldZ \in C_\beta)
&= \E \Big[ \Big(
\min_{j \in \beta^c} \Theta_{\beta, \, j, \, +}^\alpha
- \max_{j \in \beta} \Theta_{\beta, \, j, \, +}^\alpha
\Big)_+ \Big] \nonumber\\
&= \E \Big[ \Big(
\min_{j \in \beta^c} \Theta_{\beta, \, j, \, +}^\alpha
- \max_{j \in \beta} \Theta_{\beta, \, j, \, +}^\alpha
\Big)_+ \indic_{\boldTheta \in C_\beta} \Big]
+ \E \Big[ \Big(
\min_{j \in \beta^c} \Theta_{\beta, \, j, \, +}^\alpha
- \max_{j \in \beta} \Theta_{\beta, \, j, \, +}^\alpha
\big)_+ \indic_{\boldTheta \notin C_\beta} \Big] \nonumber\\
&= E_1 + E_2\, .
\end{align}
The first term $E_1$ has already been calculated in \eqref{eq:proof_relation_Theta_Z}. It is equal to
\[
E_1 = \E \Big[
1 - \max_{j \in \beta} \Theta_{\beta, \, j, \, +}^\alpha \mid \boldTheta \in C_\beta
\Big] \P( \boldTheta \in C_\beta)\, .
\]
For the second term $E_2$, the assumption $\boldTheta \notin C_\beta$ implies that there exists $l \in \beta$ such that $\Theta_l = 0$, or that there exists $r \in \beta^c$ such that $\Theta_r >0$. We then decompose $E_2$ into two terms:
\begin{align*}
&\E \Big[ \Big(
\min_{j \in \beta^c} \Theta_{\beta, \, j, \, +}^\alpha - \max_{j \in \beta} \Theta_{\beta, \, j, \, +}^\alpha
\Big)_+ \indic_{\boldTheta \notin C_\beta} \Big] \\
&\leq \E \Big[ \Big(
\min_{j \in \beta^c} \Theta_{\beta, \, j, \, +}^\alpha - \max_{j \in \beta} \Theta_{\beta, \, j, \, +}^\alpha
\Big)_+ \indic_{\exists l \in \beta, \, \Theta_l =0} \Big] \\
&+ \E\Big[ \Big(
\min_{j \in \beta^c} \Theta_{\beta, \, j, \, +}^\alpha - \max_{j \in \beta} \Theta_{\beta, \, j, \, +}^\alpha
\Big)_+ \indic_{ \exists \beta' \supsetneq\beta,\, \boldTheta \in C_{\beta'}} \Big]\, .
\end{align*}
The first expectation is then equal to
\[\E \Big[ \Big(
\min_{j \in \beta^c} \Theta_{\beta, \, j, \, +}^\alpha - \Big(\sum_{k \in \beta} (\Theta_k)_+\Big)^\alpha
\Big)_+ \indic_{\exists l \in \beta, \, \Theta_l =0} \Big]\, ,
\]
and thus vanishes. The second expectation is smaller than $\P(\boldTheta \in C_{\beta'} \text{ for some } \beta' \supsetneq\beta)$ which is equal to zero. Indeed, if $\P(\boldTheta \in C_{\beta'}  \text{ for some } \beta' \supsetneq\beta) > 0$, then by Equation \eqref{eq:Z_C_beta}, we also have $\P(\boldZ \in C_{\beta'}  \text{ for some } \beta' \supsetneq\beta) >0$ which contradicts the maximality of $\beta$ for $\boldZ$. All in all this proves that $E_2 = 0$.
	
Going back to Equation \eqref{eq:proof_maximality_subsets}, we have proved that
\[
\P(\boldZ \in C_\beta)
= E_1
= \E\Big[
1 - \max_{j \in \beta} \Theta_{\beta, \, j, \, +}^\alpha \mid \boldTheta \in C_\beta
\Big] \P ( \boldTheta \in C_\beta )\, .
\]
By Lemma \ref{lem:condition_subsets_C_beta} we know that the expectation is positive. Hence, the assumption $\P(\boldZ \in C_\beta) > 0$ implies that $\P(\boldTheta \in C_\beta) > 0$ which proves that $\beta$ is a maximal direction of $\boldTheta$.

\paragraph{Acknowledgments}
We are grateful to two referees for careful reading of the paper and for useful suggestions. We also would like to thank Johan Segers for insightful comments which led to complete the proof of Theorem \ref{theo:relation_G_Theta}.

\small{
\bibliographystyle{apalike} 
\bibliography{biblio.bib}} 

\end{document}